\documentclass[twoside]{article}

\usepackage[accepted]{aistats2025}


\usepackage{graphicx} 
\usepackage[round]{natbib}

\usepackage{titlesec}
\usepackage{graphicx} 
\usepackage[greek,british]{babel}
\usepackage[usenames,dvipsnames]{xcolor} 
\usepackage[utf8]{inputenc}
\usepackage[T1]{fontenc}
\usepackage{nicefrac}
\usepackage{microtype}     
\usepackage{algorithm}
\usepackage{algorithmic}
\usepackage{amsfonts,amsmath,amssymb,amsthm}
\usepackage{mathtools}
\usepackage{hyperref}
\usepackage[normalem]{ulem}
\usepackage[capitalise,nameinlink]{cleveref}
\usepackage{tikz}
\usepackage{enumitem}

\hypersetup{
    colorlinks = true,
    linkcolor = RedViolet,
    citecolor=NavyBlue
    }

\DeclareMathOperator*{\argmax}{arg\,max}
\DeclareMathOperator*{\argmin}{arg\,min}

\DeclareMathOperator*{\arginf}{arg\,inf}

\newtheorem{theorem}{Theorem}
\newtheorem{lemma}{Lemma}
\newtheorem{corollary}{Corollary}

\newtheorem{proposition}{Proposition}

\newtheorem{definition}{Definition}

\newtheorem{assumption}{Assumption}


\newtheorem{example}{Example}[section]

\makeatletter
\newtheoremstyle{case}
  {} 
  {} 
  {} 
  {} 
  {\bfseries} 
  {.} 
  { } 
  {} 
\makeatother
\theoremstyle{case}
\newtheorem{case}{Case}

\definecolor{salmon}{RGB}{250,128,114}
\definecolor{ballblue}{rgb}{0.13, 0.67, 0.8}

\newcommand{\refalgbyname}[2]{\hyperref[#1]{\texttt{\textbf{#2}}}}
\newcommand{\algoname}{\refalgbyname{alg:TaSFG}{TaS-FG}}

\title{Pure Exploration with Feedback Graphs}
\author{Alessio Russo}
\date{September 2024}

\begin{document}
\twocolumn[

\aistatstitle{Pure Exploration with Feedback Graphs}

\aistatsauthor{ Alessio Russo\textsuperscript{1} \And Yichen Song\textsuperscript{1} \And  Aldo Pacchiano\textsuperscript{1,2}}

\aistatsaddress{ \textsuperscript{1}Boston University\quad \textsuperscript{2} Broad Institute of MIT and Harvard}]
\begin{abstract}
We study the sample complexity of pure exploration in an online learning problem with a feedback graph. This graph dictates the feedback available to the learner, covering scenarios between full-information, pure bandit feedback, and settings with no feedback on the chosen action. While variants of this problem have been investigated for regret minimization, no prior work has addressed the pure exploration setting, which is the focus of our study. We derive an instance-specific lower bound on the sample complexity of learning the best action with fixed confidence, even when the feedback graph is unknown and stochastic, and present unidentifiability results for Bernoulli rewards. Additionally, our findings reveal how the sample complexity scales with key graph-dependent quantities. Lastly, we introduce \algoname{} (Track and Stop for Feedback Graphs), an asymptotically optimal algorithm, and demonstrate its efficiency across different graph configurations.
\end{abstract}

\section{INTRODUCTION}
\begin{figure*}[t]
    \centering

\begin{minipage}[t]{0.19\textwidth}
\centering
\begin{tikzpicture}[
  vertex/.style = {circle, draw, minimum size=0.3cm},
  loop/.style = {looseness=3, in=60, out=120, min distance=7mm},
  every edge/.style = {draw, thick}
]

\node[vertex] (A1) at (0,0) {A};
\node[vertex] (B1) at (1,-0.5) {B};
\node[vertex] (C1) at (0.5,-1.5) {C};
\node[vertex] (D1) at (-0.5,-1.5) {D};
\node[vertex] (E1) at (-1,-0.5) {E};

\foreach \v in {A1,B1,C1,D1,E1}
  \draw[->, loop] (\v) to (\v);
\end{tikzpicture}

    \label{fig:example_graph_bandit}
\end{minipage}
\hfill
\begin{minipage}[t]{0.19\textwidth}
\centering
\vspace{-2.1cm}
\begin{tikzpicture}[
  vertex/.style = {circle, draw, minimum size=0.3cm},
  loop/.style = {looseness=3, in=60, out=120, min distance=7mm},
  every edge/.style = {draw, thick},
]
\node[vertex] (A1) at (0,0) {A};
\node[vertex] (B1) at (1,0) {B};

\foreach \v in {B1}
  \draw[->] (A1) -- (\v);
\draw[->, loop] (A1) to (A1);
\end{tikzpicture}
    \label{fig:example_apple_tasting}
\end{minipage}
\hfill
\begin{minipage}[t]{0.19\textwidth}
\centering
\begin{tikzpicture}[
  vertex/.style = {circle, draw, minimum size=0.3cm},
  loop/.style = {looseness=3, in=60, out=120, min distance=7mm},
  every edge/.style = {draw, thick}
]
\node[vertex] (A1) at (0,0) {A};
\node[vertex] (B1) at (1,-0.5) {B};
\node[vertex] (C1) at (0.5,-1.5) {C};
\node[vertex] (D1) at (-0.5,-1.5) {D};
\node[vertex] (E1) at (-1,-0.5) {E};

\foreach \v in {B1,C1,D1,E1}
  \draw[->] (A1) -- (\v);
\draw[->, loop] (A1) to (A1);
\end{tikzpicture}
    \label{fig:example_graph_2}
\end{minipage}
\hfill
\begin{minipage}[t]{0.19\textwidth}
\centering
\begin{tikzpicture}[
  vertex/.style = {circle, draw, minimum size=0.3cm},
  loop/.style = {looseness=3, in=60, out=120, min distance=7mm},
  every edge/.style = {draw, thick}
]
\node[vertex] (A1) at (0,0) {A};
\node[vertex] (B1) at (1,-0.5) {B};
\node[vertex] (C1) at (0.5,-1.5) {C};
\node[vertex] (D1) at (-0.5,-1.5) {D};
\node[vertex] (E1) at (-1,-0.5) {E};

\draw[->] (A1) to (B1);
\draw[->] (A1) to (E1);

\draw[->] (B1) to (A1);
\draw[->] (B1) to (C1);

\draw[->] (C1) to (B1);
\draw[->] (C1) to (D1);

\draw[->] (D1) to (C1);
\draw[->] (D1) to (E1);

\draw[->] (E1) to (D1);
\draw[->] (E1) to (A1);

\end{tikzpicture}
    \label{fig:example_ring_graph}
\end{minipage}
\hfill
\begin{minipage}[t]{0.19\textwidth}
\centering
\begin{tikzpicture}[
  vertex/.style = {circle, draw, minimum size=0.3cm},
  loop/.style = {looseness=3, in=60, out=120, min distance=7mm},
  every edge/.style = {draw, thick}
]
\node[vertex] (A1) at (0,0) {A};
\node[vertex] (B1) at (1,-0.5) {B};
\node[vertex] (C1) at (0.5,-1.5) {C};
\node[vertex] (D1) at (-0.5,-1.5) {D};
\node[vertex] (E1) at (-1,-0.5) {E};

\foreach \v in {B1,C1,D1,E1}
  \draw[->] (A1) -- (\v);

\foreach \v in {A1,C1,D1,E1}
  \draw[->] (B1) -- (\v);

\foreach \v in {B1,C1,D1,A1}
  \draw[->] (E1) -- (\v);

\foreach \v in {B1,E1,D1,A1}
  \draw[->] (C1) -- (\v);

\foreach \v in {B1,E1,C1,A1}
  \draw[->] (D1) -- (\v);

\end{tikzpicture}
    \label{fig:example_graph_full_feedback}
\end{minipage}

    \caption{Examples of feedback graphs. From left to right: (1) bandit feedback; (2) apple tasting; (3) revealing action; (4) ring; (5) loopless clique.}
    \label{fig:example_graphs}
\end{figure*}
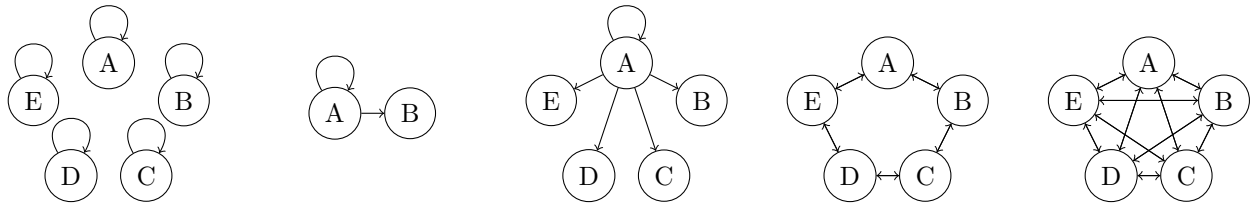
Online learning in a stochastic scenario is a sequential decision making problem in which, at each round, the learner chooses one action (arm) out of a finite set of actions, and observes some feedback depending on the setting \citep{lattimore2020bandit}. In the bandit setting the learner observes  a random reward distributed according to the distribution of the corresponding action \citep{robbins1952some}, while in the full information setting the random rewards of all the arms are observed \citep{littlestone1994weighted}.

Feedback graphs \citep{mannor2011bandits} extend these two settings by defining what type of feedback is available to the learner, effectively bridging scenarios between full-information and pure bandit feedback. The feedback is specified by a directed graph $G$, with the actions as its vertices, and the edges defining what feedback is revealed to the learner when an action is chosen. 

Feedback graphs for online learning have been extensively investigated in the  context of  regret minimization, a framework used to minimize the hindsight loss relative to optimal strategies \citep{lattimore2020bandit}. Under this framework, the problem has been studied in the case where the learner is \emph{self-aware} or not self-aware.

The self-aware scenario is a specific case where  all the vertices have self-loops, and therefore the learner can observe the reward of the chosen action \citep{mannor2011bandits, alon2017nonstochastic,arora2019bandits,lykouris2020feedback,rouyer2022near,marinov2022stochastic,kocak2023online}. On the other hand, there are some problems where the learner is not self-aware, such as in the apple tasting problem \citep{helmbold2000apple}. Another closely-related problem where the learner is not self-aware is the \emph{revealing action} problem \citep{cesa2006prediction}, in which there exists a special action that allows  the learner to observe full feedback, while  other actions have no feedback. 
This more general  setting has also been studied under the regret minimization framework  for both the  stochastic and adversarial regimes \citep{alon2015online,cohen2016online,chen2021understanding,kong2022simultaneously,eldowa2023minimax,zhang2024efficient,zhang2024practical}.

Feedback graphs have also been analyzed under the \emph{informed} and the \emph{uninformed} settings \citep{alon2015online}. In the former case the graph is revealed to the learner prior to each decision \citep{arora2019bandits,marinov2022stochastic,zhang2024practical}, while  in the latter (and harder) case the graph is unknown at decision time \citep{alon2017nonstochastic,zhang2024efficient}.

 Several recent works have considered bandits with stochastic feedback graphs. For instance, \citep{cortes2020online} and \citep{esposito2022learning} study regret minimization under  stochastic feedback graphs. \citep{dai2024can} study how stochastic feedback graphs drive user impacts in online platforms.

Prior work, to the best of our knowledge, has mostly focused  on regret minimization, whereas the pure exploration problem with fixed confidence \citep{paulson1964sequential, bechhofer1968sequential,bubeck2011pure,kaufmann2016complexity}, where the goal is to determine the best action with  a given confidence level $\delta$, has not been thoroughly  investigated in the feedback graph setting.  In \citep{du2021combinatorial} the authors investigate the pure‐exploration problem in combinatorial bandits with partial‐linear feedback. However, while their approach is quite general and can be applied to informed settings, it does not exploit the structure of the graph to drive exploration.  \citet{chen2024interpolating} study deterministic graphs where the arms are partitioned into $T$ different groups, such that the
pull of an arm results in an observation of all the arms in its group. In this setting, they prove a $(\epsilon,0.05)$-PAC lower bound of order $O\left(\sum_{i=1}^T \log(m_i+1)/\epsilon^2\right)$, where $m_i$ is the size of the $i$-group. Nonetheless, their work does not address the more general pure exploration problem with stochastic feedback graphs.

To address this gap in the literature, we study the pure exploration problem for general stochastic graphs, which may or may not be self-aware. In our model, each edge in the graph has an associated probability of providing feedback, introducing uncertainty about whether feedback is observed from a particular edge. Additionally, we study  both the informed scenario, where the graph structure is known and the learner knows the set of edges that were activated, and the uninformed scenarios, where the graph is unknown and the learner does not know which edge has been activated \footnote{ We note that in the adversarial literature, the terms \emph{informed} and \emph{uninformed} are  used in a slightly different way--typically, they distinguish between observing the feedback graph before versus after making a decision, respectively. }.

Using tools from Best Arm Identification (BAI) \citep{garivier2016optimal}, we derive an instance-specific lower bound on the sample complexity of learning the best action with fixed confidence $\delta\in (0,1)$, and present unidentifiability results for Bernoulli rewards in the uninformed case.  Additionally, our findings reveal how the sample complexity scales with key graph-dependent quantities, such as the independence number and the number of self-loops in a graph. Lastly, we introduce \algoname{} (Track and Stop for Feedback Graphs), an asymptotically optimal algorithm, and demonstrate its efficiency across different graph configurations\footnote{Code repository: \url{https://github.com/rssalessio/Pure-Exploration-with-Feedback-Graphs}}.

\section{PROBLEM SETTING}
We now briefly explain the graph structure, and some graph-specific quantities, and then explain the BAI setting with feedback graphs.

\subsection{Graphs} We indicate by $G$ a generic directed graph with vertices $V=[K]$, where $[K]=\{1,\dots, K\}$ denotes the set of the first $K$ integers. We denote by $G\in [0,1]^{K\times K}$ the matrix of weights of the edges, where $G_{u,v}$ indicates the weight of the edge $(u,v)\in V^2$. Therefore, $E=\{(u,v)\in V^2: G_{u,v}>0\}$ is the set of edges in $G$.

 For every $v\in V$ let $N_{in}(v) = \{v'\in V: (v',v)\in E\}$ be the \emph{in-neighborhood} of $v$ in $G$. Similarly, we  let $N_{out}(v) = \{v'\in V: (v,v')\in E\}$ be the \emph{out-neighborhood} of $v$ in $G$. If a vertex $v$ has a self-loop, that is $(v,v)\in E$, then $v\in N_{in}(v)$ and $v\in N_{out}(v)$.  We now define the concept of observability.
\begin{definition}[Graph observability]
    In a directed graph $G$ with vertices $V$ we say that $v\in V$ is observable if $N_{in}(v)\neq \emptyset$ . A vertex $v\in V$ is strongly observable if $\{v\}  \subseteq N_{in}(v)$ , or $ V\setminus\{v\}  \subseteq  N_{in}(v)$ or both conditions hold. A vertex is weakly observable if it is observable but not strongly.
We also let $ W(G), SO(G)$ be, respectively, the set of  weakly observable and strongly observable vertices. Lastly, a graph $G$ is observable (resp. strongly observable) if all its vertices are observable (resp. strongly observable). A graph is weakly observable if it is observable but not strongly.
\end{definition}
We also define the notion of \emph{domination} and of \emph{set independence.}
\begin{definition}[Domination and set independence]
In a directed graph $G$ with vertices $V$ we say that $D\subseteq V$ dominates $W \subseteq V$ (and we write $W\ll D$)  if for any $w\in W$ there exists $d\in D$ such that $w\in N_{out}(d)$.
We also say that $I \subseteq V$ is an independent set if it is a collection of vertices with no edges connecting any pair of them.
\end{definition}

Lastly, we define some graph-dependent quantities.
\begin{definition}[Graph-dependent quantities]\label{def:graph_dependent_quantities}
In a directed graph $G$ with vertices $V$ we let:
\begin{itemize}
    \item $\delta(G)=|D(G)|$ be the weak domination number of $G$, which is the size of the smallest set $D(G)\in \argmin_{D\subseteq V: W(G)\ll D} |D|$ that dominates the set of weakly observable vertices $W(G)$. 
    \item  $\alpha(G)=|I(G)|$ be the independence number of $G$, which is the largest possible number of vertices in an independent set of $G$. Formally, $I(G)\in{\cal I}(G)\coloneqq  \argmax_{I\subseteq V: \forall (u,v)\in I^2,u\neq v, G_{u,v}=0} |I|$.
    \item $\sigma(G)= |L(G)|$ be the  number of vertices with a self-loop, where  $L(G)=\{v\in V: \{v\} \subseteq N_{in}(v)\}$ is the set of vertices with a self-loop.
\end{itemize}
\end{definition}
\begin{example}
    In \cref{fig:example_graphs} we have that: (1) the bandit feedback is strongly observable, with $\alpha(G)=5$; (2) apple tasting is also strongly observable, with $\alpha(G)=1$; (3) revealing action is weakly observable, with $\delta(G)=1, \alpha(G)=4$; (4) ring graph, which is weakly observable, with $\delta(G)=3, \alpha(G)=2$; (5) loopless clique is strongly observable, with $\alpha(G)=1$. In addition to the graphs in \cref{fig:example_graphs}, we also have that: (6) the union of bandit feedback and revealing action is called \emph{loopy star}, a strongly observable graph with $\alpha(G)=1$; (7) the full feedback graph is strongly observable with $\alpha(G)=1$.
\end{example}

\subsection{Best Arm Identification with Feedback Graphs}\label{subsec:bai_setting}
We consider models $\nu=\{G,(\nu_u)_{u\in V}\}$ that consist of a graph $G\in [0,1]^{K\times K}$, with $|V|=K$ vertices, where each vertex $u\in V$  is characterised by $\nu_u$, a   probability distribution with average value $\mu_u$. 

To investigate the problem, without loss of generality, we analyse the particular case of distributions that belong to a canonical exponential family with one parameter \citep{efron2022exponential}:
\begin{equation}
    \frac{{\rm d}\nu_u}{{\rm d} \rho}(x) = \exp(\theta_u x- b(\theta_u)) \eqqcolon f_u(x),
\end{equation}
 where $f_u$ is the associated density, $\theta_u\in\Theta\subset \mathbb{R}$ is the canonical  parameter (which satisfies $\eta_u=\theta(\mu_u)$ for some mapping $\theta:\mathbb{R}\to\Theta$), $\rho$ is some dominating measure and $b:\Theta\to\mathbb{R}$ is a convex, twice-differentiable function. The mean of a distribution is denoted by $\dot b(\theta)\coloneqq \frac{{\rm d}b}{{\rm d}\theta}(\theta)$, which satisfies $\dot b(\theta_u)=\mu_u$. This class of distribution includes the Binomial distribution with $n$ samples, the Poisson distribution, Gaussians with known variance, and others (see also \citep{cappe2013kullback, efron2022exponential} for more details). This assumption has been previously used in pure exploration problems, see for example \citep{garivier2016optimal,degenne2019pure,degenne2019non}.

In the following, with some abuse of notation, we interchangeably write $\nu=\{G, (\mu_u)_{u\in V}\}$, in the sense that a model is identified with the graph $G$ and the mean value of the arms $(\mu_u)_{u\in V}$.

\paragraph{Settings.} At each time-step $t=1,2,\dots$, the agent chooses a vertex $V_t\in V$, and observes a collection of independent draws $Z_t\coloneqq\{Z_{t,u}\}_{u\in V}$, where $Z_{t,u}=Y_{t,(V_t,u)} R_{t,u}$, with $Y_{t,(V_t,u)}$ being distributed as ${\rm Ber}(G_{V_t,u})$ and the reward $R_{t,u}$  is drawn from  $ \nu_u$ \footnote{When both random variables follow Bernoulli distribution, one can  expect unidentifiability issues if the graph is unknown.}. We focus on the \emph{uninformed} and \emph{informed} settings.
\begin{definition}[Uninformed setting]
    In the uninformed setting   the learner {does not know} the graph nor which edge is activated at each time-step $t$. In other words, at time $t$ after choosing $V_t=v$ the learner does not know $E_t\coloneqq \{u\in V:Y_{t, (V_t,u)}=1\}$.
\end{definition}
 A simpler case is the informed setting.
\begin{definition}[Informed setting]
    In the informed setting the learner at each time-step $t$ knows the graph or which edge was activated after choosing  $V_t=v$, i.e., the set  $E_t$ is revealed to the learner.
\end{definition}

We denote by $ \nu_{v,u}$ the product distribution of $Z_{t,u}$ when $V_t=v$. For simplicity, we also indicate by $Z_{v,u}$ an i.i.d. sample from $\nu_{v,u}$. Lastly, we indicate by $\mathbb{P}_\nu$ (resp. $\mathbb{E}_\nu$) the probability law under $\nu$ of the observed rewards.

The goal of the learner is to identify the reward associated to the \emph{best vertex} $a^\star(\nu)=\argmax_{u\in V} \mu_u$, also known as the \emph{best action} or \emph{best arm} (in the following we also write $a^\star$ whenever it's clear from the context). To ensure that the problem is well-defined, for simplicity, we require that  the graph is fully observable. 
\begin{assumption}\label{assump:nontrivial_problem}
    We assume  $\nu$ to be observable.
\end{assumption}
We remark that the extension to a non-fully observable graph is straightforward, as long as there are $2$ observable vertices (otherwise the problem is ill-defined). Observe that we do not require to have self-loops in the model compared to some of the  previous settings  studied in the literature \citep{kocak2023online,mannor2011bandits}.

\paragraph{Algorithm and objective.} We consider a broad class of algorithms for the learner that consist of:
\begin{enumerate}
    \item \emph{a sampling rule}, which determines, based on past observations, which  vertex  is chosen at
time $t$; that is, $V_t$ is ${\cal F}_{t-1}$-measurable, with ${\cal F}_t=\sigma(V_1,Z_1,V_2,\dots, V_t, Z_t)$ in the uninformed case, while in the informed one ${\cal F}_t = \sigma(V_1,E_1, Z_1, \dots, V_t,E_t, Z_t)$.
\item \emph{a stopping rule} $\tau$ that stops the algorithm when sufficient evidence has been gathered to identify the optimal vertex. It is a stopping time with respect to $({\cal F}_t)_t$  satisfying $\mathbb{P}_\nu(\tau<\infty)=1$.
\item  \emph{a recommendation rule} $\hat a_\tau \in V$ that returns the estimated optimal vertex, and $\hat a_\tau$ is a ${\cal F}_{\tau}$-measurable random variable.
\end{enumerate}
We focus on the fixed-confidence setting, with a risk parameter\footnote{\label{footnote_remark}Note  that $\delta$ is the confidence parameter, while  $\delta(G)$ is the weak domination parameter.} $\delta\in (0,1)$, which entails devising an probably-correct estimator of the best vertex.
\begin{definition}[$\delta$-PC Algorithm]
    We say that an algorithm ${\tt Alg}$ is $\delta$-PC (Probably Correct) if, for any model $\nu$ satisfying \cref{assump:nontrivial_problem}, we have $\mathbb{P}_\nu(\tau<\infty, \hat a_\tau \neq a^\star(\nu)) < \delta$.
\end{definition}

The goal in this setting is to obtain a $\delta$-PC algorithm that requires, on average, the minimum number of draws $\mathbb{E}_\nu[\tau]$. Therefore we study the minimum achievable sample complexity $\mathbb{E}_\nu[\tau]$ by any $\delta$-PC algorithm.

\paragraph{Notation.} In the following, we denote by ${\rm KL}(P,Q)$ the KL-divergence between two distributions $P$ and $Q$, and by ${\rm kl}(x,y)=x\ln(x/y) + (1-x)\ln((1-x)/(1-y))$ the Bernoulli KL-divergence between two Bernoulli distributions of parameters $x$ and $y$ respectively.
For distributions  $P,Q$ belonging to the canonical exponential family with one parameter, with canonical parameters $\theta,\theta'$ respectively, we have ${\rm KL}(P,Q)=b(\theta')-b(\theta)-\dot{b}(\theta)(\theta'-\theta)$.
We also define a generalized version of the Jensen-Shannon divergence as $I_\alpha(P,Q)=\alpha {\rm KL}(P, \alpha P+ (1-\alpha)Q) + (1-\alpha) {\rm KL}(Q,\alpha P + (1-\alpha) Q)$  with $\alpha\in [0,1]$.
The sub-optimality gap in a vertex $u\in V$ is defined as $\Delta_u\coloneqq \mu_{a^\star}-\mu_u$, and the minimum gap as $\Delta_{\rm min}=\min_{a\neq a^\star} \Delta_a$.

Finally, we let $N_v(t)$  be the number of times a vertex $v\in V$ has been chosen up to time-step $t$ by ${\tt Alg}$ (to not be confused with the in/out-neighborhoods $N_{in}(v)$ and  $N_{out}(v)$), thus $N_v(t)=N_v(t-1)+\mathbf{1}_{\{V_t=v\}}$, with $N_v(0)=0$.  We also indicate by $N_{v,u}(t)$  the number of times edge $(v,u)$ was activated after choosing $v$ up to time-step $t$, thus $N_{v,u}(t)=N_{v,u}(t-1)+\mathbf{1}_{\{V_t=v, Y_{t,(v,u)}=1\}}$ with $N_{v,u}(0)=0$. Similarly, we denote by $M_u(t)$ the number of times we observed a reward from vertex $u$ up to time-step $t$. Hence, one can write  $M_u(t) = \sum_{v\in V} N_{v,u}(t)$.

 \section{SAMPLE COMPLEXITY LOWER BOUNDS}
The recipe to derive instance-specific sample complexity lower bounds is based on a \emph{change of measure} argument. This argument allows to derive an instance-dependent quantity $T^\star(\nu)$,  also known as \emph{characteristic time} (and its inverse $(T^\star(\nu))^{-1}$ is the \emph{information rate}), that  permits to lower bound the sample complexity of an algorithm. Change of measure arguments have a long history \citep{wald1947sequential,lorden1971procedures,lai1981asymptotic,lai1985asymptotically}, and have been applied to find  lower bounds for regret minimization \citep{combes2014unimodal,garivier2019explore} and best-arm identification  \citep{garivier2016optimal}.
We use this technique to derive the sample complexity lower bound in both the uninformed and informed settings.

\subsection{Lower Bound in the Uninformed Setting}
\label{subsec:lb_uninformed}
The uninformed case, while seemingly daunting, admits a separation in behavior between the class of \emph{continuous} and \emph{discrete} rewards.

We find that for Bernoulli rewards the best vertex is \emph{unidentifiable}, i.e.,  we cannot reject the null hypothesis that a certain vertex is optimal, no matter how much data is gathered. This result comes from the fact that an agent cannot discern between low-probability edges linked to high-reward vertices, and vice-versa. 

For the continuous case, however, we find a different behavior. The intuition is that an agent  can conclude, with almost sure certainty, that an edge is not activated if  zero reward is observed from that edge.

\subsubsection{Lower bound for continuous rewards in the uninformed setting}
 Recall the definition of the generalized Jensen-Shannon divergence $I_\alpha(P,Q)\coloneqq\alpha {\rm KL}(P, \alpha P+ (1-\alpha)Q) + (1-\alpha) {\rm KL}(Q,\alpha P + (1-\alpha) Q)$  for two distributions $P,Q$ and $\alpha\in [0,1]$.

For continuous rewards, we obtain the following instance-specific sample complexity lower bound, which is proved in \cref{subsec:proof_lb}.
 \begin{theorem}\label{thm:lb_general}
     For any $\delta$-PC algorithm and any model $\nu$  with reward distributions $\{\nu_u\}_{u\in V}$ with continuous support, satisfying \cref{assump:nontrivial_problem}, we have that
     \begin{equation}
         \mathbb{E}_\nu[\tau]\geq T^\star(\nu){\rm kl}(\delta,1-\delta),
     \end{equation}
     where 
          \begin{equation*}
    \begin{aligned}
   (T^\star(\nu))^{-1} =& \sup_{\omega\in \Delta(V)} \min_{u\neq a^\star} (m_u +m_{a^\star} )I_{\frac{m_{a^\star}}{m_u+m_{a^\star}}}(\nu_{a^\star},\nu_u)\\
   &\hbox{ s.t. } m_u = \sum_{v\in N_{in}(u)}\omega_v G_{v,u} \quad \forall u\in V.
    \end{aligned}
\end{equation*}
 \end{theorem}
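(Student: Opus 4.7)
The plan is to adapt the classical Garivier–Kaufmann change-of-measure machinery to the feedback-graph observation model. First I would invoke the transportation inequality: for any $\delta$-PC algorithm and any alternative model $\nu'$ satisfying $a^\star(\nu')\neq a^\star(\nu)$,
\[
\sum_{v\in V}\mathbb{E}_\nu[N_v(\tau)]\,{\rm KL}(\mathbb{P}_\nu^v,\mathbb{P}_{\nu'}^v)\geq {\rm kl}(\delta,1-\delta),
\]
where $\mathbb{P}_\nu^v$ denotes the law of the observation vector $(Z_{t,u})_{u\in V}$ under $\nu$ conditioned on $V_t=v$. This is standard and follows from the data-processing inequality applied to the event $\{\hat a_\tau=a^\star(\nu)\}$, whose probability gap between $\nu$ and $\nu'$ is at least ${\rm kl}(\delta,1-\delta)$.

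The next (and most delicate) step is to compute the per-pull divergence. When $V_t=v$, each coordinate $Z_{t,u}$ has the mixture distribution $(1-G_{v,u})\delta_0+G_{v,u}\nu_u$. Because $\nu_u$ has continuous support and therefore assigns zero mass to $\{0\}$, the singular and absolutely continuous parts are perfectly separable by Lebesgue decomposition. Keeping the graph $G$ fixed and changing only the reward distributions to $(\nu'_u)_u$ (each again with continuous support), a direct likelihood-ratio computation gives ${\rm KL}(\nu_{v,u},\nu'_{v,u})=G_{v,u}\,{\rm KL}(\nu_u,\nu'_u)$, since the $(1-G_{v,u})$-atom contributes nothing. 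Using conditional independence of the $Z_{t,u}$ across $u$, tensorisation yields ${\rm KL}(\mathbb{P}_\nu^v,\mathbb{P}_{\nu'}^v)=\sum_{u\in V}G_{v,u}\,{\rm KL}(\nu_u,\nu'_u)$.

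Substituting into the transportation inequality, setting $\omega_v=\mathbb{E}_\nu[N_v(\tau)]/\mathbb{E}_\nu[\tau]$ so that $\omega\in\Delta(V)$, and swapping the order of summation gives
\[
\mathbb{E}_\nu[\tau]\sum_{u\in V}m_u\,{\rm KL}(\nu_u,\nu'_u)\geq {\rm kl}(\delta,1-\delta),\qquad m_u=\sum_{v\in N_{in}(u)}\omega_v G_{v,u}.
\]
To tighten the bound, for each $u\neq a^\star$ I would minimise over $\nu'\in\mathrm{Alt}_u(\nu)=\{\nu':\mu'_u\geq \mu'_{a^\star}\}$: the cheapest perturbation leaves all but arms $a^\star$ and $u$ unchanged and ties them at the common target $\nu^\star=\alpha\nu_{a^\star}+(1-\alpha)\nu_u$ with $\alpha=m_{a^\star}/(m_u+m_{a^\star})$. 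By the very definition of $I_\alpha$, the cost collapses to $(m_u+m_{a^\star})\,I_\alpha(\nu_{a^\star},\nu_u)$. Optimality of this specific $\nu^\star$ among all common targets $R$ follows from the Pythagorean-style identity
\[
\alpha\,{\rm KL}(P,R)+(1-\alpha)\,{\rm KL}(Q,R)=I_\alpha(P,Q)+{\rm KL}\bigl(\alpha P+(1-\alpha)Q,\,R\bigr)\geq I_\alpha(P,Q),
\]
with equality precisely at $R=\alpha P+(1-\alpha)Q$, which is the standard mixture projection fact.

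Taking the minimum over $u\neq a^\star$, the supremum over $\omega\in\Delta(V)$, and dividing by $\mathbb{E}_\nu[\tau]$ then delivers the claimed characteristic-time lower bound. The hardest part will be the per-pull KL computation in the second paragraph: it is precisely where the continuous-support assumption is consumed, and it is what separates this theorem from the Bernoulli case, in which the point mass at $0$ cannot be distinguished from a genuine zero reward and the whole argument degenerates (this is the unidentifiability phenomenon announced in the previous subsection). A secondary subtlety is permitting $\nu^\star$ to be a mixture rather than remaining in the original exponential family, which is allowed because the transportation inequality is stated for arbitrary alternatives so long as $a^\star$ is flipped; with the Pythagorean identity above, the mixture is automatically admissible and yields the advertised $I_\alpha$ form.
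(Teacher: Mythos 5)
Your first two steps track the paper's own proof closely: the paper likewise obtains the transportation inequality by decomposing the log-likelihood ratio over the neighbourhoods and applying Wald's lemma together with Lemma~1 of Kaufmann et al.\ (2016), and it performs the same Lebesgue-decomposition computation, arriving at ${\rm KL}(\nu_{v,u},\nu'_{v,u})={\rm kl}(G_{v,u},G'_{v,u})+G_{v,u}{\rm KL}(\nu_u,\nu'_u)$, which reduces to your identity once $G'=G$ --- a restriction that costs nothing for the infimum, since the graph term is nonnegative and the constraint does not involve $G'$. The continuous-support assumption is indeed consumed exactly where you say it is, and the reduction to a two-arm perturbation of $(\nu_u,\nu_{a^\star})$ is also the paper's.

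The gap is in the final optimisation over the confusing model. The $I_\alpha$ appearing in the theorem is the Garivier--Kaufmann functional: although the paper's notation ``$\alpha P+(1-\alpha)Q$'' invites a mixture reading, the object intended is the exponential-family member whose \emph{mean} is $\alpha\mu_P+(1-\alpha)\mu_Q$, as the Gaussian formula $I_\alpha=\alpha(1-\alpha)\Delta^2/(2\lambda^2)$ used throughout the paper confirms; the infimum is taken over exponential-family alternatives and solved by tying the two means at the weighted average (Lemma~3 of Garivier and Kaufmann, 2016, which is what the paper invokes). You instead project onto the genuine mixture distribution $\alpha\nu_{a^\star}+(1-\alpha)\nu_u$. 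This fails on three counts. First, the mixture is not a member of the one-parameter exponential family, so it is not an admissible alternative: the transportation inequality requires the algorithm to be $\delta$-PC at $\nu'$, and that guarantee only extends over the model class, so your ``arbitrary alternatives are allowed'' remark is exactly the point that breaks. Second, your compensation identity establishes optimality of the mixture only among alternatives constrained to satisfy $\nu'_u=\nu'_{a^\star}$, whereas the actual feasible set is $\{\mu'_u\geq\mu'_{a^\star}\}$, which is strictly larger (the reverse projection of a density $p$ onto a mean constraint has the form $p(x)/(\lambda+\eta x)$, not a mixture), so the claimed infimum value is not established even granting arbitrary alternatives. Third, the resulting quantity is genuinely different from the one in the statement: the true Jensen--Shannon functional is bounded by $\ln 2$, while the Garivier--Kaufmann quantity grows like $\Delta_u^2$, so you would be proving a numerically different and, by the first point, unjustifiably stronger bound. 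The repair is to keep $\nu'_u,\nu'_{a^\star}$ inside the exponential family, tie their means, and optimise over the common mean as in Lemma~3 of Garivier and Kaufmann; a final small point is that at the optimiser the best arm is tied rather than flipped, so one should argue via an infimum over strict perturbations and continuity.
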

In the following we also denote by $\omega^\star = \arginf_{\omega\in \Delta(V)} T(\omega;\nu)$ the optimal solution, where 
\begin{equation}
   \begin{aligned}
   T(\omega;\nu)^{-1}=& \min_{u\neq a^\star} (m_u +m_{a^\star} )I_{\frac{m_{a^\star}}{m_u+m_{a^\star}}}(\nu_{a^\star},\nu_u),\\
   &\hbox{ s.t. } m=G^\top \omega,
    \end{aligned}
\end{equation}
where $m=\begin{bmatrix}m_1&\dots& m_K\end{bmatrix}^\top$,  similarly $\omega$, are in vector form.

\paragraph{Discussion and scaling.} The characteristic time in \cref{thm:lb_general} displays some similarities to the  characteristic time found in classical BAI \citep{garivier2016optimal}. First, as one would expect, the amount of evidence $ (m_u +m_{a^\star} )I_{\frac{m_{a^\star}}{m_u+m_{a^\star}}}(\nu_{a^\star},\nu_u)$ does not depend directly on the vertex selection rate $\omega$, but on the observation rate $m$, which depends  on the edge activation probabilities.

Secondly, to gain a better intuition of the above quantities, we can focus on the Gaussian case where $\nu_u = {\cal N}(\mu_u,\lambda^2)$, with $\lambda> 0$. For this particular choice,  $T^\star(\nu)$ is the solution to the following convex problem:
          \begin{equation}
    \begin{aligned}
   T^\star(\nu) =&   \inf_{\omega\in \Delta(V)} \max_{u\neq a^\star}\left(m_u^{-1} + m_{a^\star}^{-1}\right) \frac{2\lambda^2}{\Delta_u^2}\hbox{ s.t. } m = G^\top \omega.
    \end{aligned}
\end{equation}
This expression  allows us to gain a better understanding of the scaling   of $T^\star(\nu)$, as shown in the next  two propositions (which are proved in \cref{app:subsec_scaling}).
\begin{proposition}
    \label{prop:scaling_weakly_observable_gaussian}
    Consider an observable model $\nu=(\{\nu_u\}_u, G)$ with  Gaussian  rewards $\nu_u={\cal N}(\mu_u, \lambda^2)$.
    If $\delta(G)+\sigma(G)>0$, we can upper bound $T^\star$ as
    \begin{equation}
        T^\star(\nu) \leq \frac{4\left(\delta(G)+\sigma(G)-\left\lfloor \frac{\sigma(G)}{\alpha(G)+1}\right \rfloor\right)\lambda^2}{\min_{u\neq a^\star} \min(\bar G_u, \bar G_{a^\star}) \Delta_u^2},
    \end{equation}
    where $\bar G_u$ for any $u\in V$ is defined as \[
    \bar G_u \coloneqq \max\left(  \max_{v\in D(G)} G_{v,u},\min_{v\in L(G): G_{v,u}>0}G_{v,u} \right ).\]
\end{proposition}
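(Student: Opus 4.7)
The plan is to upper bound $T^\star(\nu)$ by exhibiting a specific admissible $\omega\in\Delta(V)$ and evaluating the resulting objective, since $T^\star$ is defined as an infimum. First I would specialize the characteristic time of \cref{thm:lb_general} to the Gaussian case: using the closed form ${\rm KL}(\mathcal{N}(\mu_1,\lambda^2),\mathcal{N}(\mu_2,\lambda^2))=(\mu_1-\mu_2)^2/(2\lambda^2)$, the generalized Jensen--Shannon divergence collapses and yields
\[
T^\star(\nu)=\inf_{\omega\in\Delta(V)}\max_{u\neq a^\star}\frac{2\lambda^2\left(m_u^{-1}+m_{a^\star}^{-1}\right)}{\Delta_u^2},\qquad m=G^\top\omega,
\]
so it suffices to construct an $\omega$ making both $m_u$ and $m_{a^\star}$ simultaneously large in a way compatible with $\bar G_u$, $\bar G_{a^\star}$.

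The natural skeleton support is $S_0=D(G)\cup L(G)$: the weak dominating set $D(G)$ provides, by definition, an in-neighbor with positive edge weight for every weakly observable vertex; self-loop vertices in $L(G)$ observe themselves; and the remaining strongly observable vertices satisfying $V\setminus\{v\}\subseteq N_{in}(v)$ are observed by any other action. I would then refine $L(G)$ to a smaller subset $L'\subseteq L(G)$ using a greedy pruning argument. Since every independent set of $G$ has size at most $\alpha(G)$, any group of $\alpha(G)+1$ self-loop vertices must contain an internal edge, so one vertex of the group can be discarded while remaining observed by a retained vertex. Iterating this across disjoint chunks of $\alpha(G)+1$ self-loops allows discarding $\lfloor\sigma(G)/(\alpha(G)+1)\rfloor$ vertices in total, so the final support $S\coloneqq D(G)\cup L'$ has cardinality at most $\delta(G)+\sigma(G)-\lfloor\sigma(G)/(\alpha(G)+1)\rfloor$.

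With $\omega$ uniform on $S$, the observation rate reads $m_u=|S|^{-1}\sum_{v\in N_{in}(u)\cap S} G_{v,u}$. I would lower bound this sum in two complementary ways: by the single best contribution from $D(G)\cap N_{in}(u)$, producing the term $\max_{v\in D(G)}G_{v,u}$, and by the weakest surviving contribution from $L'\cap N_{in}(u)$, which, by construction of $L'$, is at least $\min_{v\in L(G):G_{v,u}>0}G_{v,u}$. Taking the larger of the two recovers $\bar G_u$ and gives $m_u\ge \bar G_u/|S|$, and analogously $m_{a^\star}\ge \bar G_{a^\star}/|S|$. Combining these with $m_u^{-1}+m_{a^\star}^{-1}\le 2|S|/\min(\bar G_u,\bar G_{a^\star})$ and plugging back into the Gaussian expression for $T^\star(\nu)$ yields the announced inequality, with the factor $4$ absorbing $2\cdot 2$ from the divergence constant and the harmonic-mean bound.

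The main obstacle will be the greedy pruning step producing $L'$: one must simultaneously control the number of deletions so as to get the $\lfloor\sigma(G)/(\alpha(G)+1)\rfloor$ saving and preserve the min-type contribution entering $\bar G_u$, ensuring that for every vertex $u$ originally observed by some self-loop neighbor there remains in $L'$ at least one in-neighbor whose edge weight matches $\min_{v\in L(G):G_{v,u}>0}G_{v,u}$. Making this combinatorial construction explicit and compatible with the independence-number bound is the delicate part; once established, the rest of the argument is routine manipulation of the Gaussian characteristic time.
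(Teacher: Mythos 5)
Your proposal is correct and takes essentially the same route as the paper: specialize $T^\star(\nu)$ to the Gaussian form, build a dominating set of size at most $\kappa=\delta(G)+\sigma(G)-\left\lfloor \sigma(G)/(\alpha(G)+1)\right\rfloor$ by combining $D(G)$ with a pruned subset of $L(G)$ obtained by chunking the self-loop vertices into groups of $\alpha(G)+1$ (each of which contains an internal edge since $\alpha(G)$ bounds every independent set), place uniform mass $1/\kappa$ on it, and lower bound each $m_u$ by $\bar G_u/\kappa$. The ``delicate'' preservation issue you flag is actually automatic: any retained in-neighbor $v\in L(G)$ of $u$ with $G_{v,u}>0$ trivially satisfies $G_{v,u}\geq \min_{v'\in L(G):G_{v',u}>0}G_{v',u}$, so one only needs the pruned set to still dominate $L(G)$, which the chunking construction guarantees.
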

Instead, for the loopless clique, we obtain the following result.
\begin{proposition}\label{prop:scaling_looplessclique}
     For an observable  $\nu=(\{\nu_u\}_u, G)$ with  Gaussian  rewards $\nu_u={\cal N}(\mu_u, \lambda^2)$ satisfying $\delta(G)+\sigma(G)=0$ (i.e., the loopless clique), we have
    \begin{equation}
        T^\star(\nu) \leq \frac{4\bar G\lambda^2}{\Delta_{\rm min}^2},
    \end{equation}
    where $\bar G \coloneqq  \min_{v,w: v\neq w} \max_{u\neq a^\star}\frac{1}{G_{v,w}(u)} + \frac{1}{G_{v,w}(a^\star)} $ and $ G_{v,w}(u)\coloneqq G_{v,u} + G_{w,u}$.
\end{proposition}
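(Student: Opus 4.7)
The plan is to upper bound $T^\star(\nu)$ by exhibiting a concrete feasible allocation $\omega\in\Delta(V)$ and then optimizing over its free parameters. First I would deduce the structure of $G$ from the hypothesis: $\sigma(G)=0$ means no self-loops, while $\delta(G)=0$ together with observability forces $W(G)=\emptyset$, i.e., every vertex is strongly observable. Since no vertex carries a self-loop, the only way for $u$ to be strongly observable is via $V\setminus\{u\}\subseteq N_{in}(u)$, so $G_{v,u}>0$ for every pair $v\neq u$; in other words, $G$ is a (possibly weighted) loopless clique.

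Next, for any unordered pair $v\neq w$, I would plug in the two-point allocation $\omega^{(v,w)}_v=\omega^{(v,w)}_w=\tfrac{1}{2}$ and zero on all other vertices, which is clearly in $\Delta(V)$. The induced observation rates are
\[
m_u=\sum_{v'\in N_{in}(u)} \omega^{(v,w)}_{v'}\, G_{v',u}=\tfrac{1}{2}\left(G_{v,u}+G_{w,u}\right)=\tfrac{1}{2}\,G_{v,w}(u),\qquad u\in V.
\]
Crucially, $m_u>0$ for every $u$: if $u\notin\{v,w\}$ both summands are strictly positive by the loopless-clique structure, and if $u\in\{v,w\}$ the ``self'' term vanishes but the other remains strictly positive. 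In particular, the allocation $\omega^{(v,w)}$ is feasible in the convex program defining $T^\star(\nu)$.

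Finally, I would substitute these $m_u$ into the Gaussian closed form of $T^\star(\nu)$ stated earlier in the section, use $\Delta_u\geq\Delta_{\min}$ to uniformly bound each gap, and then take the minimum over $v\neq w$. This yields
\[
T^\star(\nu)\leq \max_{u\neq a^\star}\left(\frac{2}{G_{v,w}(u)}+\frac{2}{G_{v,w}(a^\star)}\right)\frac{2\lambda^2}{\Delta_u^2}\leq \frac{4\lambda^2}{\Delta_{\min}^2}\max_{u\neq a^\star}\left(\frac{1}{G_{v,w}(u)}+\frac{1}{G_{v,w}(a^\star)}\right),
\]
and minimizing the right-hand side over $v\neq w$ delivers precisely $T^\star(\nu)\leq 4\bar G\lambda^2/\Delta_{\min}^2$. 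There is no genuine technical obstacle here: the key qualitative observation is that in a weighted loopless clique any pair of vertices already dominates the whole vertex set, so concentrating the entire sampling budget on two well-chosen vertices is enough to discriminate the best arm from every competitor; after that, the bound is a direct substitution and two applications of the obvious inequalities.
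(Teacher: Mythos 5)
Your proof is correct and follows essentially the same route as the paper's: identify the loopless-clique structure from $\delta(G)+\sigma(G)=0$, place mass $1/2$ on each vertex of a pair $(v,w)$ so that $m_u=\tfrac12 G_{v,w}(u)>0$ for all $u$, substitute into the Gaussian expression for $T^\star(\nu)$, bound $\Delta_u\geq\Delta_{\min}$, and minimize over pairs. The only cosmetic difference is that the paper invokes its domination theorem to justify that two vertices suffice, whereas you argue the positivity of $m_u$ directly from the clique structure.
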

For  both results we see how  the sample complexity does not  scale with $K$, but depends on the structural properties of $G$. For example,  \cref{prop:scaling_weakly_observable_gaussian} exhibits a scaling in $\sigma(G)$ for bandit feedback, and $\sigma(G)/2$ for  a complete graph (i.e., there is an edge between any pair of vertices) with self-loops. Nevertheless, we find it challenging to enhance this scaling without using additional graph-specific parameters.
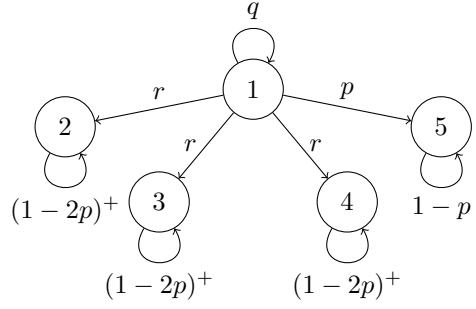
\begin{figure}[t]
    \centering

\begin{tikzpicture}[
  vertex/.style = {circle, draw, minimum size=0.8cm},
  loop/.style = {looseness=3, in=60, out=120, min distance=7mm},
  every edge/.style = {draw, thick}
]
\node[vertex] (A1) at (0,0) {$1$};
\node[vertex] (B1) at (2.5,-0.5) {$5$};
\node[vertex] (C1) at (1.25,-1.5) {$4$};
\node[vertex] (D1) at (-1.25,-1.5) {$3$};
\node[vertex] (E1) at (-2.5,-0.5) {$2$};

\draw[->] (A1) -- node[above] {$p$}  (B1);
\draw[->] (A1) -- node[right] {$r$}  (C1);
\draw[->] (A1) -- node[left] {$r$}  (D1);
\draw[->] (A1) -- node[above] {$r$}  (E1);

\draw[->, loop] (A1) to node[above] {$q$}  (A1);
\draw[->, loop] (B1) to [out=240,in=300] node[below] {$1-p$}  (B1);
\draw[->, loop] (C1) to [out=240,in=300]  node[below] {$(1-2p)^+$} (C1);
\draw[->, loop] (D1) to [out=240,in=300] node[below] {$(1-2p)^+$}  (D1);
\draw[->, loop] (E1) to [out=240,in=300] node[below] {$(1-2p)^+$}  (E1);
\end{tikzpicture}
    
    \caption{Loopy star graph. To each edge is associated an activation probability (obs. that $(x)^+ = \max(x,0)$). }
    \label{fig:loopy_star_graph}
\end{figure}
\begin{example}[The loopy star]
    We study the  graph in \cref{fig:loopy_star_graph} with Gaussian rewards, where $\lambda=1$, $\mu_5=1$ and   $\mu_u=0.5, u
\in \{1,
    \dots,4\}$. Notably, this graph  is the union of a  bandit feedback graph and revealing action graph. This example is relevant in adversarial regret minimization: removing any self-loop changes the minimax regret from $\tilde \Theta(\sqrt{\alpha(G) T})$ to $\tilde \Theta(T^{2/3})$ \citep{alon2015online}.
The graph depends on the parameters $(p,q,r)$, where the number of self-loops decreases as $p$ increases. 
In \cref{fig:example_characteristic_time_loopystar} we show the characteristic time $T^\star(\nu)$ (in solid lines) of the loopy star graph for different values of $(p,q)$ with $r=1/4$.  
 In particular, there is no sharp transition as in the regret minimization setting when removing self-loops: since  exploitation is unnecessary, selecting a less promising node is acceptable as long as it yields useful information.
Additionally, the only visible change happens at  $p\approx0.1$ (and $q=1$ is fixed), where it is no longer convenient for the algorithm to sample the vertices $\{2,3,4\}$, and focuses only on vertices $1$ and $5$. 
In the figure we also plot (in dashed lines) $\|G^\top \omega^\star\|_2$, an indication of the observation frequency of the vertices. We see how this quantity is directly correlated with the characteristic time $T^\star(\nu)$.
\end{example}

\subsubsection{A heuristic solution}\label{subsec:heuristic_sol}
In general, it is difficult to guess what the optimal solution $\omega^\star$ may be.  \cite{garivier2016optimal} show that $\omega_u \propto 1/\Delta_u^2$ is almost optimal for Gaussian rewards (with $\Delta_{a^\star} = \Delta_{\rm min}$) in multi-armed bandit problems. Is it the same also for  feedback graphs?

Taking inspiration from \cite{garivier2016optimal}, we propose that $m_u \propto  1/\Delta_u^2$, where $m=G^\top \omega$. 
Define then the vector $\Delta^{-2}\coloneqq (1/\Delta_u^2)_{u\in V}$. From a geometrical perspective, we can maximize the similarity $m^\top \Delta^{-2}$, or rather, $\omega^\top G \Delta^{-2}$.  In the classical Euclidean space this is achieved by $\omega \propto G \Delta^{-2}$.  To obtain a distribution, we project $G\Delta^{-2}$ to the nearest distribution $\omega_{\rm heur}$ in the KL sense, and obtain $\omega_{\rm heur} \coloneqq G\Delta^{-2}/\|G\Delta^{-2}\|_1$ (see also \cref{subsubsec:app_heuristic_sol}).

Such allocation $\omega_{\rm heur}$ makes intuitive  sense: the probability of selecting a vertex $u$ is proportional to $ \sum_{v\in V} G_{uv} \Delta_{v}^{-2}$, thus assigning higher preference to vertices that permit the learner to sample actions with small sub-optimality gaps.
\begin{figure}[t]
    \centering
    \includegraphics[width=\linewidth]{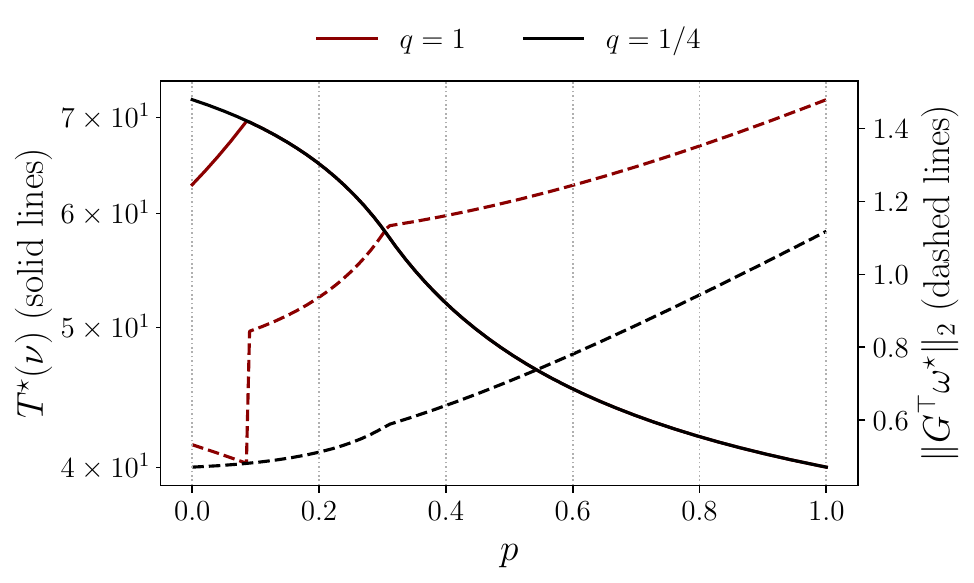}
    \caption{Loopy star example with $r=1/4$.
    The solid lines depict $T^\star(\nu)$ for $q=1$ and $q=1/4$ for different values of $p$. Similarly, on the right axis, the dashed lines show $\|G^\top \omega^\star\|_2$, which indicates the amount of information gathered per time-step.}
    \label{fig:example_characteristic_time_loopystar}
\end{figure}
 
 For this heuristic allocation $\omega_{\rm heur}$ we can provide the following upper bound on its scaling (we refer  the reader to \cref{subsubsec:app_heuristic_sol} for a proof).
\begin{proposition}\label{prop:scaling_heruistic}
    For an observable model $\nu=(\{\nu_u\}_u, G)$ with Gaussian random rewards $\nu_u={\cal N}(\mu_u, \lambda^2)$ we can upper bound $T(\omega_{\rm heur};\nu)$  as
    \begin{equation*}
        T^\star (\nu) \leq T(\omega_{\rm heur};\nu)\leq 4\lambda^2\frac{\|G\Delta^{-2}\|_1}{\sigma_{\min}(G)^2}  ,
    \end{equation*}
    where $\sigma_{\min}(G)$ is the minimum singular value of $G$.
\end{proposition}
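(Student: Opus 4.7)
The inequality $T^\star(\nu) \leq T(\omega_{\rm heur};\nu)$ is immediate from the definition of $T^\star(\nu)$ as an infimum over $\Delta(V)$, so the work lies entirely in bounding $T(\omega_{\rm heur};\nu)$. My plan is first to put the observation-rate vector $m = G^\top \omega_{\rm heur}$ into a tractable form, then reduce the characteristic time to a single-component condition on $m$, and finally extract a component-wise lower bound on $m$ that exploits both the non-negativity of $G$ and the PSD structure of $G^\top G$.

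Set $Z \coloneqq \|G\Delta^{-2}\|_1$, with the standard convention $\Delta_{a^\star} = \Delta_{\min}$ so that $\Delta^{-2}$ is a well-defined non-negative vector. Because $G$ has non-negative entries, $G\Delta^{-2} \geq 0$ component-wise and $Z = \mathbf{1}^\top G\Delta^{-2}$, hence $\omega_{\rm heur} = G\Delta^{-2}/Z \in \Delta(V)$ and $m = G^\top \omega_{\rm heur} = G^\top G\, \Delta^{-2}/Z$. Starting from the Gaussian expression for $T(\omega_{\rm heur};\nu)$ and applying $m_u^{-1} + m_{a^\star}^{-1} \leq 2/\min(m_u,m_{a^\star})$, I get
\begin{equation*}
T(\omega_{\rm heur};\nu) \leq \max_{u \neq a^\star} \frac{4\lambda^2}{\Delta_u^2 \min(m_u, m_{a^\star})}.
\end{equation*}
It therefore suffices to show that $m_v \geq \sigma_{\min}(G)^2 \Delta_v^{-2}/Z$ for every $v \in V$: for then, using $\Delta_u \geq \Delta_{\min}$ when $u \neq a^\star$, the minimum of $\Delta_u^{-2}$ and $\Delta_{\min}^{-2}$ is $\Delta_u^{-2}$, so $\Delta_u^2 \min(m_u,m_{a^\star}) \geq \sigma_{\min}(G)^2/Z$ and the displayed bound becomes $4\lambda^2 Z/\sigma_{\min}(G)^2$, exactly the claim.

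The key component-wise inequality $(G^\top G \Delta^{-2})_v \geq \sigma_{\min}(G)^2 \Delta_v^{-2}$ I prove in two short steps. First, $G^\top G$ has non-negative entries (as $G$ does), and $\Delta^{-2} \geq 0$, so discarding off-diagonal contributions yields
\begin{equation*}
(G^\top G \Delta^{-2})_v = \sum_w (G^\top G)_{vw}\Delta_w^{-2} \;\geq\; (G^\top G)_{vv}\,\Delta_v^{-2}.
\end{equation*}
Second, for any PSD matrix $A$ the smallest diagonal entry dominates the smallest eigenvalue, since $A_{vv} = e_v^\top A e_v \geq \lambda_{\min}(A)\|e_v\|_2^2$; applied to $A = G^\top G$ this gives $(G^\top G)_{vv} \geq \lambda_{\min}(G^\top G) = \sigma_{\min}(G)^2$. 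Chaining these two observations and dividing by $Z$ closes the argument.

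The main subtlety I anticipate is precisely this component-wise lower bound: the PSD inequality $G^\top G \succeq \sigma_{\min}(G)^2 I$ is purely a quadratic-form statement and does not, on its own, imply a component-wise comparison between $G^\top G \Delta^{-2}$ and $\sigma_{\min}(G)^2 \Delta^{-2}$. What rescues the estimate is the entry-wise non-negativity of both factors, which lets one replace the full matrix-vector product by its diagonal contribution before invoking the eigenvalue bound; without this step the naive use of operator-norm inequalities would only produce a weaker bound in terms of $\|\Delta^{-2}\|_2$ rather than the sharper $\|G\Delta^{-2}\|_1$.
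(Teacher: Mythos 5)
Your proof is correct and follows essentially the same route as the paper's: both arguments compute $m = G^\top G\,\Delta^{-2}/\|G\Delta^{-2}\|_1$, use entry-wise non-negativity to drop the off-diagonal contributions and retain $(G^\top G)_{vv}\Delta_v^{-2}$, lower-bound the diagonal entry by $\sigma_{\min}(G)^2$ via the quadratic form $e_v^\top G^\top G e_v \geq \lambda_{\min}(G^\top G)$, and finish with the convention $\Delta_{a^\star}=\Delta_{\min}$ (the paper bounds $(\Delta_u^2+\Delta_{\min}^2)/\Delta_u^2 \leq 2$ where you use $m_u^{-1}+m_{a^\star}^{-1}\leq 2/\min(m_u,m_{a^\star})$, an immaterial difference). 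Your remark on why the PSD inequality alone is insufficient without non-negativity is a fair and accurate observation, and your handling of the diagonal entry as a squared column norm is in fact slightly more careful than the paper's write-up, which labels it as a row norm.
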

Observing the following  upper bound $||G\Delta^{-2}||_1 \leq  K^{3/2}\sigma_{\max}(G)/\Delta_{\min}^2$ where $\sigma_{\rm max}(G)$ denotes the maximum singular value of $G$, we conclude that   in the worst case scenario $T(\omega_{\rm heur};\nu) \leq O\left( \frac{K^{3/2}\sigma_{\max}(G)}{\Delta_{\min}^2\sigma_{\min}(G)^2}\right)$. 
Observe that this scaling can be improved in $K$ for sparse solutions that efficiently use the graph structure. We discuss in \cref{subsubsec:app_heuristic_sol} such an approach that scales according to the minimum number of vertices that dominate the graph.

Another simple solution is the sparse allocation $\omega_u = \mathbf{1}_{\{u\in {\cal G}\}}/|{\cal G}|$, where ${\cal G}=\argmax_{v} (G\Delta^{-2})_v$. This is an efficient allocation, since it scales as $O\left(\frac{|{\cal G}|}{\Delta_{\min}^2 \max_{u\in {\cal G}}\min_v G_{u,v}} \right)$. However, such solution is admissible only if ${\cal G}$ dominates the graph, which ultimately depends on the structure of $G$.

\subsubsection{The curious case of Bernoulli rewards}
We conclude the study of sample complexity lower bounds in the uninformed setting by examining  the case of Bernoulli rewards, finding that it is generally impossible to estimate the best vertex in this case.  If the learner does not know whether the graph activations are deterministic and the rewards follow a Bernoulli distribution, a zero outcome could arise either because the reward of the chosen arm is truly zero or because the edge responsible for providing feedback was not activated.  Without knowing which edge was activated,  the learner cannot discern between the randomness of the reward and that of the edge. This intuition is formalized in the following result, which shows that the information rate is zero (the proof is given in \cref{subsubsec:sample_complexity_discrete_case}).
\begin{proposition}\label{prop:unidentifiability_bernoulli}
    Under \cref{assump:nontrivial_problem}, if $(\nu_u)_{u\in V}$ are Bernoulli distributions with  parameters $(\mu_u)_{u\in V}$, then $a^\star$ is unidentifiable, in the sense that $(T^\star(\nu))^{-1}=0$.
\end{proposition}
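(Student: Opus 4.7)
The plan is to prove unidentifiability by constructing, for every Bernoulli model $\nu$, an alternative Bernoulli model $\nu'$ with a different best arm but \emph{identical} observation law under every action, and then feeding this pair into the change-of-measure inequality behind $T^\star(\nu)$. The crucial observation is that in the uninformed setting the learner only sees $Z_{t,u} = Y_{t,(V_t,u)}R_{t,u}$, and when $R_{t,u}$ is itself Bernoulli the product is again Bernoulli of parameter $G_{v,u}\mu_u$; hence the observation law under $V_t=v$ factorises as $P_v^\nu = \bigotimes_{u\in V}{\rm Ber}(G_{v,u}\mu_u)$, so the likelihood depends on $(\mu,G)$ only through the products $(G_{v,u}\mu_u)_{v,u}$. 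Any reparametrization that preserves these products is therefore statistically invisible to the learner.

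To produce such a reparametrization with a different best arm, I would fix any $u_0 \neq a^\star$ and set $\mu'_u = c_u \mu_u$ and $G'_{v,u} = G_{v,u}/c_u$ with $c_u = 1$ for $u \notin \{a^\star, u_0\}$, so that $G'_{v,u}\mu'_u = G_{v,u}\mu_u$ holds by design. I would take $c_{a^\star} = \max_v G_{v,a^\star}$, which is strictly positive by observability of $a^\star$ and at most $1$, giving $\mu'_{a^\star} = \mu_{a^\star}\max_v G_{v,a^\star} < 1$ because $\mu_{a^\star}\in(0,1)$; and $c_{u_0} = (1-\epsilon)/\mu_{u_0}$ for a small $\epsilon>0$, yielding $\mu'_{u_0} = 1-\epsilon$ and $G'_{v,u_0} = G_{v,u_0}\mu_{u_0}/(1-\epsilon)\in[0,1]$ for $\epsilon$ small (using $\mu_{u_0}<1$). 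Choosing $\epsilon < 1 - \max\{\mu'_{a^\star},\max_{u\neq a^\star,u_0}\mu_u\}$ makes $u_0$ the unique maximum of $\mu'$, so $a^\star(\nu') = u_0 \neq a^\star(\nu)$; observability is preserved because $G'$ has the same support as $G$.

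Finally, I would apply the standard transportation lemma of \citet{kaufmann2016complexity} to the pair $(\nu,\nu')$: for any $\delta$-PC algorithm one has $\sum_v \mathbb{E}_\nu[N_v(\tau)]\,{\rm KL}(P_v^\nu,P_v^{\nu'}) \geq {\rm kl}(\delta,1-\delta)$, and by independence across $u$ each ${\rm KL}(P_v^\nu,P_v^{\nu'})$ splits as $\sum_u {\rm kl}(G_{v,u}\mu_u,G'_{v,u}\mu'_u)$, which is identically zero by construction. Consequently $\mathbb{E}_\nu[\tau]=+\infty$ for every $\delta<1/2$, or equivalently $(T^\star(\nu))^{-1} = \sup_\omega \inf_{\nu'\in{\rm Alt}(\nu)} \sum_v \omega_v {\rm KL}(P_v^\nu,P_v^{\nu'}) = 0$. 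The only real obstacle sits in the middle step: one must rescale two coordinates simultaneously while keeping every $\mu'_u$ inside $(0,1)$ and every $G'_{v,u}$ inside $[0,1]$, and the construction succeeds precisely because the canonical Bernoulli parametrization forces $\mu_u\in(0,1)$ strictly, leaving enough slack to transfer mass between $G_{v,u}$ and $\mu_u$ without hitting the boundary.
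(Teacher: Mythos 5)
Your proposal is correct and rests on exactly the same key idea as the paper's proof: because the observed law under action $v$ is $\bigotimes_u \mathrm{Ber}(G_{v,u}\mu_u)$, one can rescale $\mu'_u = c_u\mu_u$ and $G'_{v,u} = G_{v,u}/c_u$ to preserve every product $G_{v,u}\mu_u$ (hence every KL term vanishes) while reordering the means, and the paper implements precisely this rescaling inside the variational formula for $(T^\star(\nu))^{-1}$ by taking $\mu'_u=\mu_{a^\star}$ and $G'_{v,u}=G_{v,u}\mu_u/\mu_{a^\star}$. The only (immaterial) difference is that your alternative model lies strictly inside $\mathrm{Alt}(\nu)$ with $\mu'_{u_0}=1-\epsilon$ strictly maximal, whereas the paper's choice sits on the boundary $\mu'_u=\mu'_{a^\star}$ and obtains the infimum as a limit.
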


\subsection{Lower Bound in the Informed Setting}
When the set of activated edges $E_t$ is revealed to the learner at each time-step $t$, we can show that the lower bound in \cref{thm:lb_general} also holds for Bernoulli rewards. Furthermore, the result also applies to the case where the graph probabilities $G$ are known a-priori.
\begin{theorem}
\label{thm:lb_informed}
        Consider a $\delta$-PC algorithm and a model $\nu$  satisfying \cref{assump:nontrivial_problem}. If  the set $E_t$ is revealed to the learner for each $t\geq1$ (or $G$ is known), then
     \begin{equation}
         \mathbb{E}_\nu[\tau]\geq T^\star(\nu){\rm kl}(\delta,1-\delta),
     \end{equation}
     where $T^\star(\nu)$ is  as in \cref{thm:lb_general}.
\end{theorem}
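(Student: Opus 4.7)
The plan is to replay the change-of-measure recipe underlying \cref{thm:lb_general}, the only new ingredient being the computation of the per-pull Kullback--Leibler divergence under the informed observation model. For any $\delta$-PC algorithm and any alternative $\nu'$ whose best arm differs from $a^\star(\nu)$, Wald's identity combined with the data-processing inequality applied to the event $\{\hat a_\tau=a^\star(\nu)\}$ \citep{kaufmann2016complexity} delivers the master inequality
\begin{equation*}
\sum_{v\in V}\mathbb{E}_\nu[N_v(\tau)]\,D_v(\nu,\nu')\ \geq\ \mathrm{kl}(\delta,1-\delta),
\end{equation*}
where $D_v(\nu,\nu')$ is the KL divergence between the laws of one observation obtained after pulling $v$ under $\nu$ and $\nu'$, respectively. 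The whole argument thus reduces to controlling $D_v$.

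The critical step is the computation of $D_v$. It suffices to use alternatives $\nu'$ sharing the graph $G$ with $\nu$ and differing only in the reward distributions. In the informed setting, a single pull of $v$ reveals the pair $(E_t,Z_t)$: the marginal law of $E_t$ is identical under $\nu$ and $\nu'$, while conditionally on $E_t$ the rewards $\{Z_{t,u}\}_{u\in E_t}$ are independent samples from $\{\nu_u\}_{u\in E_t}$. The KL chain rule together with $\mathbb{P}(u\in E_t\mid V_t=v)=G_{v,u}$ yields
\begin{equation*}
D_v(\nu,\nu')=\sum_{u\in N_{out}(v)} G_{v,u}\,\mathrm{KL}(\nu_u,\nu'_u).
\end{equation*}
When $G$ is known but $E_t$ is not revealed, the marginal of $Z_{t,u}$ is the mixture $(1-G_{v,u})\delta_0+G_{v,u}\nu_u$. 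For continuous rewards (atomless at zero) the same factorization goes through and the formula is unchanged; for Bernoulli rewards, a direct computation gives $D_v=\sum_u\mathrm{kl}(G_{v,u}\mu_u,G_{v,u}\mu'_u)$, which is at most $\sum_u G_{v,u}\mathrm{kl}(\mu_u,\mu'_u)$ by the data-processing inequality, and this upper bound is enough to preserve a valid lower bound on $\mathbb{E}_\nu[\tau]$.

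Setting $\omega_v=\mathbb{E}_\nu[N_v(\tau)]/\mathbb{E}_\nu[\tau]$ and $m_u=\sum_{v\in N_{in}(u)}\omega_v G_{v,u}$, swapping the order of summation in the master inequality gives
\begin{equation*}
\mathbb{E}_\nu[\tau]\,\sum_{u\in V} m_u\,\mathrm{KL}(\nu_u,\nu'_u)\ \geq\ \mathrm{kl}(\delta,1-\delta).
\end{equation*}
Taking the infimum over alternatives $\nu'$ that swap the best arm with some fixed $u\neq a^\star$, it suffices to perturb only the pair $(\nu_{a^\star},\nu_u)$; the resulting two-arm problem
\begin{equation*}
\inf_{\mu'_{a^\star}\leq \mu'_u}\bigl[m_{a^\star}\mathrm{KL}(\nu_{a^\star},\nu'_{a^\star})+m_u\mathrm{KL}(\nu_u,\nu'_u)\bigr]=(m_{a^\star}+m_u)\,I_{m_{a^\star}/(m_{a^\star}+m_u)}(\nu_{a^\star},\nu_u)
\end{equation*}
is solved by the standard exponential-family identity of \citet{garivier2016optimal} (the minimizer is the common parameter corresponding to the $m$-weighted mixture of $\nu_{a^\star}$ and $\nu_u$). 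Minimizing over $u\neq a^\star$ and supremizing over $\omega\in\Delta(V)$ reproduces $T^\star(\nu)^{-1}$, giving the claimed bound.

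The main obstacle is justifying the per-pull KL decomposition above: observing $E_t$ (or knowing $G$) is exactly what disentangles the edge-activation randomness from the reward randomness, thereby defeating the Bernoulli unidentifiability of \cref{prop:unidentifiability_bernoulli}. Once this decomposition (or its upper-bound variant in the known-$G$ Bernoulli case) is in hand, the rest of the argument is a line-by-line adaptation of the proof of \cref{thm:lb_general}.
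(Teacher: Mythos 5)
Your proposal is correct and follows essentially the same route as the paper: a transportation/Wald argument for the log-likelihood ratio of the informed observations, reduction of the per-pull divergence to $\sum_{u\in N_{out}(v)}G_{v,u}\,\mathrm{KL}(\nu_u,\nu_u')$, and the standard two-point optimization of \citet{garivier2016optimal} to recover $T^\star(\nu)$. The only (immaterial) difference is that you restrict to alternatives with $G'=G$ from the outset via the chain rule, whereas the paper carries the extra term $\mathrm{kl}(G_{v,u},G_{v,u}')$ and observes it vanishes at the optimum; your explicit data-processing remark for the known-$G$ Bernoulli subcase is a point the paper leaves implicit.
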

 See the proof in \cref{subsubsec:informed_setting_appendix}. Henceforth,  similar remarks from the uninformed case also apply here.

\section{\algoname{} ALGORITHM}\label{subsec:algorithm}
In this section we propose \algoname{}  (Track and Stop for Feedback Graphs), an algorithm inspired by Track and Stop (TaS, \cite{garivier2016optimal}) that is  asymptotically optimal as $\delta \to 0$ with respect to $T^\star(\nu)$ for both the \emph{informed} and \emph{uninformed} cases. The algorithm consists of: (1) the model estimation procedure and recommender rule; (2) the sampling rule, dictating which vertex to select at each time-step; (3) the stopping rule, defining when enough evidence has been collected to identify the best vertex with sufficient confidence, and therefore to stop the algorithm.
\subsection{Estimation Procedure and Recommender Rule}\label{subsec:estimation_procedure}
The algorithm maintains a maximum likelihood estimate $\hat \nu(t)= (\hat G(t),\hat \mu(t))$ of the model.
 Using these estimates we define the estimated optimal vertex at time $t$ as $\hat a_t=\argmax_a \hat \mu_a(t)$, and the estimated sub-optimality gap in $u\neq \hat a_t$ as $\hat \Delta_u(t) = \hat \mu_{\hat a_t}(t) - \hat \mu_u(t)$, and $\hat \Delta_{\hat a_t}(t)= \hat \Delta_{\min}(t) \coloneqq \min_{u\neq \hat a_t}\hat \Delta_{u}(t)$.  The recommender rule at the stopping time $\tau$ is defined as $\hat a_\tau = \argmax_{a\in V} \hat \mu_a(\tau)$. Observe that we differentiate   between the informed and the uninformed cases.

\paragraph{Informed case.}
In the informed case, the model parameters can simply be estimated as $\hat \mu_u(t) = 
\frac{1}{M_u(t)}\sum_{n=1}^t R_{n,u}$, with $M_u(t)=\sum_{v\in V} N_{v,u}(t)$ and $N_{v,u}(t) = N_{v,u}(t-1)+\mathbf{1}_{\{V_t=v, u\in E_t\}}$. If the graph is unknown, the estimator of $G$ is
$
\hat G_{v,u}(t)=  \frac{N_{v,u}(t)}{N_{v}(t)}.
$

\paragraph{Uninformed case.} In the uninformed case, we focus on continuous rewards. We employ the fact that observing zero reward has measure zero, and therefore one can define $N_{v,u}(t) = N_{v,u}(t-1)+\mathbf{1}_{\{V_t=v, Z_t\neq 0\}}$ to obtain an unbiased estimator of the number of times edge $(v,u)$ was activated. Henceforth, one can define the estimator  $\hat \mu(t),\hat G_{v,u}(t)$ as in the observable case.

\subsection{Sampling Rule}
Interestingly, to design an algorithm with minimal sample complexity, we can look at 
 the solution $\omega^\star = \arginf_{\omega\in \Delta(V)} T(\omega;\nu)$.
 
The solution $\omega^\star$ provides the best  proportion of  draws, that is, an algorithm selecting a vertex $u\in V$ with probability $\omega_u^\star$ matches the lower bound in \cref{thm:lb_general} and is therefore optimal with respect to $T^\star(\nu)$.  Therefore, an idea is to ensure that $N_t/t$ tracks $\omega^\star$, where $N_t$ is the visitation vector  $N(t)\coloneqq \begin{bmatrix}
    N_{1}(t) &\dots &N_K(t)
\end{bmatrix}^\top$.

However, the instance $\nu$ is initially unknown. Common algorithms in the best arm identification literature \citep{garivier2016optimal,kaufmann2016complexity}, or in best policy identification problems \citep{al2021navigating,russo2024multi},   track an estimated optimal allocation $\omega^\star(t)=\arginf_{\omega\in \Delta(V)} T(\omega;\hat \nu(t))$ using the current  estimate of the model $\hat \nu(t)=(\hat G(t),\{\hat \mu_u(t)\}_u)$ (see line \ref{algo:compute_omega_star_t} in \cref{alg:TaSFG}). This technique is also known as \emph{certainty-equivalence} principle \citep{jedra2020optimal}. 

To apply this  principle, the learner needs to at-least make sure to use a sampling rule guaranteeing that $\hat \nu(t) \to \nu$. Common techniques employ some form of \emph{forced exploration} to ensure convergence, for example by introducing uniform noise in the action selection process.  In general, there are two families of tracking procedures: the C-tracking and D-tracking rules introduced in \citep{garivier2016optimal}:
\begin{itemize}
    \item \emph{C-tracking}: compute  $\tilde \omega^\star(t)$, which is the $\ell_\infty$ projection  of $\omega^\star(t)$ on $\Delta_t(V)=\{\omega\in \Delta(V): \forall u \in V, \omega_u \geq \epsilon_t\}$ for some $\epsilon_t>0$. Then the action selection is $u_t = \argmin_{u\in V} N_{t}(u) - \sum_{n=1}^t \tilde \omega_{u}^\star(t)$. 
    \item \emph{D-tracking}: if there is a vertex $u$ with $N_{u}(t) \leq \sqrt{t}-K/2$ then choose $u_t=u$. Otherwise, choose the vertex $u_t =\argmin_{u\in V} N_{u}(t) - t \omega_u^\star(t)$.  Compared to C-tracking, this rule enjoys deterministic guarantees on the visitation frequency of order $\sqrt{t}$.
\end{itemize}
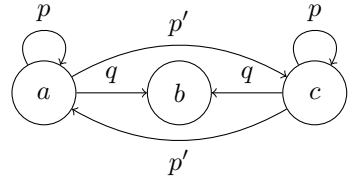
\begin{figure}[t]
    \centering

\begin{tikzpicture}[
  vertex/.style = {circle, draw, minimum size=0.85cm},
  loop/.style = {looseness=3, in=60, out=120, min distance=7mm},
  every edge/.style = {draw, thick}
]
\node[vertex] (A1) at (-1.8,0) {$a$};
\node[vertex] (B1) at (0,0) {$b$};
\node[vertex] (C1) at (1.8,0) {$c$};

\draw[->, loop] (A1) to node[above] {$p$} (A1);
\draw[->] (A1) to node[above] {$q$} (B1);
\draw[->, bend left] (A1) to node[above] {$p'$} (C1);

\draw[->, loop] (C1) to node[above] {$p$} (C1);
\draw[->] (C1) to node[above] {$q$} (B1);
\draw[->, bend left] (C1) to node[below] {$p'$} (A1);

\end{tikzpicture}

    \caption{Example of symmetric feedback graph where the solution set $C^\star(\nu)=\arginf_{w\in \Delta(V)} T(w;\nu)$ is not unique for $
    \mu_a=\mu_c$ and $a^\star(\mu)= b$.}
    \label{fig:example_equal_solutions}
\end{figure}
\paragraph{Uniqueness of the solution $\omega^\star$.}  These approaches guarantee that $N_t/t$ approximately track $\omega^\star(t)$, which ultimately should converge to $\omega^\star$. Unfortunately, for the problem studied in this manuscript, the optimal solution $\omega^\star$ \emph{may not be  unique}, i.e., the set $C^\star(\nu) = \arginf_{\omega\in \Delta(V)} T(\omega;\nu)$ may admit more than one optimal solution. This may be an issue if we want $N_t/t$ to track $\omega^\star$. An example  of graph with non-unique solution is the following one. 
\begin{example}[Multiple optimal allocations]
    Consider the example in \cref{fig:example_equal_solutions} with Gaussian rewards $\left(\mathcal{N}(\mu_u,1)\right)_{u\in\{a,b,c\}}$. This is an example of symmetric graph where vertex $b$ is optimal, and the solution set $C^\star(\nu) = \arginf_{\omega\in \Delta(V)} T(\omega;\nu)$ is actually a convex set. In fact, for any $x\in [0,1]$ the solution  $\omega(x)=\begin{bmatrix}x & 0 & 1-x
    \end{bmatrix}$ is optimal. As an example, for $p=p'=1/2,  q=1$ and $\mu_a=\mu_c=0, \mu_b=1$ we obtain $T^\star(\nu)=T(\omega;\nu)=6$ for any $\omega\in \{\omega(x): x\in [0,1]\}$.
\end{example}
In literature, some strategies have been proposed to deal with the aforementioned issue. \cite{russo2023sample}  regularize $T^\star(\omega;\nu)$ to make it strongly convex in $\omega$, and obtain a sample complexity upper bound that is asymptotically optimal in $\delta$ and the regularization parameter. In \citep{jedra2020optimal,degenne2019pure} they use an approach inspired by  \cite{Bonsall1963CB}'s Maximum theorem and the fact that $C^\star(\nu)$ is a convex set. Using this fact, they show that with an appropriate tracking rule it is possible to ensure $\lim_{t\to\infty} \inf_{\omega\in C^\star(\nu)}\left\|\omega-N(t)/t\right\|_\infty = 0$. 

 Based on this result, \cite{jedra2020optimal}  propose an averaged D-tracking  sampling rule for linear multi-armed bandit problem, while \cite{degenne2019pure}  opt for a C-tracking procedure claiming that D-tracking may fail to converge in classical bandit problems. Nonetheless, we can guarantee convergence using an averaged variant of D-tracking  similarly to \citep{jedra2020optimal}.
\begin{proposition}\label{prop:tracking}
    Let $S_t= \{u\in V: N_u(t) < \sqrt{t}-K/2\}$. The averaged D-tracking rule, defined as
    \begin{equation}
        V_t \in \begin{cases}
            \argmin_{u\in S_t} N_u(t) & S_t\neq \emptyset\\
            \argmin_{u\in V} N_u(t) - \sum_{n=1}^t \omega_u^\star(n)& \hbox{otherwise}
        \end{cases},
    \end{equation}
    ensures that
    $
   \lim_{t\to\infty}\inf_{\omega\in C^\star(\nu)}\| N(t)/t -\omega\|_\infty 
 \to 0 
    $ a.s.
\end{proposition}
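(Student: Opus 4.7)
The plan is to follow the template of the D-tracking convergence argument from \citep{garivier2016optimal, jedra2020optimal}, but with the wrinkle that the optimal allocation set $C^\star(\nu)$ need not be a singleton. The proof decomposes into four steps: (i) ensuring sufficient exploration, (ii) showing the estimated allocations concentrate on $C^\star(\nu)$, (iii) controlling the tracking error between $N(t)/t$ and the averaged target, and (iv) piecing these together via the triangle inequality.

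First I would verify that the forced-exploration clause of the rule guarantees $N_u(t)\ge \sqrt{t}-K$ for every $u\in V$ and $t$ sufficiently large. This follows because whenever some $u$ satisfies $N_u(t)<\sqrt t-K/2$ it is pulled deterministically, so the set $S_t$ cannot grow unboundedly. Since $N_v(t)\to\infty$ for every $v$, and because edges are observed with positive probability $G_{v,u}>0$ whenever $u\in N_{\rm out}(v)$, the Borel--Cantelli lemma gives $M_u(t)\to\infty$ a.s.\ for every observable $u$, so the MLE $\hat\nu(t)=(\hat G(t),\hat\mu(t))$ is strongly consistent under \cref{assump:nontrivial_problem}.

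Next I would invoke Berge's maximum theorem to argue that the set-valued map $\nu\mapsto C^\star(\nu)$ is upper hemicontinuous at the true model: the objective $T(\omega;\nu)^{-1}$ is jointly continuous in $(\omega,\nu)$ and $\Delta(V)$ is compact, so for any $\epsilon>0$ there exists (random) $t_0$ such that $\omega^\star(t)\in C^\star(\nu)+\epsilon B_\infty$ for all $t\geq t_0$ a.s. Since $C^\star(\nu)$ is convex (it is the argmin set of a convex function over a convex domain) and closed, the Cesàro average $\bar\omega(t)=\frac1t\sum_{n=1}^t \omega^\star(n)$ inherits the same property: $\inf_{\omega\in C^\star(\nu)}\|\bar\omega(t)-\omega\|_\infty\to 0$ a.s. This is where the averaging in the sampling rule becomes crucial, and it is the main conceptual obstacle, because it replaces pointwise convergence of $\omega^\star(t)$ (which may fail when $C^\star(\nu)$ is a continuum) with convergence of averages to the convex set.

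Finally I would establish the deterministic bound
\begin{equation*}
\max_{u\in V}\bigl|N_u(t)-\textstyle\sum_{n=1}^t \omega_u^\star(n)\bigr|\leq K(1+\sqrt{t})
\end{equation*}
by induction on $t$, arguing as in \citep[Lemma 7]{garivier2016optimal} adapted to the averaged variant (cf.\ \citep{jedra2020optimal}): during the forced-exploration phase the deficit grows by at most one per step, while outside of it the greedy choice $V_t\in\argmin_u (N_u(t)-\sum_{n\le t}\omega_u^\star(n))$ prevents any coordinate from drifting more than $K$ below the target. Dividing by $t$ yields $\|N(t)/t-\bar\omega(t)\|_\infty=O(t^{-1/2})\to 0$. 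Combining with step (iii) via
\begin{equation*}
\inf_{\omega\in C^\star(\nu)}\|N(t)/t-\omega\|_\infty \leq \|N(t)/t-\bar\omega(t)\|_\infty + \inf_{\omega\in C^\star(\nu)}\|\bar\omega(t)-\omega\|_\infty,
\end{equation*}
both terms vanish almost surely, completing the proof. The hardest part is the Cesàro-to-convex-set convergence in step (iii); once that is in place, the tracking and consistency arguments are standard adaptations of the bandit literature.
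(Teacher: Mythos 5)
Your proposal is correct, and steps (i)--(ii) and the Ces\`aro-to-convex-set argument coincide with the paper's proof (which likewise uses forced exploration for consistency, the Maximum theorem of \cref{thm:berge}, and convexity of $C^\star(\nu)$ to get $\inf_{\omega\in C^\star(\nu)}\|\bar\omega(t)-\omega\|_\infty\to 0$ after splitting off the first $t_0(\epsilon)$ terms of the average). Where you genuinely diverge is the final tracking step. The paper compares $N(t)/t$ directly to $\bar v(t)={\rm Proj}_{C^\star(\nu)}(\bar\omega^\star(t))$ and runs the $E_u(t)=N_u(t)-t\bar v_u(t)$ bookkeeping of \citet[Lemma 17]{garivier2016optimal}, showing that any pulled coordinate satisfies $E_u(t)\leq 4t\epsilon$ and concluding $\sup_u|E_u(t)|/t\leq 5(K-1)\epsilon$ eventually; the stochastic $\epsilon$-approximation and the combinatorial tracking argument are thus intertwined. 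You instead prove a purely deterministic bound $\max_u|N_u(t)-\sum_{n\leq t}\omega_u^\star(n)|\leq K(1+\sqrt t)$ for the cumulative-deficit rule (noting that the ``otherwise'' branch is exactly C-tracking applied to unprojected weights) and then conclude by the triangle inequality through $\bar\omega(t)$. This buys a clean separation of the deterministic tracking error, with an explicit $O(K/\sqrt t)$ rate, from the stochastic convergence of $\bar\omega(t)$ to $C^\star(\nu)$, and it sidesteps the projection $\bar v(t)$ entirely; the paper's route avoids having to re-derive a C-tracking-type lemma for the interleaved rule. The one place you must be careful is the induction for your deterministic bound: the deviation $d_u(t)=N_u(t)-\sum_{n\leq t}\omega_u^\star(n)$ increases only at pull times, is at most $0$ just before a greedy pull (since $\sum_u d_u(t)=0$ forces the argmin to be nonpositive), and is at most $\sqrt t$ just before a forced pull (since then $N_u(t)<\sqrt t-K/2$); the sum-to-zero identity then converts the one-sided bound $d_u(t)\leq 1+\sqrt t$ into the two-sided $(K-1)(1+\sqrt t)$ bound. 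With that verified, your argument is complete and valid.
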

The proof for the above proposition can be found in \cref{subsec:app_sampling_rule}.

\subsection{Stopping Rule and Sample Complexity Guarantees}
The stopping rule determines when enough evidence has been collected to determine the optimal action with a prescribed confidence level. The problem of determining when to stop can be framed as a statistical hypothesis testing problem \citep{chernoff1959sequential}, where we are testing between $K$ different hypotheses $({\cal H}_u: (\mu_u > \max_{v\neq u}\mu_v))_{u\in V}$.

We consider the following statistic in line \ref{algo:stopping_rule} of \cref{alg:TaSFG}
\begin{equation}\label{eq:statistic_stopping_rule}
L(t) = t T(N(t)/t; \hat \nu(t))^{-1},
\end{equation}
which is a Generalized Likelihood Ratio Test (GLRT), similarly as in \citep{garivier2016optimal}. Comparing with \cref{thm:lb_general}, one needs to stop as soon as  $L(t)\geq {\rm kl}(\delta,1-\delta) \sim \ln(1/\delta)$. However, to  account for the random fluctuations, a more natural threshold is $\beta(t,\delta)=\ln((1+\ln(t))/\delta)$.
Several thresholds have been proposed in literature \citep{magureanu2014lipschitz,garivier2016optimal}.
We employ the following threshold
\begin{equation}\label{eq:threshold}
    \beta(t,\delta)\coloneqq2  {\cal C}_{\rm exp}\left(\frac{\ln\left(\frac{K-1}{\delta}\right)}{2}\right) + 6\ln(1+\ln(t)),
\end{equation}
where ${\cal C}_{\rm exp}(x)\approx x+4\ln(1+x+\sqrt{2x})$ for $x\geq 5$. A precise definition of ${\cal C}_{\rm exp}(x)$ can be found in \cite[Theorem 7]{kaufmann2021mixture}, or in \cref{subsec:app_stopping_rule}. For this threshold, we obtain the following guarantee, proved in \cref{subsec:app_stopping_rule}.
\begin{proposition}\label{prop:threshold_error_rate}
    Using the threshold function defined in \cref{eq:threshold} guarantees that \algoname{} is $\delta$-PC, i.e.
    \[\mathbb{P}_\nu(\tau<\infty, \hat a_\tau \neq a^\star(\mu)) \leq \delta.\]
\end{proposition}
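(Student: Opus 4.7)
The goal is to prove $\mathbb{P}_\nu(\tau<\infty,\hat a_\tau\neq a^\star(\nu))\leq \delta$. I would adapt the standard GLRT error analysis for Track-and-Stop algorithms (see \cite{garivier2016optimal,kaufmann2021mixture}) to the feedback-graph setting. Two features are specific to the setting here: first, the effective number of observations at vertex $u$ is $M_u(t)$ rather than the number of pulls $N_u(t)$; second, the characteristic time from \cref{thm:lb_general} expresses the per-pair evidence through the generalized Jensen-Shannon divergence $I_\alpha$. The overall plan is (i) a union bound over the misidentified vertex, (ii) an upper bound on $L(t)$ by two arm-wise log-likelihood ratios against the true $\nu$, and (iii) a time-uniform mixture-martingale concentration argument.

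I would first decompose the error event as
\begin{equation*}
\mathbb{P}_\nu(\tau<\infty,\hat a_\tau\neq a^\star) \leq \sum_{b\neq a^\star}\mathbb{P}_\nu\bigl(\exists t\geq 1:\ L(t)\geq \beta(t,\delta),\ \hat a_t=b\bigr).
\end{equation*}
The key step is to control each term. Using the identity $t\hat m_u(t)=M_u(t)$ that follows from the definition of $\hat G(t)$, the statistic rewrites as $L(t)=\min_{u\neq \hat a_t}(M_{\hat a_t}(t)+M_u(t))\, I_{\alpha_{u,t}}(\hat\nu_{\hat a_t}(t),\hat\nu_u(t))$ with $\alpha_{u,t}=M_{\hat a_t}(t)/(M_{\hat a_t}(t)+M_u(t))$. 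For one-parameter exponential-family distributions with $\hat\mu_{\hat a_t}>\hat\mu_u$ (always true when $\hat a_t$ is the empirical maximum), the variational form
\begin{equation*}
(M_{\hat a_t}+M_u)I_\alpha(\hat\nu_{\hat a_t},\hat\nu_u)=\inf_{\lambda_{\hat a_t}\leq \lambda_u}\bigl[M_{\hat a_t}d(\hat\mu_{\hat a_t},\lambda_{\hat a_t})+M_u d(\hat\mu_u,\lambda_u)\bigr]
\end{equation*}
holds. On $\{\hat a_t=b\}$ with $b\neq a^\star$, I instantiate $u=a^\star$ in the outer $\min$ and pick the feasible point $(\lambda_b,\lambda_{a^\star})=(\mu_b,\mu_{a^\star})$, which satisfies $\lambda_b\leq\lambda_{a^\star}$ because $\mu_b<\mu_{a^\star}$ by definition of $a^\star$. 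This yields the fundamental upper bound
\begin{equation*}
L(t)\leq M_b(t)\, d(\hat\mu_b(t),\mu_b)+M_{a^\star}(t)\, d(\hat\mu_{a^\star}(t),\mu_{a^\star}).
\end{equation*}

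I would then invoke the time-uniform mixture-martingale concentration inequality of \cite[Thm.~7]{kaufmann2021mixture} jointly on the two arms $b$ and $a^\star$: the function $\mathcal{C}_{\rm exp}$ and the $6\ln(1+\ln t)$ slack in $\beta(t,\delta)$ are precisely the constants for which, with $x=\ln((K-1)/\delta)$,
\begin{equation*}
\mathbb{P}_\nu\bigl(\exists t:\ M_b(t)d(\hat\mu_b(t),\mu_b)+M_{a^\star}(t)d(\hat\mu_{a^\star}(t),\mu_{a^\star})\geq \beta(t,\delta)\bigr)\leq \frac{\delta}{K-1}.
\end{equation*}
Summing over the $K-1$ wrong choices of $b$ concludes the proof.

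The main obstacle is the upper bound in Step 2: one must carefully exploit the variational form of $I_\alpha$ and verify that the true means lie on the correct side of the active inequality constraint. The direction of that constraint is dictated by which vertex is \emph{empirically} best at time $t$ rather than which vertex is \emph{actually} best, which is precisely why we first condition on $\{\hat a_t=b\}$ before plugging in the truth $\nu$. The rest is bookkeeping: matching the definition of $\beta(t,\delta)$ to the mixture-martingale inequality, and noting that the estimators of \cref{subsec:estimation_procedure} render $M_u(t)\, d(\hat\mu_u(t),\mu_u)$ a standard self-normalized exponential-family log-likelihood ratio in both the informed and uninformed cases.
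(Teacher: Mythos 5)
Your proposal is correct and follows essentially the same route as the paper's proof: rewrite $L(t)$ as the GLR statistic $\min_{u\neq\hat a_t}\inf_{\lambda:\lambda_u\geq\lambda_{\hat a_t}}\sum_{v}M_v(t)\,{\rm KL}(\hat\nu_v(t),\lambda_v)$ via the identity $t\,\hat m_u(t)=M_u(t)$, instantiate $u=a^\star$ and plug in the (feasible) true model to bound $L(t)$ by the two-arm deviation $M_b(t)d(\hat\mu_b(t),\mu_b)+M_{a^\star}(t)d(\hat\mu_{a^\star}(t),\mu_{a^\star})$, then apply \cite[Theorem 7]{kaufmann2021mixture} on $K-1$ pairs with a union bound. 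The only detail you gloss over, which the paper spells out, is converting the $3\sum_{v\in\{b,a^\star\}}\ln(1+\ln M_v(t))$ slack of that theorem into the $6\ln(1+\ln t)$ term of $\beta(t,\delta)$ via Jensen's inequality and $M_v(t)\leq t$; this is indeed just bookkeeping.
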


Additionally, we have the following  sample complexity optimality guarantees, proved in \cref{subsec:algorithm_as_sample_complexity_app}.
\begin{theorem}\label{thm:sample_complexity}
    For all $\delta\in (0,1/2)$, \algoname{} (1) terminates a.s. $\mathbb{P}_\nu(\tau<\infty)=1$; (2) is a.s. asymptotically optimal $\mathbb{P}_\nu\left(\limsup_{\delta \to 0} \frac{\tau}{\ln(1/\delta)} \leq T^\star(\nu)\right) = 1$; (3) is optimal in expectation $\limsup_{\delta \to 0} \frac{\mathbb{E}_\nu\left[ \tau\right]}{\ln(1/\delta)} \leq T^\star(\nu)$.
\end{theorem}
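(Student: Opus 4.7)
The argument follows the standard Track-and-Stop template of \citep{garivier2016optimal}, adapted to feedback graphs with stochastic edge activations. Four ingredients drive the proof: (i) strong consistency of the model estimators $\hat\nu(t)\to\nu$ almost surely, (ii) the tracking guarantee of \cref{prop:tracking}, giving $\inf_{\omega\in C^\star(\nu)}\|N(t)/t-\omega\|_\infty\to 0$ a.s., (iii) joint continuity of the map $(\omega,\nu')\mapsto T(\omega;\nu')^{-1}$ on a neighbourhood of $C^\star(\nu)\times\{\nu\}$, and (iv) the stopping threshold $\beta(t,\delta)$ of \cref{eq:threshold}, which satisfies $\beta(t,\delta)/t\to 0$ and $\beta(t,\delta)=\ln(1/\delta)(1+o_\delta(1))$.

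For (i), the averaged D-tracking rule forces $N_u(t)\geq\sqrt{t}-K/2$ for every $u\in V$ with finite delay, which combined with the sub-Gaussian/exponential-family concentration of the reward averages gives $\hat\mu_u(t)\to\mu_u$ a.s.; in both the informed and (continuous-reward) uninformed cases, each edge counter $N_{v,u}(t)$ is a bounded martingale difference sum conditional on $V_t=v$, so $\hat G_{v,u}(t)\to G_{v,u}$ a.s. by the strong law. Combining (i)--(iii) with Berge's Maximum Theorem (needed because $C^\star(\nu)$ may be non-singleton, as in \cref{fig:example_equal_solutions}) yields $L(t)/t=T(N(t)/t;\hat\nu(t))^{-1}\to T^\star(\nu)^{-1}>0$ a.s. Since $\beta(t,\delta)/t\to 0$, the stopping condition $L(t)\geq\beta(t,\delta)$ is eventually met, establishing (1). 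Part (2) follows by inverting: on the event that $t\geq t_0(\epsilon)$, we have $\tau\leq t_0+T^\star(\nu)(1+\epsilon)\beta(\tau,\delta)$, which combined with $\beta(t,\delta)\sim\ln(1/\delta)$ gives $\tau/\ln(1/\delta)\leq T^\star(\nu)(1+\epsilon)$ eventually, a.s., and taking $\epsilon\to 0$ concludes.

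For (3), almost sure convergence is insufficient; we need a deterministic bound valid on a high-probability event. Following \citep{garivier2016optimal}, we define a good event $\mathcal{E}_T=\{\forall t\geq T:\ \|\hat\nu(t)-\nu\|_\infty\leq\epsilon\text{ and }\inf_{\omega\in C^\star(\nu)}\|N(t)/t-\omega\|_\infty\leq\epsilon\}$. On $\mathcal{E}_T$, continuity yields a deterministic bound $\tau\leq\max\{T,\,T^\star(\nu)(1+\epsilon')\ln(1/\delta)(1+o_\delta(1))\}$. The complement probability $\mathbb{P}_\nu(\mathcal{E}_T^c)$ is controlled by a union bound over $t\geq T$ of exponential concentration inequalities for $\hat\mu(t)$ and $\hat G(t)$ (the latter using the forced-exploration lower bound $N_u(t)\geq\sqrt{t}-K/2$); a standard Borel--Cantelli--type argument shows $\sum_T\mathbb{P}_\nu(\mathcal{E}_T^c)<\infty$ and in fact $\mathbb{E}_\nu[\tau\,\mathbf{1}_{\mathcal{E}_T^c}]$ is negligible for $T=T(\delta)$ chosen to grow sub-logarithmically in $1/\delta$. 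Summing the two contributions and taking $\epsilon\to 0$ yields the expectation bound.

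The main obstacle lies in making step (iii) rigorous in the presence of a non-unique optimal allocation: one must argue that $\omega^\star(t)=\arginf_\omega T(\omega;\hat\nu(t))$ is a measurable selection whose distance to $C^\star(\nu)$ vanishes (upper hemicontinuity of the argmin correspondence from Berge's theorem), and that $T(\cdot;\cdot)^{-1}$ is jointly continuous at points of $C^\star(\nu)\times\{\nu\}$. A secondary but delicate point is that in the uninformed setting, the edge estimator $\hat G_{v,u}(t)$ uses $\mathbf{1}_{\{Z_{t,u}\neq 0\}}$ as a surrogate for $\mathbf{1}_{\{u\in E_t\}}$; this coincidence holds almost surely for continuous rewards, and one must carefully separate the randomness of edge activations from that of rewards when writing the concentration bounds that control $\mathbb{P}_\nu(\mathcal{E}_T^c)$.
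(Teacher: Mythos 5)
Your proposal is correct and follows essentially the same route as the paper: parts (1)--(2) via almost-sure tracking convergence, consistency of $\hat\nu(t)$, and joint continuity of $T(\cdot;\cdot)^{-1}$ (using Berge's Maximum Theorem to handle the non-singleton $C^\star(\nu)$), and part (3) via the Garivier--Kaufmann good-event construction $\mathcal{E}_T$ with exponential concentration controlling $\mathbb{P}_\nu(\mathcal{E}_T^c)$ and a summable tail. The only cosmetic differences are that the paper indexes the good event as $\bigcap_{t=T^{1/4}}^{T}$ and solves the implicit stopping inequality with Lemma 18 of \citet{garivier2016optimal}, and its dominant term $T_0$ grows logarithmically (not sub-logarithmically) in $1/\delta$, but these do not affect the validity of your argument.
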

\paragraph{Heuristic \algoname{}.} Now consider \algoname{} with the heuristic solution $\omega_{\rm heur}(t) \coloneqq \frac{\hat G(t)\hat\Delta^{-2}(t)}{\|\hat G(t)\hat\Delta^{-2}(t)\|_1}$ (where $\hat G(t)$ is the estimated graph at time $t$, and $\hat \Delta^{-2}(t) = (1/\hat\Delta_u^2(t))_{u\in V}$). That is, we let $\omega^\star(t)=\omega_{\rm heur}(t)$ in line ~\ref{algo:compute_omega_star_t} of \cref{alg:TaSFG}. Then we obtain the following guarantees, proved in \cref{subsec:algorithm_as_sample_complexity_app}.
\begin{corollary}\label{thm:sample_complexity_heuristic}
     \algoname{} with $\omega^\star(t) = \omega_{\rm heur}(t)$ is $\delta$-PC, and guarantees that (1) the algorithm stops a.s.; (2) $\mathbb{P}_\nu\left(\limsup_{\delta \to 0} \frac{\tau}{\ln(1/\delta)} \leq T(\omega_{\rm heur}; \nu)\right)=1$ and (3) $\limsup_{\delta \to 0} \frac{\mathbb{E}_\nu[\tau]}{\ln(1/\delta)} \leq T(\omega_{\rm heur};\nu)$.
\end{corollary}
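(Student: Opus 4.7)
The plan is to follow the three-part proof scheme used for \cref{thm:sample_complexity}, substituting $\omega_{\rm heur}$ for $\omega^\star$ throughout. The key structural advantage in the heuristic setting is that $\omega_{\rm heur}$ is a \emph{unique} continuous function of the model $\nu=(G,\mu)$, namely $\omega_{\rm heur} = G\Delta^{-2}/\|G\Delta^{-2}\|_1$, whereas $C^\star(\nu)=\arginf_\omega T(\omega;\nu)$ can be set-valued. This removes the need for the averaged D-tracking machinery of \cref{prop:tracking} and reduces the tracking argument to the original one of \cite{garivier2016optimal}.

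The $\delta$-PC guarantee is immediate from \cref{prop:threshold_error_rate}: its proof depends only on the stopping rule and the threshold $\beta(t,\delta)$ in \cref{eq:threshold}, not on which allocation is tracked. Hence we inherit $\mathbb{P}_\nu(\tau<\infty,\hat a_\tau\neq a^\star)\leq \delta$ verbatim. For the sample complexity bound, the first step is consistency: the forced-exploration clause of the D-tracking rule ensures $N_u(t)\geq \sqrt{t}-K/2$ eventually for every $u$, so by the strong law of large numbers $\hat \mu(t)\to\mu$ and $\hat G(t)\to G$ almost surely. By continuity, $\omega_{\rm heur}(t)\to \omega_{\rm heur}$ a.s., and because the target has a single limit point, the standard (non-averaged) D-tracking analysis yields $N(t)/t\to \omega_{\rm heur}$ a.s.

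Given this tracking convergence, continuity of $T(\cdot;\cdot)^{-1}$ at $(\omega_{\rm heur},\nu)$ implies $L(t)/t=T(N(t)/t;\hat \nu(t))^{-1}\to T(\omega_{\rm heur};\nu)^{-1}$ a.s. The stopping rule fires when $L(t)\geq \beta(t,\delta)$; combined with $\beta(\tau,\delta)=\ln(1/\delta)(1+o(1))$ as $\delta\to 0$, this delivers almost-sure finiteness of $\tau$ and $\limsup_{\delta\to 0}\tau/\ln(1/\delta)\leq T(\omega_{\rm heur};\nu)$, which is claim (2). The bound in expectation (claim (3)) follows from the same truncation and uniform-integrability argument as in the proof of \cref{thm:sample_complexity}(3): one controls the tails of $\tau$ by bounding, for each $\epsilon>0$, the probability that $L(t)$ stays far below $t(T(\omega_{\rm heur};\nu)^{-1}-\epsilon)$, using concentration of $\hat \nu(t)$ together with the tracking rate.

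The main subtlety is verifying continuity of $(\mu,G)\mapsto \omega_{\rm heur}$ around the true model. The formula involves $\Delta^{-2}$, which is singular at $\Delta_{a^\star}=0$, so one must adopt the conventional replacement $\Delta_{a^\star}^{-2}\coloneqq \Delta_{\min}^{-2}$ (as is standard since \cite{garivier2016optimal}) and check that the resulting map is Lipschitz on a neighborhood of the true model where $a^\star$ is correctly identified. Once this is settled, together with the local smoothness of the inverse characteristic time $T(\cdot;\cdot)^{-1}$ away from the degenerate allocations (which holds here because $\omega_{\rm heur}$ is bounded away from the boundary whenever all $\Delta_u$ are finite), the rest of the argument is a direct transcription of the proof of \cref{thm:sample_complexity}.
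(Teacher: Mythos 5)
Your proposal is correct and follows essentially the same route as the paper: the $\delta$-PC guarantee is inherited from \cref{prop:threshold_error_rate} (which is sampling-rule agnostic), the tracking step reduces to the set-valued argument with $C^\star(\nu)$ replaced by the singleton $\{\omega_{\rm heur}\}$, and parts (1)--(3) then follow by rerunning the proof of \cref{thm:sample_complexity} with $T(\omega_{\rm heur};\nu)$ in place of $T^\star(\nu)$; this is exactly what the paper does via \cref{prop::tracking_heur}. The one point where you deviate is the tracking rule itself: you propose to drop the averaged D-tracking of \cref{prop:tracking} in favor of the plain rule of \cite{garivier2016optimal}, but the corollary concerns \algoname{} as defined, which still selects $V_t$ via $\argmin_u N_u(t)-\sum_{n=1}^t \omega_{{\rm heur},u}(n)$; you should either keep that rule and observe that the Ces\`aro average of the a.s.\ convergent sequence $(\omega_{\rm heur}(n))_n$ converges to the same limit $\omega_{\rm heur}$ (so the singleton specialization of \cref{prop::tracking_general} applies verbatim), or explicitly state that you are analyzing a modified sampling rule. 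Your explicit check of continuity of $(\mu,G)\mapsto \omega_{\rm heur}$ under the convention $\Delta_{a^\star}=\Delta_{\min}$ is a detail the paper leaves implicit and is a welcome addition rather than a divergence.
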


\begin{algorithm}[t]
   \caption{\algoname{} (TaS for Feedback Graphs)}
   \label{alg:TaSFG}
\begin{algorithmic}[1]
   \STATE {\bfseries Input:} confidence $\delta$.
     \STATE Set $t\gets 1$
   \WHILE{$L(t) <\beta(\delta,t)$} \label{algo:stopping_rule}
   \STATE Compute $\omega^\star(t)=\arginf_{\omega\in \Delta(V)}T(\omega;\hat\nu(t))$. \label{algo:compute_omega_star_t}
   \STATE Select $V_t$ according to the D-tracking rule in \cref{prop:tracking} and observe  $Z_t$.
   \STATE Update statistics $N(t),M(t),\hat \mu(t), \hat G(t)$ as explained in \cref{subsec:estimation_procedure} and set $t\gets t+1$.
   \ENDWHILE
   \STATE {\bf Return} $\hat{a}_{\tau} = \argmax_{a\in V} \hat\mu_a(\tau)$
\end{algorithmic}
\end{algorithm}
\begin{figure*}[t]
    \centering
    \includegraphics[width=\linewidth]{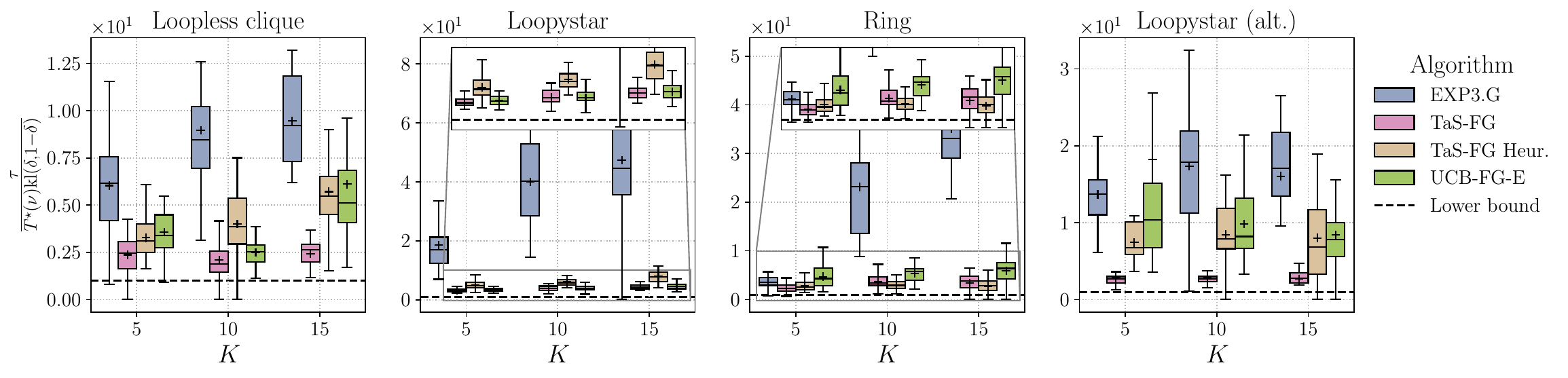}
    \caption{Box plots of the normalized sample complexity $\frac{\tau}{T^\star(\nu){\rm kl}(\delta,1-\delta)}$ for $\delta=e^{-7}$ over $100$ seeds. Boxes indicate the interquartile range, while the median   and mean values are, respectively, the solid  line and the $+$  sign in black.}
    \label{fig:boxplot_results_summary}
\end{figure*}
\section{NUMERICAL RESULTS}
In this section, we present the numerical results  of \algoname{} and other algorithms on various graph configurations, with varying graph sizes.
Due to lack of space, we only present results for $\delta\approx10^{-3}$, and  we refer the reader to \cref{sec:app_numerical_results} for more numerical results, including evaluation on different values of $\delta$.

\paragraph{Algorithms.} We compare \algoname{} to heuristic \algoname{} (i.e., \algoname{} with the heuristic solution proposed in \cref{subsec:heuristic_sol}) and to {\sc EXP3.G} \citep{alon2015online}, an algorithm for regret minimization in the adversarial case (which   is similar to most algorithms designed to handle feedback graphs). We also compare with two variants of {\sc UCB}: (1) {\sc UCB-FG-E}, which acts greedily with respect to the upper confidence bound of $\left(\hat G(t)\hat \mu(t)\right)$; and (2) {\sc UCB-FG-V}, which selects $\argmax_v \hat G_{v, \hat a^{\rm ucb}(t)}^{\rm ucb}(t)$, where $\hat G^{\rm ucb}(t), \hat a^{\rm ucb}(t)$ are, respectively, the UCB estimates of $G$ and $a^\star$ at time $t$.
 In our experiments, we used the same stopping rule for all algorithms, as the guarantees of \cref{prop:threshold_error_rate} hold for any sampling rule. However, we observed that the rate $\beta(t,\delta)$ given in \cref{eq:threshold} is relatively conservative. Therefore, we opted for a less conservative threshold that still achieves satisfactory error rates, that is $\beta(t,\delta)=\ln(1/\delta)+3 \ln(1+2\ln(t))$.

\paragraph{Results.} The main sample complexity results are depicted in \cref{fig:boxplot_results_summary} (results are omitted for {\sc UCB-FG-V} due to its much larger sample complexity; we report those in \cref{sec:app_numerical_results}). We evaluated these algorithms on several graphs: {\bf (1)} the loopystar (\cref{fig:loopy_star_graph}) with $(p,q,r)=(1/5,1/4,1/4)$ and {\bf (2)} an alternative version (denoted by `Loopystar (alt.)' in \cref{fig:boxplot_results_summary}) where $\mu_1=1$ is optimal and $\mu_u=0.5$ for all other vertices, with $(p,q,r)=(0,1/4, 1/[8(K-1)])$; {\bf (3)} the ring and {\bf (4)} loopless clique graphs (see, respectively, \cref{fig:ring_graph_details}  and \cref{fig:looplessclique_graph_details} in \cref{sec:app_numerical_results} for more details). Overall, we observe the good performance of \algoname{} and heuristic \algoname{}, while {\sc EXP3.G}  is not as performant.
Notably,  {\sc UCB-FG-E} also seems to achieve good sample complexity, comparable to heuristic \algoname{}, although it may not perform as well when $(G\mu)_{a^\star}$ is small in comparison to that of other vertices.
\section{CONCLUSIONS}
In this paper we characterized the sample complexity of identifying the best action in an online learning problem with a feedback graph. In this setting, the graph dictates what feedback the learner observes, with the additional complexity that the graph may be unknown and have stochastic activations (a.k.a. \emph{uninformed setting}). For such a setting, we showed that for Bernoulli rewards the best action remains unidentifiable. In contrast, for continuous rewards, we derived an instance-specific lower bound applicable to all $\delta$-PC algorithms, a result that also extends to the informed setting where the graph or activations are known.
Building on this lower bound, we introduced \algoname{}, an algorithm achieving asymptotically optimal sample complexity in both  settings, for which we demonstrated its efficiency numerically on several graph configurations.

\section*{Acknowledgments}
The authors are pleased to acknowledge that the computational work reported on in this paper was
performed on the Shared Computing Cluster, which is administered by Boston University’s Research
Computing Services.

\bibliographystyle{apalike}
\bibliography{ref}
\section*{Checklist}

 \begin{enumerate}

 \item For all models and algorithms presented, check if you include:
 \begin{enumerate}
   \item A clear description of the mathematical setting, assumptions, algorithm, and/or model. [{\color{blue}Yes}], we present a description of the setting, as well as a description of the assumptions (see also \cref{subsec:bai_setting} and \cref{assump:nontrivial_problem}).
   \item An analysis of the properties and complexity (time, space, sample size) of any algorithm. [{\color{blue}Yes}], we provide a sample complexity analysis of the algorithm in \cref{subsec:algorithm}.
   \item (Optional) Anonymized source code, with specification of all dependencies, including external libraries. [{\color{blue}Yes}], in the appended {\tt README.md} file we included a description of all the needed dependecies as well as the anonymized source code.
 \end{enumerate}

 \item For any theoretical claim, check if you include:
 \begin{enumerate}
   \item Statements of the full set of assumptions of all theoretical results. [{\color{blue}Yes}], we include in  our results the full set of assumptions.
   \item Complete proofs of all theoretical results. [{\color{blue}Yes}], we provide in the appendix a complete proofs of all the theoretical results presented.
   \item Clear explanations of any assumptions. [{\color{blue} Yes}], we provide in \cref{subsec:bai_setting} an explanation of all the assumptions used.     
 \end{enumerate}

 \item For all figures and tables that present empirical results, check if you include:
 \begin{enumerate}
   \item The code, data, and instructions needed to reproduce the main experimental results (either in the supplemental material or as a URL). [{\color{blue} Yes}], in the appended {\tt README.md} file we describe how to reproduce the main experimental results.
   \item All the training details (e.g., data splits, hyperparameters, how they were chosen). [{\color{blue} Yes}], in \cref{sec:app_numerical_results} we describe the training details.
         \item A clear definition of the specific measure or statistics and error bars (e.g., with respect to the random seed after running experiments multiple times). [{\color{blue} Yes}], we provide a measure of variability in our results, and provide its definition .
         \item A description of the computing infrastructure used. (e.g., type of GPUs, internal cluster, or cloud provider). [{\color{blue} Yes}], in \cref{sec:app_numerical_results} we provide a description of the computing infrastructure used.
 \end{enumerate}

 \item If you are using existing assets (e.g., code, data, models) or curating/releasing new assets, check if you include:
 \begin{enumerate}
   \item Citations of the creator If your work uses existing assets. [{\color{blue} Yes}], for all libraries that we used we cited those libraries appropriately (see also \cref{sec:app_numerical_results}).
   \item The license information of the assets, if applicable. [{\color{blue} Yes}], we provide license information of the assets used, as well as the license information of our code in the appended material.
   \item New assets either in the supplemental material or as a URL, if applicable. [{\color{blue} Yes}], we provide new code to solve the optimization problems, and run the numerical results presented in this manuscript.
   \item Information about consent from data providers/curators. [{\color{gray} Not Applicable}], we use assets for which consent is not needed.
   \item Discussion of sensible content if applicable, e.g., personally identifiable information or offensive content. [{\color{gray} Not Applicable}], we not use sensible content.
 \end{enumerate}

 \item If you used crowdsourcing or conducted research with human subjects, check if you include:
 \begin{enumerate}
   \item The full text of instructions given to participants and screenshots. [{\color{gray} Not Applicable}], we did not use crowdsourcing nor  conducted research with human subjects.
   \item Descriptions of potential participant risks, with links to Institutional Review Board (IRB) approvals if applicable. [{\color{gray} Not Applicable}], we did not use crowdsourcing nor  conducted research with human subjects.
   \item The estimated hourly wage paid to participants and the total amount spent on participant compensation. [{\color{gray} Not Applicable}], we did not use crowdsourcing nor  conducted research with human subjects.
 \end{enumerate}

 \end{enumerate}

\onecolumn
\appendix

\aistatstitleappendix{Pure Exploration with Feedback Graphs\\(Supplementary Material)}

\tableofcontents
\addcontentsline{toc}{section}{Supplementary Material}
\newpage
\section{Additional Numerical Results}\label{sec:app_numerical_results}
In this section of the appendix we describe the graphs used in the numerical results; the details of the algorithms used and exhibits additional numerical results.

\subsection{Graphs Details}
Here we briefly describe the graphs used in the numerical results. Also refer to the code for more details.

\begin{figure}[h]
    \centering
     \resizebox{0.3\textwidth}{!}{ 
\begin{minipage}[t]{0.33\textwidth}
\centering
\begin{tikzpicture}[
  vertex/.style = {circle, draw, minimum size=0.5cm},
  loop/.style = {looseness=3, in=60, out=120, min distance=7mm},
  every edge/.style = {draw, thick}
]
\node[vertex] (A1) at (0,0) {$\mu_1$};
\node[vertex] (B1) at (2.5,-0.5) {$\mu_5$};
\node[vertex] (C1) at (1.25,-1.5) {$\mu_4$};
\node[vertex] (D1) at (-1.25,-1.5) {$\mu_3$};
\node[vertex] (E1) at (-2.5,-0.5) {$\mu_2$};

\draw[->] (A1) -- node[above] {$p$}  (B1);
\draw[->] (A1) -- node[right] {$r$}  (C1);
\draw[->] (A1) -- node[left] {$r$}  (D1);
\draw[->] (A1) -- node[above] {$r$}  (E1);

\draw[->, loop] (A1) to node[above] {$q$}  (A1);
\draw[->, loop] (B1) to [out=240,in=300] node[below] {$1-p$}  (B1);
\draw[->, loop] (C1) to [out=240,in=300]  node[below] {$(1-2p)^+$} (C1);
\draw[->, loop] (D1) to [out=240,in=300] node[below] {$(1-2p)^+$}  (D1);
\draw[->, loop] (E1) to [out=240,in=300] node[below] {$(1-2p)^+$}  (E1);
\end{tikzpicture}
    
    \caption{Loopy star graph.}
    \label{fig:loopy_star_graph_app}
\end{minipage}
}
\hfill
\resizebox{0.3\textwidth}{!}{ 
\begin{minipage}[t]{0.33\textwidth}
\centering
\begin{tikzpicture}[
  vertex/.style = {circle, draw, minimum size=0.7cm},
  loop/.style = {looseness=3, in=60, out=120, min distance=7mm},
  every edge/.style = {draw, thick}
]
\node[vertex] (A1) at (0,0) {$\mu_1$};
\node[vertex] (B1) at (2.25,-2) {$\mu_2$};
\node[vertex] (C1) at (0, -4) {$\mu_3$};
\node[vertex] (D1) at (-2.25,-2) {$\mu_4$};

\draw[->, bend right=20] (A1) to node[left] {$p$} (B1);
\draw[->, bend right=20] (A1) to node[left] {$1-p$} (D1);
\draw[->, bend right=20] (B1) to node[right] {$1-p$} (A1);
\draw[->, bend right=20] (B1) to node[left] {$p$} (C1);
\draw[->, bend right=20] (C1) to node[right] {$p$} (D1);
\draw[->, bend right=20] (C1) to node[right] {$1-p$} (B1);

\draw[->, bend right=20] (D1) to node[right] {$p$} (A1);
\draw[->, bend right=20] (D1) to node[left] {$1-p$} (C1);
\end{tikzpicture}

    \caption{Ring graph.}
    \label{fig:ring_graph_details}
\end{minipage}}
\hfill
\resizebox{0.3\textwidth}{!}{ 
\begin{minipage}[t]{0.33\textwidth}
\centering

\begin{tikzpicture}[
  vertex/.style = {circle, draw, minimum size=0.7cm},
  loop/.style = {looseness=3, in=60, out=120, min distance=7mm},
  every edge/.style = {draw, thick}
]
\node[vertex] (A1) at (0,0) {$\mu_1$};
\node[vertex] (B1) at (2.25,-2) {$\mu_2$};
\node[vertex] (C1) at (0, -4) {$\mu_3$};
\node[vertex] (D1) at (-2.25,-2) {$\mu_4$};

\draw[->, bend right=20] (A1) to (B1);
\draw[->, bend right=20] (A1) to (D1);
\draw[->, bend right=20] (B1) to  (A1);
\draw[->, bend right=20] (B1) to (C1);
\draw[->, bend right=20] (C1) to  (D1);
\draw[->, bend right=20] (C1) to  (B1);

\draw[->, bend right=20] (D1) to  (A1);
\draw[->, bend right=20] (D1) to (C1);

\draw[->, bend left=10] (D1) to  (B1);
\draw[->, bend left=10] (B1) to  (D1);

\draw[->, bend left=10] (A1) to  (C1);
\draw[->, bend left=10] (C1) to   (A1);
\end{tikzpicture}

    \caption{Loopless clique graph. }
    \label{fig:looplessclique_graph_details}
    \end{minipage}}
\end{figure}
\subsubsection{Loopy star graph}  
The loopy star graph (\cref{fig:loopy_star_graph_app}) is the only graph with self-loops in our experiments. However, it depends on several parameter $(p,q,r)$ that affect the underlying topology. The rewards are Gaussianly distributed, with variance $1$. The best arm has average reward $1$, while sub-optimal arms have average reward $0.5$.
\begin{figure}[h]
    \centering
    \includegraphics[width=.7\linewidth]{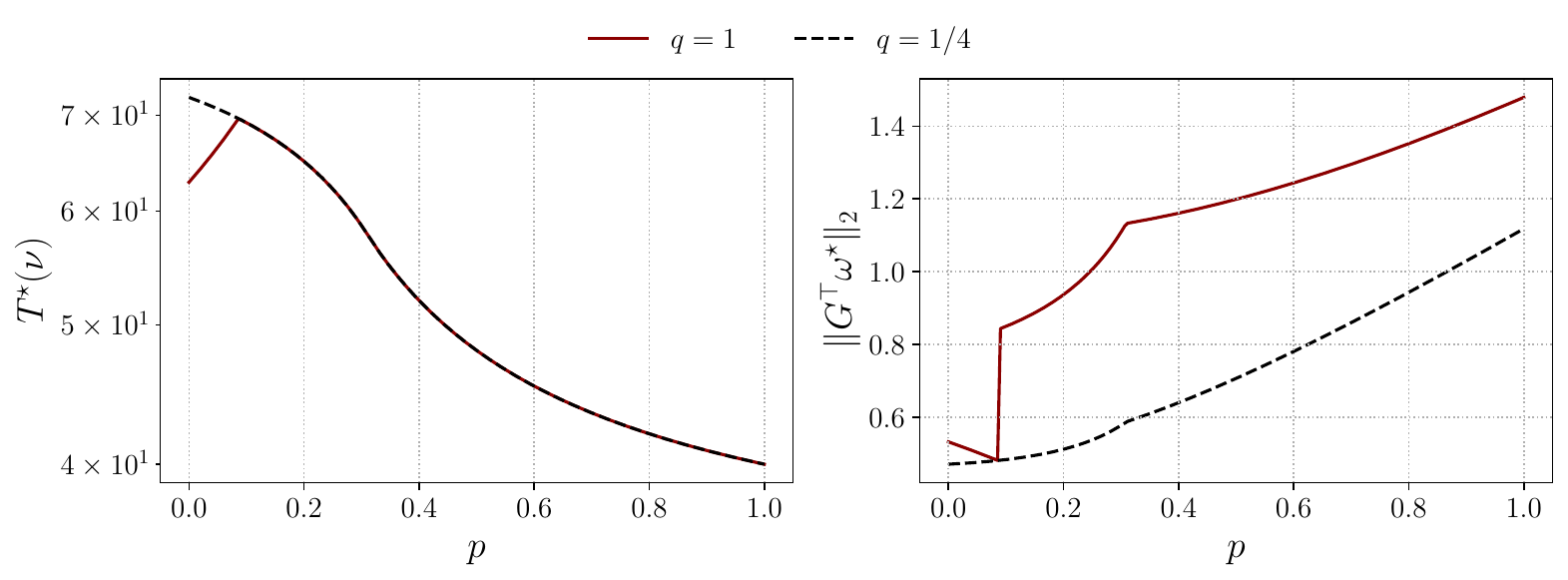}
    \caption{On the left: characteristic time of the loopy star graph with $K=5,r=0.5$ for different values of $p,q$ (the best arm is $v=5$). On the right: plot of $\|m^\star\|$ as a function of $\omega^\star$.}
    \label{fig:loopystar_characteristic_time_app}

    \includegraphics[width=0.4\linewidth]{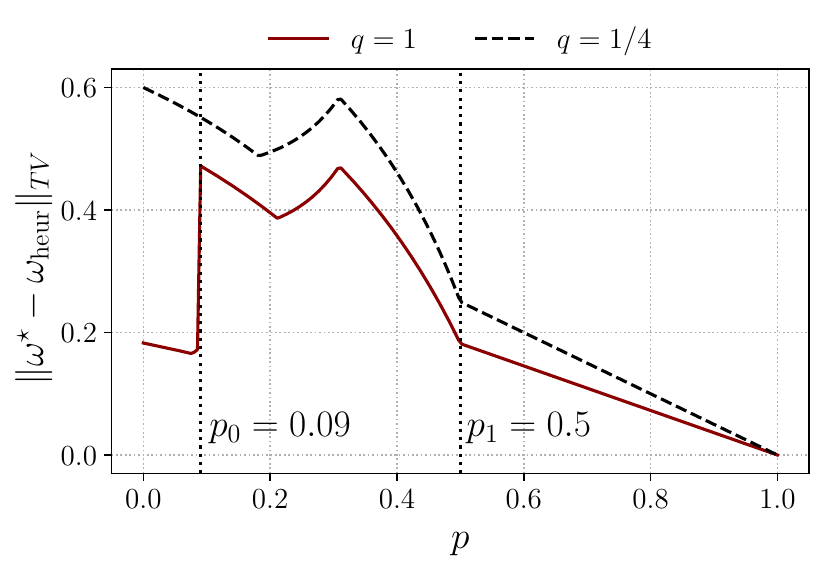}
    \caption{Difference between $\omega^\star$ and $\omega_{\rm heur}$ in the loopy star graph (same setting as in \cref{fig:loopystar_characteristic_time_app}).}
    \label{fig:approx_sol_loopystar}
\end{figure}

We simulated two settings:
\begin{enumerate}
    \item Main setting: we set $p=1/5, q=1/4, r=1/5$. Hence, self-loops may bring more information. The best arm is $v=5$. In \cref{fig:loopystar_characteristic_time_app} and \cref{fig:approx_sol_loopystar} we depict $T^\star(\nu),\|G^\top \omega^\star\|$ and the difference $\|\omega^\star - \omega_{\rm heur}\|$ for this setting. Notably, for increasing values of $p$ the approximate solution $\omega_{\rm heur}$ converges to the optimal solution.
    \item Alternative setting: we set $p=0, q=1/4, r=\frac{1-2q}{4(K-1)}$. Hence, it is not worth for the agent to choose $v=1$, and the optimal arm is $v=1$. In \cref{fig:loopystaralt_characteristic_time_app} and \cref{fig:approx_sol_loopystaralt} we depict $T^\star(\nu),\|G^\top \omega^\star\|$ and the difference $\|\omega^\star - \omega_{\rm heur}\|$ for this setting. Also in this case  for increasing values of $p$ the approximate solution $\omega_{\rm heur}$ converges to the optimal solution.
\end{enumerate}

\begin{figure}[H]
    \centering
    \includegraphics[width=.7\linewidth]{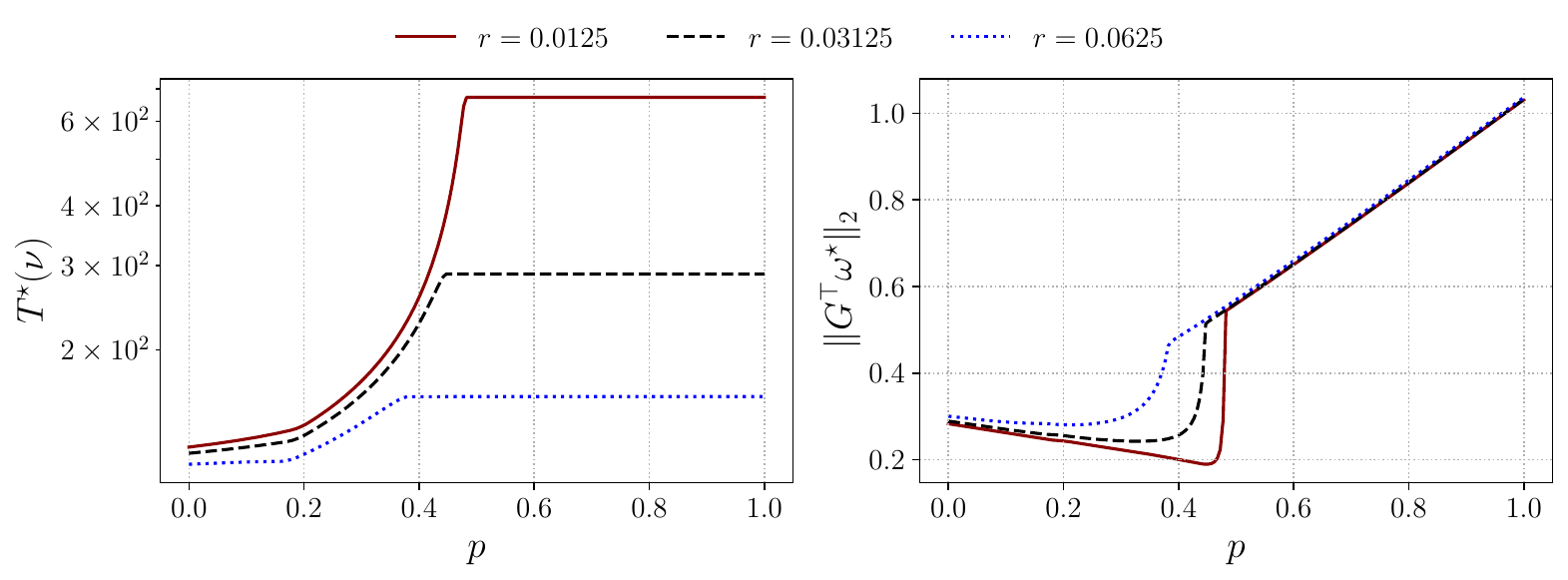}
    \caption{On the left: characteristic time of the loopy star graph in the alternative setting with $K=5$ and $r=\alpha\frac{1-2q}{4(K-1)},\alpha\in\{0.1,0.25,0.5\}$  (the best arm is $v=1$). On the right: plot of $\|m^\star\|$ as a function of $\omega^\star$.}
    \label{fig:loopystaralt_characteristic_time_app}

    \includegraphics[width=0.4\linewidth]{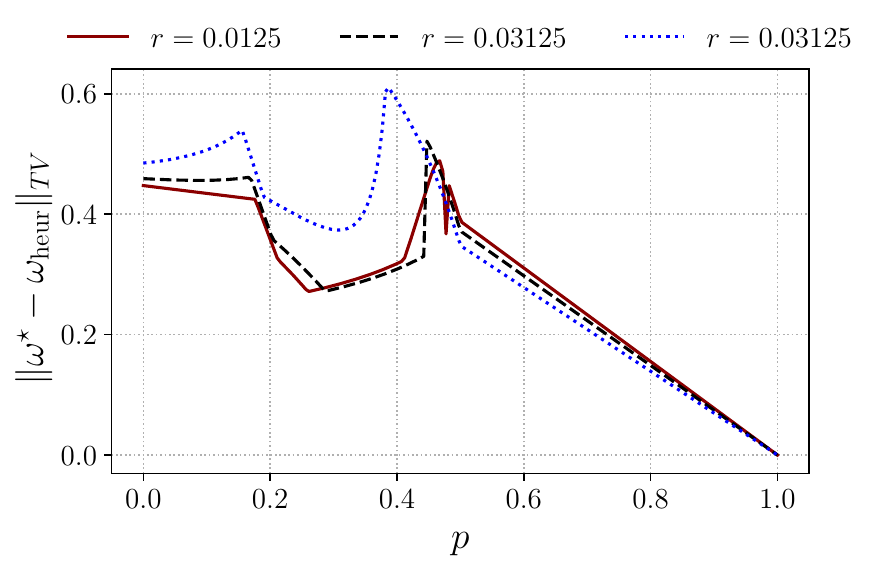}
    \caption{Difference between $\omega^\star$ and $\omega_{\rm heur}$ in the loopy star graph (same setting as in \cref{fig:loopystaralt_characteristic_time_app}).}
    \label{fig:approx_sol_loopystaralt}
\end{figure}

\subsubsection{Ring graph}  
In this graph   (\cref{fig:ring_graph_details}) each node is  connected to two adjacent nodes. For each node $u$, the feedback from the node on the right $v_r$ (clockwise direction) is "seen" with probability $p$, i.e., $G_{u,v_r}=p$; on the other hand, the feedback from the node on the "left" $v_l$ (anti-clockwise direction) is "seen" with probability $1-p$, i.e., $G_{u,v_l}=1-p$. All other edges have $0$ probability. We used a value of $p=0.3$ and rewards are Gaussian (with variance $1$), with mean values linearly distributed in $[0,1]$ across the $K$ arms.

In \cref{fig:ring_characteristic_time_app} and \cref{fig:approx_sol_ring} we depict $T^\star(\nu),\|G^\top \omega^\star\|$ and the difference $\|\omega^\star - \omega_{\rm heur}\|$ for this setting.  In this case the approximate solution $\omega_{\rm heur}$ converges to the optimal solution for $p \to 0$ or $p\to 1$.

\begin{figure}[h]
    \centering
    \includegraphics[width=0.7\linewidth]{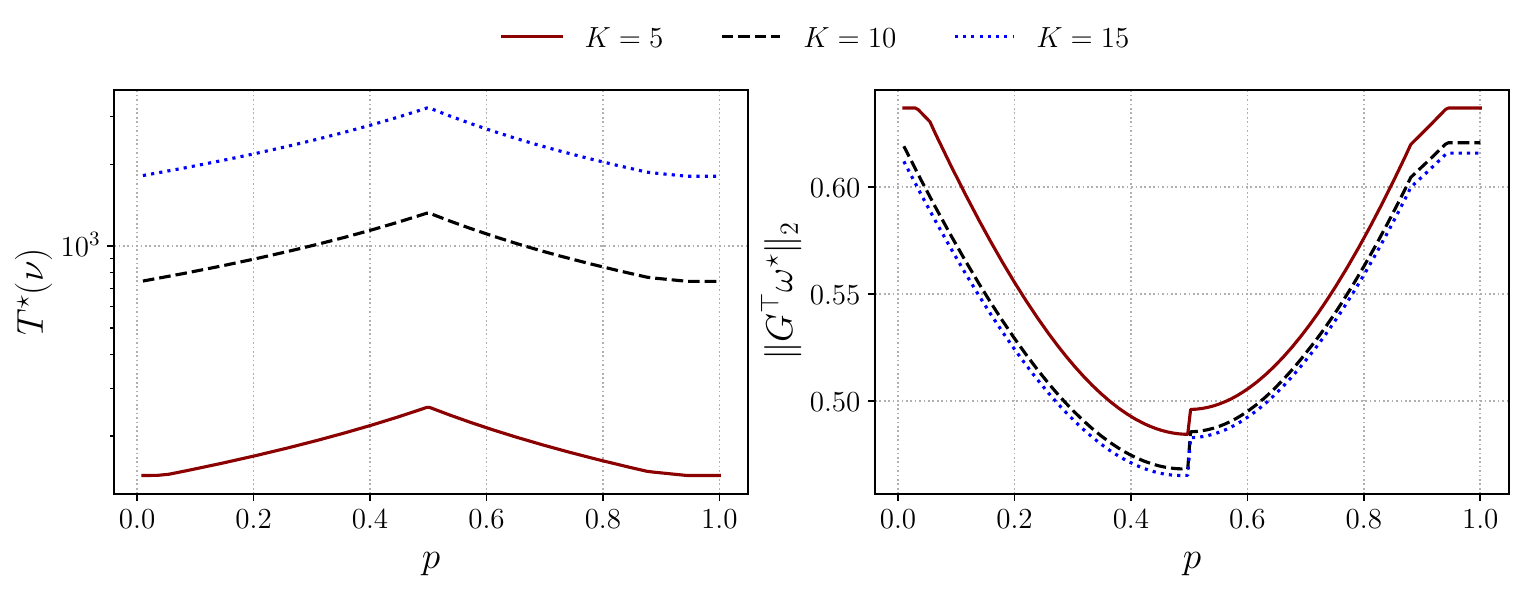}
    \caption{On the left: characteristic time of the ring graph for different values of $K,p$. On the right: plot of $\|m^\star\|$ as a function of $\omega^\star$.}
    \label{fig:ring_characteristic_time_app}

    \includegraphics[width=0.4\linewidth]{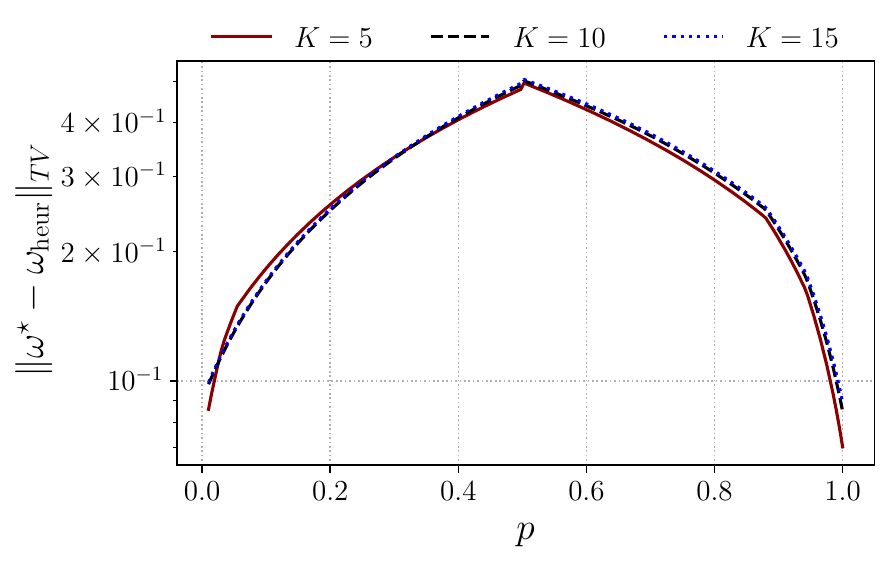}
    \caption{Difference between $\omega^\star$ and $\omega_{\rm heur}$ in the ring graph (same setting as in \cref{fig:ring_characteristic_time_app}).}
    \label{fig:approx_sol_ring}
\end{figure}

\subsubsection{Loopless clique graph} 

\begin{figure}[h]
    \centering
    \includegraphics[width=0.7\linewidth]{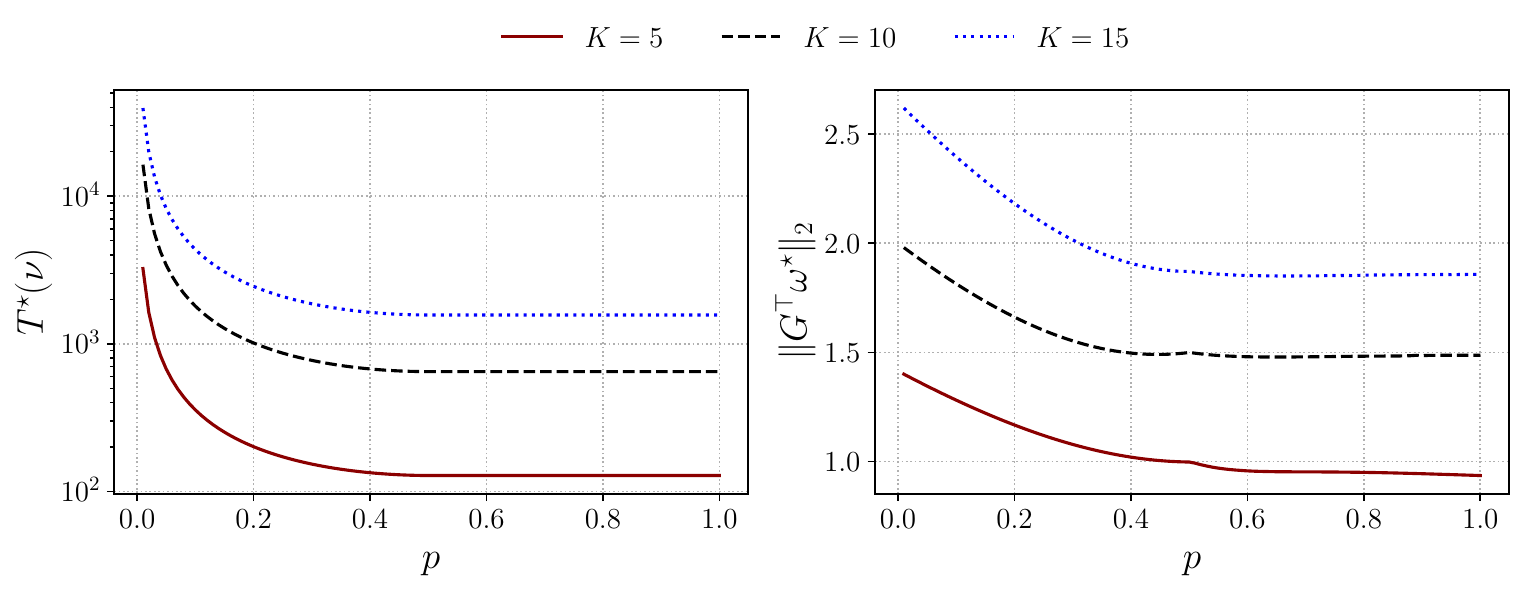}
    \caption{On the left: characteristic time of the loopless graph  for different values of $K,p$. On the right: plot of $\|m^\star\|$ as a function of $\omega^\star$.}
    \label{fig:looplessclique_characteristic_time_app}

    \includegraphics[width=0.4\linewidth]{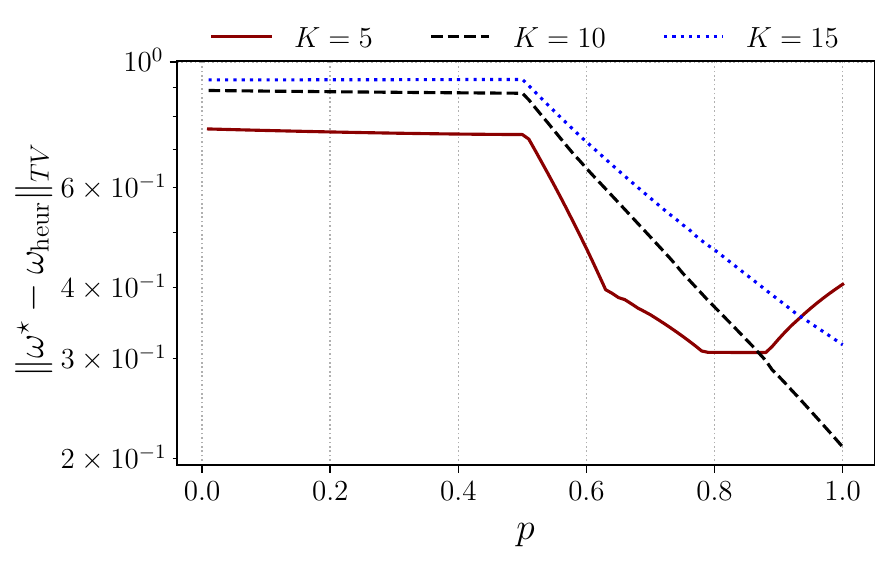}
    \caption{Difference between $\omega^\star$ and $\omega_{\rm heur}$ in the loopless graph (same setting as in \cref{fig:looplessclique_characteristic_time_app}).}
    \label{fig:approx_sol_looplessclique}
\end{figure}

This graph  (\cref{fig:looplessclique_graph_details}) is  fully connected  without any self-loops (the probabilities in the figure are omitted to avoid cluttering). Assuming the vertices $u\in V$ are numbered according to the natural numbers $V=\{1,2,\dots,K\}$, the edge probabilities are
\[
G_{u,v} = \begin{cases}
0 & v=u,\\
p/u & \forall v\neq u \wedge v\notin 2\mathbb{N},\\
1-(p/u) & \hbox{otherwise}.
\end{cases}
\]
where $2\mathbb{N}$ is the set of even numbers. We see that in this type of graph it may be better to choose a vertex according to its index depending on what type of feedback the learner seeks.
We used a value of $p=0.5$ and rewards are Gaussian (with variance $1$), with mean values linearly distributed in $[0,1]$ across the $K$ arms.

In \cref{fig:looplessclique_characteristic_time_app} and \cref{fig:approx_sol_looplessclique} we depict $T^\star(\nu),\|G^\top \omega^\star\|$ and the difference $\|\omega^\star - \omega_{\rm heur}\|$ for this setting.  In this case the approximate solution $\omega_{\rm heur}$ does not seem to converge to $\omega^\star$ for any value of $p$.

\subsection{Numerical Results}
In this section we present additional details on the numerical results.

\subsubsection{Algorithms and results}
In addition to \algoname{} and heuristic \algoname{} we also used {\sc EXP3.G} \citep{alon2015online}, an algorithm for regret minimization in the adversarial case. We also compare with two variant of {\sc UCB}: (1) {\sc UCB-FG-E}, which acts greedily with respect to the upper confidence bound of $\left(\hat G(t)\hat \mu(t)\right)$; (2) {\sc UCB-FG-V}, which selects $\argmax_v \hat G_{v, \hat a^{\rm ucb}(t)}^{\rm ucb}(t)$, where $\hat G^{\rm ucb}(t), \hat a^{\rm ucb}(t)$ are, respectively, the UCB estimates of $G$ and $a^\star$ at time $t$.
For all algorithms, the graph estimator $\hat G(t)$ was initialized in an optimistic way, i.e., $\hat G_{u,v}(1)=1$ for all $u,v\in V$.

\paragraph{EXP3.G algorithm.} {\sc EXP3.G} \cite{alon2015online} initializes two vectors $p,q\in \mathbb{R}^K$ uniformly, so that $p_i=q_i=1/K$ for $i=1,\dots,K$. At every time-step, an action $V_t$ is drawn from $V_t\sim p_t$, where $p_t\gets  (1-\eta)q_t + \eta {\cal U}$ with $\eta\in (0,1)$ being an exploration facotr and  ${\cal U}$ is the uniform distribution over $\{1,\dots,K\}$.

After observing the feedback, the algorithm sets
\[
\hat q_t \gets q_t \exp( -\eta x_t) \hbox{ and } q_t \gets \hat q_t / \sum_u \hat q_{t,u},
\]
where $x_{t,u} = -Z_{t,u}/ \sum_{v \in N_{in}(u)} p_{t,v}$.  In the experiments, we let $\eta=3/10$.

\paragraph{UCB-FG-E algorithm.} This method is  a variant of UCB that acts greedily  with respect to the upper confidence bound of $\left(\hat G(t)\hat \mu(t)\right)$. In practice, we let $\hat \mu^{\rm ucb}_u(t)= \hat \mu_u(t) +\sqrt{\frac{2 \ln(1+t)}{M_u(t)}}$, $\hat G_{u,v}^{\rm ucb}(t)=\hat G_{u,v}(t) + \sqrt{\frac{\ln(1+t)}{2N_{u}(t)}}$ and select the action to take according to  $V_t = \argmax_u \hat G^{\rm ucb}(t) \hat \mu^{\rm ucb}(t)$.

\paragraph{UCB-FG-V algorithm.}  This method is  a variant of UCB that selects $V_t=\argmax_v \hat G_{v, \hat a^{\rm ucb}(t)}^{\rm ucb}(t)$, where $\hat G^{\rm ucb}(t), \hat a^{\rm ucb}(t)$ are, respectively, the UCB estimates of $G$ and $a^\star$ at time $t$ (note that $a^{\rm ucb}(t) = \argmax_u \hat \mu_u^{\rm ucb}(t)$).

In \cref{fig:sample_complexity_1}  we depict the sample complexity of the algorithms for different values of $K,\delta$. Note that the sample complexity $\tau$ is not  normalized, and results were computed over $100$ seeds.

\begin{figure}
    \centering
    \includegraphics[width=.8\linewidth]{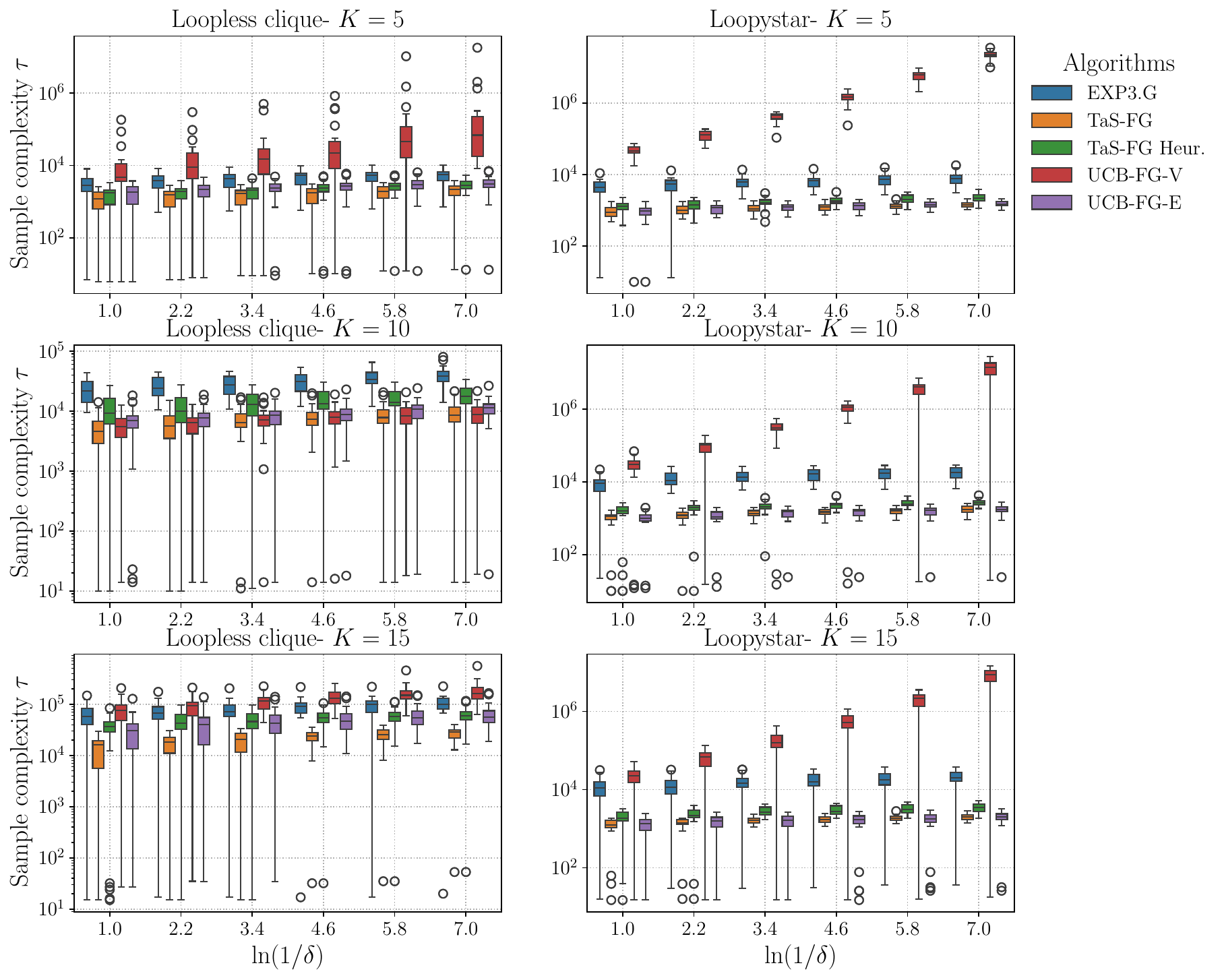}
    \includegraphics[width=.8\linewidth]{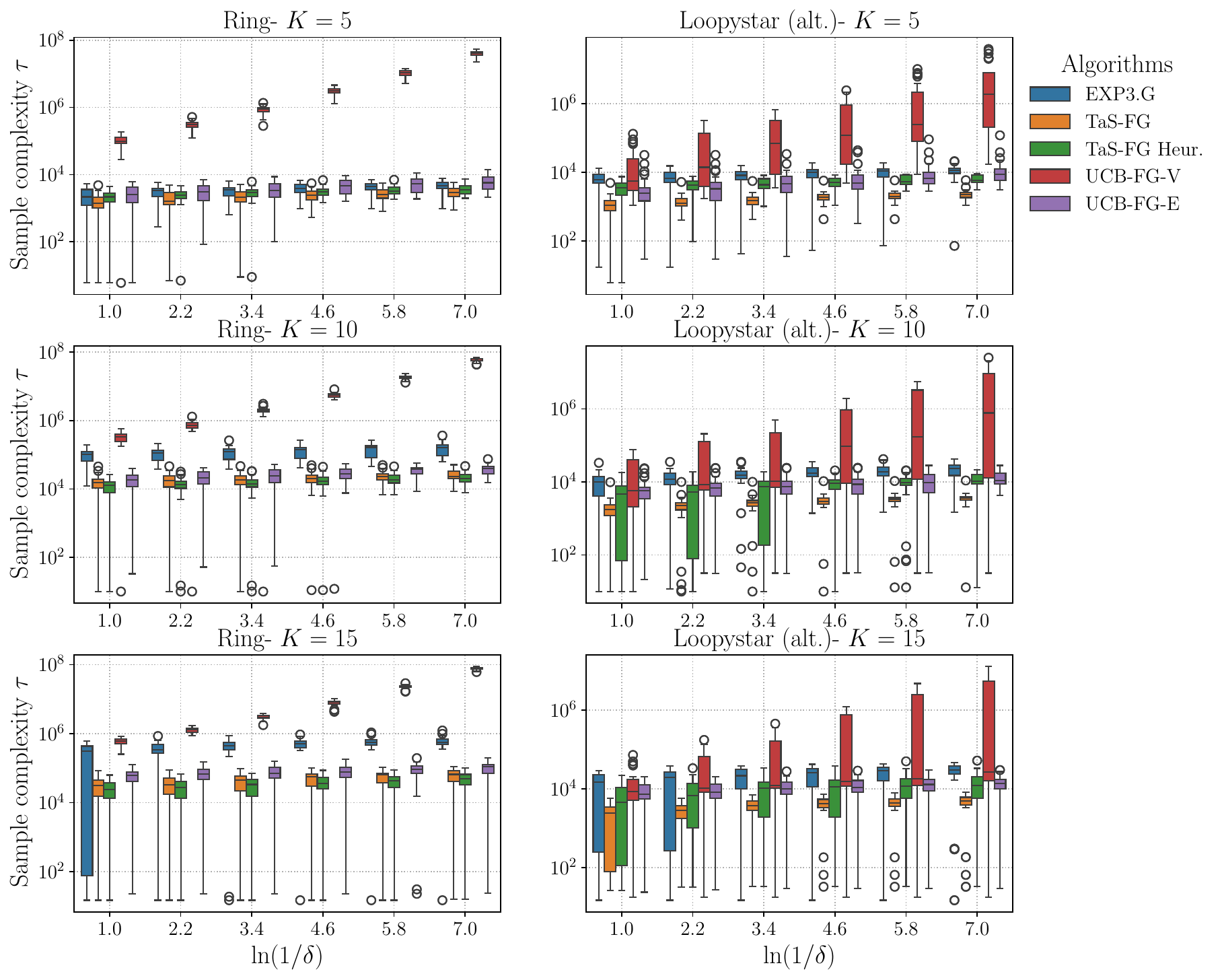}
    \caption{Sample complexity results, from top-left (clockwise): loopless clique, loopystar, ring and alternative loopystar graphs.
    Box plots are not normalized, and were computed over $100$ seeds. Boxes indicate the interquartile range, while the median   and mean values are, respectively, the solid  line and the $+$  sign in black.}
    \label{fig:sample_complexity_1}
\end{figure}

\subsubsection{Libraries and computational resources}

\paragraph{Libraries used in the experiments.} We set up our experiments using Python 3.10.12 \citep{van1995python} (For more information, please refer to the following link \url{http://www.python.org}), and made use of the following libraries: NumPy \citep{harris2020array}, SciPy \citep{2020SciPy-NMeth},  Seaborn \citep{michael_waskom_2017_883859}, Pandas \citep{mckinney2010data}, Matplotlib \citep{hunter2007matplotlib}, CVXPY \citep{diamond2016cvxpy}. As numerical optimizer we used Gurobi 10.0.1 \citep{gurobi}.

New code is published under the MIT license. To run the code, please, read the attached {\tt README.md} file for instructions.

\paragraph{Computational resources.} Experiments were run on a shared cluster node featuring a Linux OS with $2$ fourteen-core $2.6$ GHz Intel Gold 6132 and 384GB of ram. The total computation time per core to design the experiments, debug the code and obtain the final results  was roughly of 82 hours/core. To obtain the final results over $100$ seeds we estimated that 10/hours/core are sufficient.

\newpage
\section{Feedback Graph Properties}
In the following we list some properties for feedback graphs. 
\begin{lemma}\label{lemma:so_greater_than_alpha}
    For a strongly observable graph $G$ we have that $|SO(G)| \geq \max(\sigma(G),\alpha(G))$.
\end{lemma}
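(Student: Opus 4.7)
The plan is to prove the two inequalities $\sigma(G)\le |SO(G)|$ and $\alpha(G)\le |SO(G)|$ separately and then take the maximum. The first is essentially immediate from the definitions, while the second exploits the incompatibility of strong observability with independent sets of size at least two when self-loops are absent.

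For the first inequality, I would observe that any $v\in L(G)$ satisfies $\{v\}\subseteq N_{in}(v)$ by the definition of self-loop, which is precisely the first clause in the definition of strong observability. Hence $L(G)\subseteq SO(G)$, and taking cardinalities gives $\sigma(G)=|L(G)|\le |SO(G)|$.

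For the second inequality, I would fix a maximum independent set $I$ with $|I|=\alpha(G)$. If $|I|=1$ then trivially $\alpha(G)=1\le |SO(G)|$, since $G$ is strongly observable and hence non-empty. Otherwise $|I|\ge 2$, and the plan is to show every vertex of $I$ has a self-loop. Pick any $v\in I$: since $G$ is strongly observable, $v\in SO(G)$, so either $v\in N_{in}(v)$ or $V\setminus\{v\}\subseteq N_{in}(v)$. The second case is ruled out by picking any other $u\in I$ with $u\neq v$; by the independence of $I$ we have $G_{u,v}=0$, hence $u\notin N_{in}(v)$, contradicting $V\setminus\{v\}\subseteq N_{in}(v)$. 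Therefore $v\in L(G)$, and since $v$ was arbitrary in $I$, we conclude $I\subseteq L(G)\subseteq SO(G)$, giving $\alpha(G)=|I|\le |SO(G)|$.

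Combining the two bounds yields $|SO(G)|\ge \max(\sigma(G),\alpha(G))$. The main (and only) subtlety in the argument is the case split on $|I|\ge 2$ used to force self-loops via strong observability, but once that observation is made the proof is essentially a direct check against the definitions.
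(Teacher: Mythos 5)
Your proof is correct, and the first half ($\sigma(G)\le|SO(G)|$ via $L(G)\subseteq SO(G)$) coincides with the paper's. For the second half, however, you take a longer route than necessary: since the lemma assumes $G$ is strongly observable, \emph{every} vertex lies in $SO(G)$, so any maximum independent set $I$ satisfies $I\subseteq SO(G)$ immediately and $\alpha(G)=|I|\le|SO(G)|$ follows in one line --- this is all the paper's proof does. Your detour instead establishes the strictly stronger containment $I\subseteq L(G)$ by showing that when $|I|\ge 2$ no vertex of $I$ can satisfy the clause $V\setminus\{v\}\subseteq N_{in}(v)$, hence each must have a self-loop. That argument is sound (and the case split on $|I|=1$ is handled correctly), but it is exactly the content of the paper's separate \cref{lemma:so_selfloops}, which is needed elsewhere; for the present statement it buys nothing beyond what the hypothesis already gives for free. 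So: correct, but you proved a stronger intermediate fact than the lemma requires.
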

\begin{proof}
First, note that $\sigma(G) \leq |SO(G)|$, since all vertices with a self-loop are strongly observable.

To prove that $|SO(G)|\geq \alpha(G)$,
    by contradiction, assume that $
    \alpha(G) > |SO(G)|$. First, for any $I\in {\cal I}, u\in I$, $u$ is strongly observable. Since $|I|=\alpha(G)$, then at-least $|SO(G)|\geq\alpha(G)$, which is a contradiction.  
\end{proof}

\begin{lemma}
    Assume $|{\cal I}(G)| > 1$ and let $ I_1,I_2 \in {\cal I}(G) $, with $I_1\neq I_2$. Then, for any $u\in I_1$ we have that there exists $v\in I_2\setminus I_1$ satisfying $G_{uv}>0$ or $G_{vu}>0$. 
\end{lemma}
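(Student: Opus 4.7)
The plan is to argue by contradiction, exploiting the maximality of $\alpha(G)$. Suppose that for some $u\in I_1$ every $v\in I_2\setminus I_1$ satisfies $G_{u,v}=0$ and $G_{v,u}=0$. The goal is to construct from this an independent set of size $\alpha(G)+1$, contradicting \cref{def:graph_dependent_quantities}.

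The candidate set will be $S:=I_2\cup\{u\}$. Independence of $S$ follows by partitioning $I_2=(I_1\cap I_2)\cup(I_2\setminus I_1)$ and checking that $u$ has no edges, in either direction, to either piece. The contradiction hypothesis kills all potential edges between $u$ and $I_2\setminus I_1$, while independence of $I_1$ kills all potential edges between $u$ and $I_1\cap I_2$, since both endpoints then lie in $I_1$. Combined with the pairwise non-adjacency of the elements of $I_2$ itself, every pair in $S$ is non-adjacent, so $S$ is independent.

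To upgrade this to a genuine contradiction one needs $|S|=\alpha(G)+1$, which in turn requires $u\notin I_2$, i.e.\ $u\in I_1\setminus I_2$. This set is nonempty because $|I_1|=|I_2|=\alpha(G)$ together with $I_1\neq I_2$ prevents $I_1\subseteq I_2$. The only delicate point, and what I expect to be the main obstacle, is the handling of the case $u\in I_1\cap I_2$: there the stated conclusion cannot hold as written, because independence of $I_2$ already forces $G_{u,v}=G_{v,u}=0$ for every $v\in I_2\setminus I_1\subseteq I_2\setminus\{u\}$. I would therefore read the lemma as implicitly quantifying over $u\in I_1\setminus I_2$, which is the nontrivial content and the case in which the contradiction argument above goes through cleanly.
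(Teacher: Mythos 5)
Your proof is correct for the nontrivial case $u\in I_1\setminus I_2$, and it takes the opposite (and, as it happens, sounder) direction from the paper. The paper's proof negates the conclusion for a fixed $u$, picks some $v\in I_2\setminus I_1$, and augments $I_1$ to $\tilde I=I_1\cup\{v\}$, asserting that $\tilde I$ is independent; but the contradiction hypothesis only controls the adjacency of $v$ to the single vertex $u$, not to the rest of $I_1$, so the independence of $\tilde I$ is not actually justified there. You instead augment $I_2$ to $S=I_2\cup\{u\}$, which is the direction in which the hypothesis does give complete information: non-adjacency of $u$ to $I_2\setminus I_1$ comes from the contradiction hypothesis, non-adjacency of $u$ to $I_1\cap I_2$ comes from the independence of $I_1$, and $|S|=\alpha(G)+1$ follows because $|I_1|=|I_2|=\alpha(G)$ together with $I_1\neq I_2$ forces $I_1\setminus I_2\neq\emptyset$. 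Your observation that the statement is literally false for $u\in I_1\cap I_2$ (independence of $I_2$ already kills every candidate $v$) is also correct and is not addressed in the paper's proof; restricting the quantifier to $u\in I_1\setminus I_2$, as you propose, is the right reading and is all that the paper uses downstream. In short, your argument both repairs the gap in the paper's proof and corrects the statement's quantification.
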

\begin{proof}
Let $I_1,I_2\in {\cal I}(G)$ satisfying $I_1\neq I_2\Rightarrow \exists v\in I_2\setminus I_1$. Let $u\in I_1$. By contradiction, for all $v \in I_2\setminus I_1$  we have that $G_{u,v} =0$ and $G_{v,u}=0$. In that case, we can construct a new a new set $\tilde I = I_1\cup \{v\}$ such that $G_{u,v}=0$ and $G_{v,u}=0$ for any $u,v\in \tilde I$, implying that $\tilde I\in {\cal I}(G)$.
But $|I_1|< |\tilde I|$, which contradicts the fact that $I_1\in {\cal I}(G)$.
\end{proof}
\begin{lemma}\label{lemma:so_selfloops}
    For a strongly observable graph  if $\alpha(G)>1$ then  $\forall I\in {\cal I}(G),\forall u\in I$ we have $\{u\}\in N_{in}(u)$, that is, all vertices in $I$ have self-loops. As a corollary, we have that $\sigma(G)\geq \alpha(G)$ if $\alpha(G)>1$.
\end{lemma}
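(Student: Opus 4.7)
The plan is to argue by cases on the definition of strong observability, exploiting the fact that we have two distinct vertices in $I$ to rule out the non-self-loop alternative.

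Fix $I \in \mathcal{I}(G)$ and $u \in I$. Since $G$ is strongly observable, so is $u$, meaning that at least one of the following holds: (a) $\{u\} \subseteq N_{in}(u)$, or (b) $V \setminus \{u\} \subseteq N_{in}(u)$. If (a) holds we are done, so suppose (a) fails and (b) holds. The key observation is that because $\alpha(G) > 1$, we have $|I| \geq 2$, so there exists some $v \in I$ with $v \neq u$. By (b), $v \in V \setminus \{u\} \subseteq N_{in}(u)$, hence $G_{v,u} > 0$. But $u, v \in I$ with $u \neq v$ and $I$ is independent, which by \cref{def:graph_dependent_quantities} forces $G_{v,u} = 0$, a contradiction. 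Therefore (a) must hold, i.e., $u$ has a self-loop.

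For the corollary, pick any maximum independent set $I^\star \in \mathcal{I}(G)$ with $|I^\star| = \alpha(G)$. The statement just proved shows every $u \in I^\star$ has a self-loop, so $I^\star \subseteq L(G)$, giving $\sigma(G) = |L(G)| \geq |I^\star| = \alpha(G)$.

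There is no real obstacle here: the proof is a short case analysis based directly on the definitions, with the only subtle point being the use of $\alpha(G) > 1$ to guarantee the existence of a second vertex $v \in I$ that produces the contradicting edge in case (b). The hypothesis $\alpha(G)>1$ is essential, since for $\alpha(G)=1$ a singleton independent set $\{u\}$ with $u$ strongly observable via (b) but no self-loop is perfectly consistent (e.g., the loopless clique).
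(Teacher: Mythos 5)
Your proof is correct and follows essentially the same route as the paper's: assume no self-loop, invoke strong observability to force $V\setminus\{u\}\subseteq N_{in}(u)$, and use $\alpha(G)>1$ to produce a second vertex $v\in I$ whose resulting in-edge to $u$ contradicts independence. Your write-up is in fact slightly more careful than the paper's (you explicitly note $v\neq u$ and spell out the derivation of the corollary), but there is no substantive difference in approach.
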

\begin{proof}
    By contradiction, assume there exists $I_0\in {\cal I}(G)$ with $u\in I_0$ such that $\{u\}\notin N_{in}(u)$. Since $u$ is strongly observable, it means that $V\setminus\{u\} \in N_{in}(u)$. 

    Now, consider the case $\alpha(G)>1$. If that is the case, then let $v\in I_0$. By strong observability of $u$, we have $\{v\}\in N_{in}(u)$, which contradicts the fact that $I_0$ is an independent set.

    The latter statement is a consequence of the fact that all vertices in every $I$ have self-loops.
\end{proof}
\begin{corollary}
    Consider a strongly observable graph with $\alpha(G)>1$. Then,  $\forall I \in {\cal I} $ there exists $I_0\in {\cal I}$ such that $I\ll I_0$.
\end{corollary}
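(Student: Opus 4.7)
The plan is to read the corollary as a one-line consequence of \cref{lemma:so_selfloops}, using the fact that a vertex with a self-loop dominates itself.

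First, I recall the definitions: $W \ll D$ requires that every $w \in W$ have some $d \in D$ with $w \in N_{out}(d)$; and by the paper's convention, a self-loop at $v$ places $v \in N_{out}(v)$. So any vertex $v$ that carries a self-loop dominates itself in the sense of set-domination.

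Next, I would invoke \cref{lemma:so_selfloops} under the hypotheses (strong observability and $\alpha(G) > 1$) to conclude that every vertex $u$ of every $I \in {\cal I}(G)$ satisfies $\{u\} \subseteq N_{in}(u)$, hence has a self-loop and lies in its own out-neighborhood $N_{out}(u)$.

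Given any $I \in {\cal I}(G)$, I would then simply take $I_0 := I \in {\cal I}(G)$. For each $w \in I$, choosing $d := w \in I_0$ yields $w \in N_{out}(d)$ by the self-loop, which certifies $I \ll I_0$. Hence the desired $I_0$ exists (being $I$ itself), and the proof is complete.

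The main obstacle is essentially non-existent: the only subtlety is to notice that the statement does not require $I_0 \neq I$, so the trivial choice $I_0 = I$ is admissible and no additional construction is needed beyond citing the preceding lemma.
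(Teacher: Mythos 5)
Your proof is correct and takes essentially the same route as the paper: the paper also invokes \cref{lemma:so_selfloops} to conclude that every vertex of $I$ has a self-loop and then takes $I_0 = I$, merely phrasing the argument as a proof by contradiction rather than directly. Your direct phrasing is, if anything, cleaner.
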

\begin{proof}
    By contradiction assume that $\exists I\in {\cal I}(G)$ such that $\forall I_0 \in {\cal I}(G)$  there exists $u(I_0)\in I$ such that $N_{in}(u(I_0))\cap I_0=\emptyset$, where $u: {\cal I}(G)\to V$. However, since the graph is strongly observable, we have that either (1) $\{u(I_0)\}\in N_{in}(u(I_0))$, or (2) $V\setminus\{u(I_0)\}\subset N_{in}(u(I_0))$ or (3) both.

    Consider the case $\alpha(G)>1$. By \cref{lemma:so_selfloops} then each vertex in $I$ has a self-loop.  Hence taking $I_0=I$ leads to $N_{in}(u(I)) \cap I \ni \{u(I)\}$, which is a contradiction.
\end{proof}

\begin{lemma}
    Consider a strongly observable graph. Then $|SO(G)|=\sigma(G)=\alpha(G)$ only for bandit feedback graphs. As a corollary, for non-bandit feedback graphs we have $|SO(G)|>\alpha(G)$.
\end{lemma}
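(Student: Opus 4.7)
The plan is to show both directions explicitly, using the fact that strong observability forces $|SO(G)|=K$, where $K=|V|$.

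First I would establish the easy direction: for the bandit feedback graph, every vertex has a self-loop and there are no other edges. Thus every vertex is strongly observable, so $|SO(G)|=K$; every vertex has a self-loop, so $\sigma(G)=K$; and the entire vertex set $V$ is an independent set (no edges between distinct vertices), so $\alpha(G)=K$. Hence the three quantities coincide.

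For the converse, I would argue as follows. Assume $G$ is strongly observable and $|SO(G)|=\sigma(G)=\alpha(G)$. Strong observability yields $|SO(G)|=K$, so in particular $\alpha(G)=K$. This means $V$ itself is an independent set, i.e.\ $G_{u,v}=0$ for all pairs $u\neq v$. On the other hand, $\sigma(G)=K$ forces every vertex to carry a self-loop, i.e.\ $G_{u,u}>0$ for all $u\in V$. Combining these, the only nonzero entries of $G$ are the diagonal ones, which is precisely the bandit feedback graph.

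For the corollary, suppose $G$ is strongly observable but not the bandit feedback graph. By \cref{lemma:so_greater_than_alpha}, $|SO(G)|\geq\alpha(G)$, so I need only rule out equality. Since $G$ is strongly observable, $|SO(G)|=K$, so $|SO(G)|=\alpha(G)$ would imply $\alpha(G)=K$, i.e.\ $V$ is independent and there are no edges between distinct vertices. But then strong observability of every vertex $v$ requires $\{v\}\subseteq N_{in}(v)$ (the alternative $V\setminus\{v\}\subseteq N_{in}(v)$ is vacuous when $K>1$ and all non-diagonal entries are zero; and for $K=1$ the single vertex needs a self-loop to be observable by \cref{assump:nontrivial_problem}). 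This forces $G$ to be the bandit feedback graph, contradicting the hypothesis. Hence $|SO(G)|>\alpha(G)$.

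The argument is essentially a bookkeeping exercise on the definitions, so there is no real obstacle; the one point to be careful about is handling strong observability correctly when $V\setminus\{v\}\subseteq N_{in}(v)$ is the clause being used, which is why the independence conclusion $\alpha(G)=K$ is the key reduction—it eliminates that possibility and leaves only the self-loop clause.
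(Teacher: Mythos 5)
Your proof is correct and follows essentially the same route as the paper's: both rest on the observation that $|SO(G)|=K$ for a strongly observable graph, invoke \cref{lemma:so_greater_than_alpha} for the corollary, and characterize $\alpha(G)=K$ as forcing all off-diagonal entries of $G$ to vanish, whence strong observability supplies the self-loops. Your write-up is actually more complete than the paper's (which asserts rather than derives the ``only if'' direction); the only nit is that you call the clause $V\setminus\{v\}\subseteq N_{in}(v)$ ``vacuous'' when you mean it is unsatisfiable for $K>1$ on a graph with no off-diagonal edges.
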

\begin{proof}
The first part of the lemma is easy to prove as $\alpha(G)$ is maximal when all the vertices have only self-loops, thus $\alpha(G)=|SO(G)|=K$.

To prove the second part, note that it always holds true that $K=|SO(G)|\geq \alpha(G)$ by \cref{lemma:so_greater_than_alpha}. However, equality is reached only for bandit feedback graphs. Therefore, for other feedback graphs it holds that $|SO(G)|>\alpha(G)$.
\end{proof}

\begin{lemma}\label{lemma:simple_domination_selfloops}
    Consider a graph $(V,G)$, with $\alpha(G)<\sigma(G)$. For any subset $W\subset L(G)$ of size $|W|=\alpha(G)+1$, at most $\alpha(G)$ vertices are needed to dominate $W$.
\end{lemma}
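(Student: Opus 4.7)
The plan is to exploit the pigeonhole-style observation that a set $W \subseteq L(G)$ with $|W| = \alpha(G)+1$ is strictly larger than the independence number and hence cannot itself be independent. This will give a pair of vertices in $W$ connected by an edge, and the self-loops on $W$ will let me collapse the trivial ``dominate yourself'' dominating set of size $|W|$ down to size $\alpha(G)$.

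\textbf{Step 1: Find an edge inside $W$.} Since $|W| = \alpha(G)+1 > \alpha(G) = \max\{|I| : I \text{ independent}\}$, the set $W$ is not independent. Therefore there exist distinct $u,v \in W$ with $G_{u,v} > 0$ or $G_{v,u} > 0$. Without loss of generality assume $G_{u,v} > 0$, i.e.\ $v \in N_{out}(u)$.

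\textbf{Step 2: Construct a dominating set of size $\alpha(G)$.} Define $D := W \setminus \{v\}$, so $|D| = \alpha(G)$. To verify $W \ll D$, pick any $w \in W$. If $w \neq v$, then $w \in D$, and because $w \in W \subseteq L(G)$ the vertex $w$ has a self-loop, so $w \in N_{out}(w)$ and $w$ is dominated by itself. If $w = v$, then $u \in D$ and by Step 1 we have $v \in N_{out}(u)$, so $v$ is dominated by $u$. Hence $D$ dominates $W$ with $|D| \leq \alpha(G)$, proving the claim.

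There is essentially no obstacle here; the only thing to be careful about is the direction of the edge obtained from non-independence (which is why I kept both cases $G_{u,v}>0$ and $G_{v,u}>0$ and argued WLOG), and the fact that every element of $W$ is guaranteed to have a self-loop — precisely the hypothesis $W \subseteq L(G)$ built into the statement.
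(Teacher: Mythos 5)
Your proof is correct and follows essentially the same route as the paper's: since $|W|=\alpha(G)+1$ exceeds the independence number, $W$ contains an edge, and removing the head of that edge leaves a set of size $\alpha(G)$ that dominates $W$ because the remaining vertices dominate themselves via their self-loops. Your write-up is in fact slightly more explicit than the paper's (which leans on a separate inclusion lemma and leaves the self-loop step implicit), but the argument is the same.
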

\begin{proof}
    The proof is simple: first, by \cref{lemma:domination_number_inclusion}, we have $\alpha(W)\leq \alpha(G)$. Therefore $W$ is not an independent set, and there must exists $v,u\in W$ such that $u\in N_{out}(v)$. Therefore $W\setminus\{u\}$ dominates $W$.
\end{proof}

\begin{lemma}\label{lemma:domination_number_inclusion}
    For any set $G$ satisfying $\alpha(G)=k$, we must have $\max(1, k+|V|-|G|)\leq\alpha(V)\leq k$ for any subset $V$ of $G$.
\end{lemma}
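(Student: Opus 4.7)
The plan is to prove the sandwich bound by handling the two inequalities separately, both using elementary independent-set arguments about the induced subgraph on $V$. Throughout, I treat $V$ as a nonempty vertex subset of the ambient graph $G$, with $\alpha(V)$ denoting the independence number of the subgraph induced on $V$.

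For the upper bound $\alpha(V)\leq k$, I would argue that independence is preserved upward under induced subgraphs. Let $J\subseteq V$ be any independent set in the induced subgraph on $V$. Since the edges of $G$ restricted to $J$ are precisely the edges of the induced subgraph restricted to $J$, the set $J$ is also independent in $G$. Hence $|J|\leq \alpha(G)=k$, and taking the supremum over all such $J$ yields $\alpha(V)\leq k$.

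For the lower bound $\alpha(V)\geq \max(1,k+|V|-|G|)$, the trivial part $\alpha(V)\geq 1$ follows because any singleton $\{v\}\subseteq V$ is an independent set (the definition of independence in the paper is stated only for pairs $u\neq v$, so self-loops are irrelevant). For the other term, I would fix a maximum independent set $I\subseteq V(G)$ with $|I|=k$, and consider its intersection with $V$. Because the independence property is closed under taking subsets, $I\cap V$ is independent in the induced subgraph on $V$, so
\begin{equation*}
\alpha(V)\geq |I\cap V|.
\end{equation*}
A straightforward inclusion-exclusion then gives
\begin{equation*}
|I\cap V| = |I|+|V|-|I\cup V| \geq k+|V|-|G|,
\end{equation*}
since $I\cup V$ is a subset of the vertex set of $G$. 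Combining this with the trivial bound $\alpha(V)\geq 1$ produces $\alpha(V)\geq \max(1,k+|V|-|G|)$, completing the sandwich.

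There is no real obstacle here; the proof is routine and essentially reduces to inclusion-exclusion plus the hereditary nature of independence under induced subgraphs. The only subtlety is the edge case $V=\emptyset$, which the statement implicitly excludes via the $\max(1,\cdot)$ term, and the mild notational abuse of writing $|G|$ for the number of vertices in the ambient graph.
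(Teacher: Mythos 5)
Your proof is correct and matches the paper's (much terser) argument in substance: the upper bound via heredity of independence under induced subgraphs, and the lower bound via the fact that passing from $G$ to a subset $V$ can cost at most $|G|-|V|$ vertices of a maximum independent set. The only cosmetic difference is that the paper phrases the lower bound as an iterated one-vertex-removal argument, whereas you obtain the same bound in one shot by intersecting a maximum independent set of $G$ with $V$ and applying inclusion--exclusion.
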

\begin{proof}
The right hand-side is trivial since any subset  $V\subset G$ can have at most $k$ independent vertices.  The left hand-side follows from the fact that   removing an element from $G$ can at most reduce the number of independent vertices by $1$. 
\end{proof}

\subsection{Domination Number of the Set of Strongly Observable Vertices}
We now provide one of the main results that shows an upper bound on the number of vertices with self-loops needed to dominate the set of strongly observable vertices.

In the proofs we denote by $(V|_A, G|_A)$ the restriction of a graph $(V,G)$ to a set of vertices $A\subseteq V$. Practically, we have that $V|_A=A$, and $G|_A \in \mathbb{R}^{|A|\times |A|}$ with $(G|_A)_{u,v}= G_{u,v}$ for $u,v\in A$

We begin with the following preliminary lemma that studies the case $\alpha(G)=1$.
\begin{lemma}\label{lemma:domination_strongly_observable_1}
Let $E(G)=SO(G)\setminus L(G)$ be the set of strongly observable vertices that do not have a self-loop. Assume that $\alpha(G)=1$ and $|SO(G)|>0$. Then, at most $\sigma(G)-\left\lfloor \frac{\sigma(G)}{2}\right \rfloor$  vertices in $L(G)$ are needed to dominate the set of strongly observable vertices $SO(G)$.
\end{lemma}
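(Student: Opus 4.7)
The plan is to construct a dominating set $D \subseteq L(G)$ of the required size explicitly, splitting the job into dominating $E(G)$ and dominating $L(G)$ separately, and showing that the latter is the binding constraint. I will treat $\sigma(G) \geq 1$ as the main case: when $\sigma(G) = 0$ the claimed bound is $0$ and $L(G)$ is empty, so the statement is only nontrivial when $L(G) \neq \emptyset$.

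For dominating $E(G)$, I would first observe that strong observability combined with the absence of a self-loop forces $V \setminus \{v\} \subseteq N_{in}(v)$ for every $v \in E(G)$. Since any $u \in L(G)$ satisfies $u \neq v$ (because $v \notin L(G)$), this yields $(u,v) \in E$. Consequently, any single vertex drawn from $L(G)$ already dominates the whole of $E(G)$, so the real work is to dominate $L(G)$ using a sparse subset of itself.

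For dominating $L(G)$, the key fact is that $\alpha(G) = 1$ implies that between any two distinct vertices of $V$ at least one of the two directed edges is present (otherwise the pair would form an independent set of size $2$). I would pair the vertices of $L(G)$ arbitrarily into $\lfloor \sigma(G)/2 \rfloor$ disjoint pairs, leaving one unmatched vertex when $\sigma(G)$ is odd. For each pair $\{u,v\}$, this tournament-like property yields, say, $G_{u,v} > 0$; because $u \in L(G)$ it also dominates itself via its self-loop, so adding $u$ to $D$ covers both elements of the pair with a single representative. The unmatched vertex, if any, is added to $D$ to cover itself. This gives $|D| \leq \lceil \sigma(G)/2 \rceil = \sigma(G) - \lfloor \sigma(G)/2 \rfloor$, and by the previous step $D$ simultaneously dominates $E(G)$ because $|D| \geq 1$.

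I do not anticipate a significant obstacle: the main subtlety is bookkeeping the parity in the odd case and verifying that one set $D$ handles both $L(G)$ and $E(G)$ at once, which is immediate once a single representative from $L(G)$ has been included.
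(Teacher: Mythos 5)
Your proposal is correct and follows essentially the same route as the paper: first note that every loopless strongly observable vertex has in-edges from all other vertices, so any single vertex of $L(G)$ dominates $E(G)$, and then use $\alpha(G)=1$ to pair up the vertices of $L(G)$ so that each pair is dominated by one of its members via the tournament edge and its self-loop, giving $\lceil \sigma(G)/2\rceil=\sigma(G)-\lfloor\sigma(G)/2\rfloor$ dominators in total. The paper merely spells out the small cases $\sigma(G)\in\{1,2,3\}$ before stating the same pairing argument, so there is no substantive difference.
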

\begin{proof}
    The  idea of the proof  is to find the least number of vertices in $L(G)$ that dominates $SO(G)$, and we prove this by induction. 
    
    The reason why we can do that, is that since any $v\in E(G)$ lacks a self-loop, strong observability forces $v$ to have edges from every other vertex. Hence, any vertex in $L(G)$ has an out-edge to $v\in E(G)$, and thus dominates  $E(G)$. Therefore we only need to find the domination number of $L(G)$.
    
    We begin by considering the case $\alpha(G)=1$, with $\sigma(G)=1,\sigma(G)=2,\sigma(G)=3$ and then the general case $\sigma(G) >1 $.

    First, note that $|SO(G)| \geq |L(G)| = \sigma(G)$. If $v\in E(G)$, then by definition $v\notin L(G)$, and so $v$ must have in-edges from every other vertex, that is  $N_{in}(v)=V\setminus\{v\}$.

\paragraph{Case $\sigma(G)=1$.} There is exactly one vertex $v$ with a self-loop. This single vertex trivially dominates itself and also dominates any vertex in $E(G)$. Hence the domination number of $SO(G)$ is $1$.

 \paragraph{Case $\sigma(G)=2$.} The two self-loop vertices in $L(G)=\{u,v\}$ must have a directed edge between them in some direction (or both) since $\alpha(G)=1$. Therefore, it must either be $G_{u,v}>0$ or $G_{v,u}>0$ for $L(G)=\{u,v\}$. Hence, there exists one vertex that dominates $SO(G)$.

\paragraph{Case $\sigma(G)=3$.} We can partition $L(G)$ into two sets $L_1(G),L_2(G)$, each of size $2$, such that $L_1(G)\cup L_2(G) = L(G)$ and $|L_1(G)|=|L_2(G)|$.  Since  $\alpha(L_1(G))=\alpha(L_2(G))=1$, we can repeatedly  apply the logic from the case $\sigma(G)=2$  to obtain that at most $2$ vertices are required to dominate $SO(G)$.

 \paragraph{Case $\sigma(G)\geq 4$.} 
 
 For any $\sigma(G)$, form pairs of self‐loop vertices. If $\sigma(G)$ is even, we get $\sigma(G)/2$ disjoint pairs; each pair needs just one dominator (since $\alpha(G)=1$ forces an edge in each pair). If $\sigma(G)$ is odd, then we can construct $(\sigma(G)-1)/2$ pairs plus a singleton, which requires one more dominator. Hence $(\sigma(G)-1)/2 +1=\sigma(G) - \frac{\sigma(G)-1}{2}=\sigma(G)-\lfloor\sigma(G)/2\rfloor$ suffice.

\end{proof}

\begin{theorem}\label{lemma:domination_strongly_observable}
Let $E(G)=SO(G)\setminus L(G)$ be the set of strongly observable vertices that do not have a self-loop. Assume that $|SO(G)|>0$. Then:
\begin{itemize}
    \item (Case 1) If $SO(G)\setminus E(G) = \emptyset$, then $\min(|SO(G)|,2)$ vertices are needed to dominate the set of strongly observable vertices $SO(G)$.
    \item (Case 2) If $SO(G)\setminus E(G) \neq \emptyset$, then at most $\sigma(G)-\left\lfloor \frac{\sigma(G)}{\alpha(G)+1}\right \rfloor$  vertices in $L(G)$ are needed to dominate the set of strongly observable vertices $SO(G)$.
\end{itemize}
\end{theorem}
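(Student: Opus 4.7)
The proof splits into the two stated cases. In \emph{Case 1} the hypothesis $SO(G)\cap L(G)=\emptyset$ means that every $v\in SO(G)$ lacks a self-loop, so it must satisfy the second branch of strong observability, $V\setminus\{v\}\subseteq N_{in}(v)$, which in particular forces $|V|\geq 2$. If $|SO(G)|=1$ then any other vertex dominates the unique SO vertex, giving a singleton dominator. If $|SO(G)|\geq 2$, pick any $u_1\in SO(G)$: for every other $v\in SO(G)$ one has $u_1\in V\setminus\{v\}\subseteq N_{in}(v)$, so $u_1$ dominates $SO(G)\setminus\{u_1\}$; adding any second vertex $u_2\neq u_1$ then dominates $u_1$ itself. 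This yields the $\min(|SO(G)|,2)$ bound.

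For \emph{Case 2}, I build a dominator entirely from $L(G)$ by a grouping argument. The key combinatorial lemma is that for any $W\subseteq L(G)$ with $|W|=\alpha(G)+1$ there is $W'\subseteq W$ of size $\alpha(G)$ that dominates $W$: since $|W|>\alpha(G)$, $W$ is not independent, so I can choose $u,v\in W$ with $G_{u,v}>0$ and set $W'=W\setminus\{v\}$; then $u\in W'$ dominates $v$, and every other element of $W'\subseteq L(G)$ dominates itself through its self-loop. I then partition $L(G)$ into $k=\lfloor\sigma(G)/(\alpha(G)+1)\rfloor$ disjoint blocks of size $\alpha(G)+1$ plus a remainder $R$ of size $r=\sigma(G)-k(\alpha(G)+1)\in\{0,\dots,\alpha(G)\}$, apply the lemma to each block to extract a size-$\alpha(G)$ sub-dominator, and take $D$ to be the union of these sub-dominators with $R$. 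This gives $|D|=k\alpha(G)+r=\sigma(G)-\lfloor\sigma(G)/(\alpha(G)+1)\rfloor$.

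To conclude it suffices to check that $D$ dominates $SO(G)=L(G)\cup E(G)$. The set $L(G)$ is covered by construction: each block is covered by its sub-dominator and each remainder vertex covers itself. For $v\in E(G)$, strong observability combined with the absence of a self-loop at $v$ forces $V\setminus\{v\}\subseteq N_{in}(v)$; any $d\in D\subseteq L(G)$ is automatically distinct from $v$ (since $v\notin L(G)$), so $d\in N_{in}(v)$ and $d$ dominates $v$. The main obstacle is essentially the key lemma on $(\alpha(G)+1)$-tuples; once this is in hand the partitioning and the $E(G)$-coverage step are routine. A subtlety worth flagging is that $E(G)\cap L(G)=\emptyset$ by definition is exactly what lets the self-loop dominators cover $E(G)$ ``for free'', which is why the final count only involves $\sigma(G)$ and $\alpha(G)$ and not $|E(G)|$.
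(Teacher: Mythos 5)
Your proof is correct and follows essentially the same route as the paper's: any self-loop vertex dominates all of $E(G)$ because a loopless strongly observable vertex must satisfy $V\setminus\{v\}\subseteq N_{in}(v)$, and $L(G)$ is covered by partitioning it into blocks of size $\alpha(G)+1$, each of which (being non-independent) is dominated by $\alpha(G)$ of its members. The only difference is organizational: the paper splits Case 2 into sub-cases ($\alpha(G)=1$ via a separate pairing lemma, $\alpha(G)\geq\sigma(G)$, and $1<\alpha(G)<\sigma(G)$), whereas your single grouping argument covers all of them uniformly.
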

\begin{proof}
Recall that $L(G) = \{v \in V: \{v\} \in N_{in}(v)\}$ is the set of vertices that do  have a self-loop. It follows that $L(G)\cup E(G) = SO(G)$. Moreover, if $v\in E(G)$, then by definition $v\notin L(G)$, and so $v$ must have in-edges from every other vertex, that is  $N_{in}(v)=V\setminus\{v\}$.

\begin{case}
In the first case, since $E(G)=SO(G)$, if $|SO(G)|=1$ we only need one vertex to dominate $SO(G)$. Instead, if $|SO(G)|>1$, it is possible to dominate $SO(G)$ with two vertices (just pick two vertices from $E(G)$).
\end{case}
    \begin{case} For the second case, since any $v\in E(G)$ lacks a self-loop, strong observability forces $v$ to have edges from every other vertex. Hence, any vertex in $L(G)$ has an out-edge to $v\in E(G)$, and thus dominates  $E(G)$. Therefore we only need to find the domination number of $L(G)$.

    \begin{enumerate}
        \item The case for $\alpha(G)=1$ was proved in \cref{lemma:domination_strongly_observable_1}.

        \item Now, assume $1<\alpha(G)$ and $\alpha(G)\geq \sigma(G)$. Note that we can always dominate $L(G)$ with $\sigma(G)$ elements, and since $\sigma(G)/(\alpha(G)+1)<1$, we have that the statement of case 1 holds.
        \item In the last case we have $1<\alpha(G)<\sigma(G)$ .

        Because  $\alpha(G)<\sigma(G)$, then $L(G)$ is not an independent set--there must be edges among its vertices. Hence, by \cref{lemma:simple_domination_selfloops}, in any subset of $\alpha(G)+1$ vertices from $L(G)$, it is possible to dominate that subset with at most $\alpha(G)$ vertices.
        Therefore, consider the two cases below:

\begin{itemize}
    \item If $\sigma(G)$ is divisible by $\alpha(G)+1$, then divide $L(G)$ into $m=\lfloor \sigma(G)/(\alpha(G)+1)\rfloor$  sets $L_i(G)$, each of size $\alpha(G)+1$, so that $\cup_i L_i(G)=L(G)$.

        In each set  $L_i(G)$ at most $\alpha(G)$ vertices are needed to dominate it. 
    \item Alternatively, if $\sigma(G)$ is \emph{not} divisible by $\alpha(G)+1$, then there is a leftover block of size $r<\alpha(G)+1$, with
    \[
    r + m (\alpha(G)+1)=\sigma(G).
    \]
    Note that at most $r$ vertices are needed to dominate this set.
\end{itemize}
    
Hence, in either case, the dominating‐set size of $L(G)$ is at most $m\alpha(G)+r$, which is
\[
m\alpha(G)+r = m\alpha(G)+\sigma(G)- m(\alpha(G)+1)= \sigma(G) - m=\sigma(G)-\left\lfloor \frac{\sigma(G)}{\alpha(G)+1} \right\rfloor.
\]
        
        \end{enumerate}
        \end{case}
\end{proof}
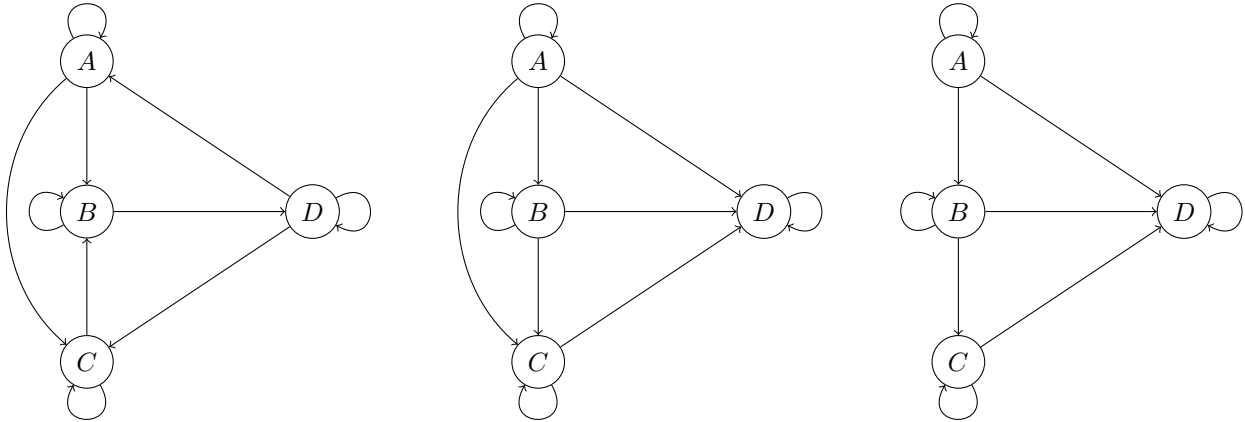
\begin{figure}[t]

\begin{minipage}[t]{0.3\textwidth}
    
\begin{tikzpicture}[
  vertex/.style = {circle, draw, minimum size=0.7cm},
  loop/.style = {looseness=3, in=60, out=120, min distance=7mm},
  every edge/.style = {draw, thick}
]

\node[vertex] (A1) at (0,0) {$A$};
\node[vertex] (B1) at (0,-2) {$B$};
\node[vertex] (C1) at (0, -4) {$C$};
\node[vertex] (D1) at (3,-2) {$D$};

\draw[->,loop] (A1) to (A1);
\draw[->,loop, in=150, out=210] (B1) to (B1);
\draw[->,loop, in=240, out=300] (C1) to (C1);
\draw[->,loop, in=330, out=30] (D1) to (D1);
\draw[->] (A1) -- (B1);
\draw[->] (C1) -- (B1);

\draw[->] (D1) -- (A1);
\draw[->] (B1) -- (D1);
\draw[->] (D1) -- (C1);

\draw[->, bend right=50] (A1) to (C1);

\end{tikzpicture}

\end{minipage}
\hfill
\begin{minipage}[t]{0.3\textwidth}
    
\begin{tikzpicture}[
  vertex/.style = {circle, draw, minimum size=0.7cm},
  loop/.style = {looseness=3, in=60, out=120, min distance=7mm},
  every edge/.style = {draw, thick}
]

\node[vertex] (A1) at (0,0) {$A$};
\node[vertex] (B1) at (0,-2) {$B$};
\node[vertex] (C1) at (0, -4) {$C$};
\node[vertex] (D1) at (3,-2) {$D$};

\draw[->,loop] (A1) to (A1);
\draw[->,loop, in=150, out=210] (B1) to (B1);
\draw[->,loop, in=240, out=300] (C1) to (C1);
\draw[->,loop, in=330, out=30] (D1) to (D1);
\draw[->] (A1) -- (B1);
\draw[->] (B1) -- (C1);

\draw[->] (A1) -- (D1);
\draw[->] (B1) -- (D1);
\draw[->] (C1) -- (D1);
\draw[->, bend right=50] (A1) to (C1);
\end{tikzpicture}
\end{minipage}
\hfill
\begin{minipage}[t]{0.3\textwidth}
    
\begin{tikzpicture}[
  vertex/.style = {circle, draw, minimum size=0.7cm},
  loop/.style = {looseness=3, in=60, out=120, min distance=7mm},
  every edge/.style = {draw, thick}
]

\node[vertex] (A1) at (0,0) {$A$};
\node[vertex] (B1) at (0,-2) {$B$};
\node[vertex] (C1) at (0, -4) {$C$};
\node[vertex] (D1) at (3,-2) {$D$};

\draw[->,loop] (A1) to (A1);
\draw[->,loop, in=150, out=210] (B1) to (B1);
\draw[->,loop, in=240, out=300] (C1) to (C1);
\draw[->,loop, in=330, out=30] (D1) to (D1);
\draw[->] (A1) -- (B1);
\draw[->] (B1) -- (C1);

\draw[->] (A1) -- (D1);
\draw[->] (B1) -- (D1);
\draw[->] (C1) -- (D1);
\end{tikzpicture}

\end{minipage}

    \caption{Example of strongly observable graphs and their domination number. {\bf On the left}: a graph with $\sigma(G)=4$ and $\alpha(G)=1$. The smallest sets of vertices that dominate this graph are $\{A,B\},\{B,D\}$. The maximally independent sets are ${\cal I}=\{\{A\},\{B\},\{C\},\{D\}\}$. Note that $\sigma(G)-\lfloor\sigma(G)/(\alpha(G)+1)\rfloor = 4-\lfloor 4/2 \rfloor = 2$.
     {\bf In the middle}: a graph with $\alpha(G)=1,\sigma(G)=4$. The smallest set of vertices that dominate the graph is $\{A\}$. The maximally independent sets are ${\cal I}=\{\{A\},\{B\},\{C\},\{D\}\}$. We have $\sigma(G)-\lfloor\sigma(G)/(\alpha(G)+1)\rfloor = 2$.
      {\bf On the right}: a graph with $\alpha(G)=2,\sigma(G)=4$. The smallest sets of vertices that dominate the graph are $\{A,B\},\{A,C\}$. The maximally independent sets are ${\cal I}=\{\{A,C\}\}$. We have $\sigma(G)-\lfloor\sigma(G)/(\alpha(G)+1)\rfloor = 4-\lfloor 4/3\rceil =3$.
     }
    \label{fig:example_graphs_proof_scaling}
\end{figure}

\newpage
\section{Sample Complexity Lower Bounds}
The sample complexity analysis delves on the required minimum amount of \emph{evidence} needed to discern between different hypotheses (e.g., vertex $v$ is optimal vs vertex $v$ is not optimal). The evidence is quantified by the log-likelihood ratio of the observations under the true model and a \emph{confusing model}. This confusing model, is usually, the model that is \emph{statistically} closer to the true model, while admitting a different optimal vertex.

To state the lower bounds, we first define the concept of \emph{absolute continuity} between two models.
For any two  models $\nu=\{G,(\nu_u)_{u\in V}\},\nu'=\{G',(\nu_u')_{u\in V}\}$ with the same number of vertices  we say that $\nu$ is absolutely continuous w.r.t $\nu'$, that is $\nu\ll \nu'$, if for all $(v,u)\in V^2$ we have $\nu_{v,u} \ll \nu_{v,u}'$

Given this definition of absolute continuity, we can define the set of confusing models as follows
\[{\rm Alt}(\nu)\coloneqq \{\nu': a^\star(\nu)\neq a^\star(\nu'), \nu \ll \nu'\},\]
which is the set of models for which $a^\star(\nu)$ is not optimal.  We also denote by ${\rm Alt}_u(\nu) = \{\nu': \mu_{u}'> \mu_{a^\star}'\}$ the set of models where $u\neq a^\star(\nu)$ may be the optimal vertex in $\nu'$.
\subsection{The Uninformed Setting}
\label{subsec:proof_lb}
In this section we prove \cref{thm:lb_general} and \cref{prop:unidentifiability_bernoulli}. We start by stating a general expression of the lower bound.
\subsubsection{General lower bound expression}
In \cref{thm:lb_general_proof} we state a general expression for $T^\star(\nu)$ and then show  in the next sections the proofs of \cref{thm:lb_general} and \cref{prop:unidentifiability_bernoulli}.
\begin{theorem}\label{thm:lb_general_proof}
     For any $\delta$-PC algorithm and any model $\nu$ satisfying \cref{assump:nontrivial_problem}, we have that
     \begin{equation}
         \mathbb{E}_\nu[\tau]\geq T^\star(\nu){\rm kl}(\delta, 1-\delta),
     \end{equation}
     where $(T^\star(\nu))^{-1}$ is equivalent to the following two expressions
     \begin{equation}
         (T^\star(\nu))^{-1}=\begin{cases}\sup_{\omega\in \Delta(V)}\inf_{\nu'\in {\rm Alt}(\nu)}\sum_{v\in V}  \omega_v \sum_{u\in N_{out}(v)}{\rm KL}(\nu_{v,u},\nu_{v,u}') ,\\ 
         \sup_{\omega\in \Delta(V)}\inf_{\nu'\in {\rm Alt}(\nu)}\sum_{u\in V}   \sum_{v\in N_{in}(u)}\omega_v{\rm KL}(\nu_{v,u},\nu_{v,u}').
         \end{cases}
     \end{equation}
     We refer to the first expression as “pull-based", partitioning the log‐likelihood by which vertex was chosen. We refer to the second expression as  “observation‐based", partitioning by which vertex  was observed.
 \end{theorem}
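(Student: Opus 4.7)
The strategy is a direct extension of the change-of-measure (transportation) argument of \cite{kaufmann2016complexity} and \cite{garivier2016optimal} to our observation model. Fix an alternative $\nu' \in \mathrm{Alt}(\nu)$ and consider the event $\mathcal{E} = \{\hat a_\tau = a^\star(\nu)\}$, which is $\mathcal{F}_\tau$-measurable. Since $a^\star(\nu) \neq a^\star(\nu')$ and the algorithm is $\delta$-PC, we have $\mathbb{P}_\nu(\mathcal{E}) \geq 1 - \delta$ and $\mathbb{P}_{\nu'}(\mathcal{E}) \leq \delta$. The classical log-likelihood argument, combining Wald's identity for the likelihood ratio with the data-processing inequality, gives
\begin{equation*}
\mathbb{E}_\nu\!\left[ \log \frac{\mathrm{d}\mathbb{P}_\nu}{\mathrm{d}\mathbb{P}_{\nu'}}\bigg|_{\mathcal{F}_\tau}\right] \geq \mathrm{kl}\big(\mathbb{P}_\nu(\mathcal{E}), \mathbb{P}_{\nu'}(\mathcal{E})\big) \geq \mathrm{kl}(\delta, 1-\delta),
\end{equation*}
where the last step uses monotonicity of $\mathrm{kl}$ in both arguments.

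Next, I would compute the expected log-likelihood ratio explicitly using the structure of the observation model. In the uninformed setting, conditional on $V_t = v$, the observed vector $Z_t$ has independent components with $Z_{t,u} \sim \nu_{v,u}$. Hence the per-step log-likelihood ratio factorizes as $\sum_u \log \tfrac{\mathrm{d}\nu_{v,u}}{\mathrm{d}\nu'_{v,u}}(Z_{t,u})$, and Wald's identity applied to the sampling rule yields
\begin{equation*}
\mathbb{E}_\nu\!\left[ \log \frac{\mathrm{d}\mathbb{P}_\nu}{\mathrm{d}\mathbb{P}_{\nu'}}\bigg|_{\mathcal{F}_\tau}\right] = \sum_{v \in V} \mathbb{E}_\nu[N_v(\tau)] \sum_{u \in V} \mathrm{KL}(\nu_{v,u}, \nu'_{v,u}).
\end{equation*}
Without loss of generality, I would argue that the infimum over $\nu' \in \mathrm{Alt}(\nu)$ is attained (or approached) by alternatives preserving the graph topology, i.e.\ $G' = G$: any alternative with $G' \neq G$ either violates $\nu \ll \nu'$ on edges that $\nu$ activates with positive probability or incurs at least as much KL on those edges as a same-topology alternative that only perturbs the means. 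Under this restriction, $\mathrm{KL}(\nu_{v,u}, \nu'_{v,u}) = 0$ for $u \notin N_{out}(v)$, so the inner sum reduces to $u \in N_{out}(v)$.

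Finally, dividing by $\mathbb{E}_\nu[\tau]$ and setting $\omega_v = \mathbb{E}_\nu[N_v(\tau)]/\mathbb{E}_\nu[\tau] \in \Delta(V)$, then taking the infimum over alternatives and the supremum over $\omega$, yields the ``pull-based'' expression for $(T^\star(\nu))^{-1}$ together with the lower bound $\mathbb{E}_\nu[\tau] \geq T^\star(\nu)\,\mathrm{kl}(\delta, 1-\delta)$. The equivalence with the ``observation-based'' form is immediate from Fubini: for every fixed $\omega$ and $\nu'$, swapping the order of summation gives $\sum_{v} \omega_v \sum_{u \in N_{out}(v)} = \sum_{u} \sum_{v \in N_{in}(u)} \omega_v$, since both double sums range over the edge set $E = \{(v,u) : G_{v,u} > 0\}$. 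The main technical obstacle lies in the second paragraph, namely the careful justification that restricting to same-topology alternatives does not loosen the bound; this requires a case analysis at edges where $G_{v,u} = 0$ but $G'_{v,u} > 0$ (or vice versa) to verify that such perturbations are never strictly optimal for the adversary.
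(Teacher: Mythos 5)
Your proposal follows essentially the same route as the paper: the transportation inequality of \cite{kaufmann2016complexity} combined with Wald's identity for the log-likelihood ratio, the identification $\omega_v=\mathbb{E}_\nu[N_v(\tau)]/\mathbb{E}_\nu[\tau]$, and then the infimum over ${\rm Alt}(\nu)$ followed by the supremum over $\Delta(V)$; your derivation of the observation-based form by swapping the order of summation over the edge set is the same content as the paper's second decomposition, just stated more economically. One caveat: you should not claim that the infimum is attained by alternatives with $G'=G$. The theorem keeps the infimum over all of ${\rm Alt}(\nu)$ (and, e.g., the Bernoulli unidentifiability result in the paper relies precisely on alternatives with $G'\neq G$ on the edges of $G$); what you actually need, and what your flagged case analysis delivers, is only that for $(v,u)$ with $G_{v,u}=0$ the term ${\rm KL}(\nu_{v,u},\nu'_{v,u})=-\ln\mathbb{P}_{\nu'_{v,u}}(\{0\})\geq 0$ is nonnegative and can be set to zero by taking $G'_{v,u}=0$ without affecting membership in ${\rm Alt}(\nu)$ or the remaining terms, so the sum may be restricted to $u\in N_{out}(v)$ while leaving the edge weights of $G'$ on $E$ free. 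Note also that $E\subseteq E'$ is forced by $\nu\ll\nu'$, so the only direction requiring argument is the one you identified.
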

 \begin{proof}

Consider two bandit models $\nu=\{G,(\nu_u)_u\},\nu'=\{G',(\nu_u')_u\}$ with the same number of vertices and unique optimal vertex in both models,  such that $\nu\ll\nu'$. For each $v$ there exists a measure $\lambda_{v}$ such that $\nu_{v}$ and $\nu_v'$ have, respectively, densities $f_v$ and $f_v'$. Similarly, for $\nu_{v,u}$ and $\nu_{v,u}'$ there exists a measure $f_{v,u}$ and $f_{v,u}'$ respectively.

\paragraph{First expression (pull-based).}
Hence, consider the log-likelihood ratio between $\nu$ and $\nu'$ of the data observed up to time $t$, and consider writing it in terms of the out-neighborhood of the vertices selected by the algorithm:
\begin{align*}
L_t &= \ln \frac{{\rm d}\mathbb{P}_\nu(V_1,Z_1,\dots, V_t, Z_t)}{{\rm d}\mathbb{P}_{\nu'}(V_1,Z_1,\dots, V_t, Z_t)},\\
&=\sum_{n=1}^t \sum_{u\in N_{out}(V_n)} \ln\left(\frac{f_{v,u}(Z_{n,u})}{f_{v,u}'(Z_{n,u})}\right), \\
&=\sum_{n=1}^t \sum_{v\in V} \sum_{u\in N_{out}(v)} \mathbf{1}_{\{V_n=v\}}\ln\left(\frac{f_{v,u}(Z_{n,u})}{f_{v,u}'(Z_{n,u})}\right),\\
&=\sum_{v\in V} \sum_{u\in N_{out}(v)}\sum_{s=1}^{N_v(t)}\ln\left(\frac{f_{v,u}(W_{s,(v,u)})}{f_{v,u}'(W_{s,(v,u)})}\right),\\
 \end{align*}
 where  $(W_{s,(v,u)})_s$  is an i.i.d. sequence of samples observed from $\nu_{v,u}$. Hence, if we take the expectation with respect to $\nu$, by Wald's lemma, we have that
 \begin{align*}
 \mathbb{E}_\nu[L_t] &= \sum_{v\in V} \sum_{u\in N_{out}(v)} \mathbb{E}_\nu[N_v(t)]{\rm KL}(\nu_{v,u},\nu_{v,u}'),\\
 &=\sum_{v\in V}  \mathbb{E}_\nu[N_v(t)]\sum_{u\in N_{out}(v)}{\rm KL}(\nu_{v,u},\nu_{v,u}').
 \end{align*}
Therefore, applying  \cite[Lemma 1]{kaufmann2016complexity} at $t=\tau$, we find that for any $\delta$-PC algorithm we have that
\[
\sum_{v\in V}  \mathbb{E}_\nu[N_v(\tau)]\sum_{u\in N_{out}(v)}{\rm KL}(\nu_{v,u},\nu_{v,u}') \geq {\rm kl}(\delta,1-\delta).
\]
Consider the set of confusing models
${\rm Alt}(\nu)= \{\nu'=(\mu',G'): a^\star(\mu)\neq a^\star(\mu'), \nu \ll \nu'\},$
and define the selection rate of a vertex $v$ as
$\omega_v=\mathbb{E}_\nu[N_v(\tau)]/\mathbb{E}_\nu[\tau]$. Then, by minimizing over the set of confusing models, and then optimizing  $\omega=(\omega_v)_{v\in V}$ over the simplex $\Delta(V)$, we obtain
\[
\mathbb{E}_\nu[\tau]\underbrace{\sup_{\omega\in \Delta(V)}\inf_{\nu'\in {\rm Alt}(\nu)}\sum_{v\in V}  \omega_v \sum_{u\in N_{out}(v)}{\rm KL}(\nu_{v,u},\nu_{v,u}')}_{\eqqcolon (T^\star(\nu))^{-1}} \geq {\rm kl}(\delta,1-\delta).
\]
and therefore $\mathbb{E}_\nu[\tau]\geq T^\star  (\nu){\rm kl}(\delta,1-\delta)$.

\paragraph{Second expression (obsrvation-based).}
The second version of $T^\star(\nu)$ comes from considering the  in-neighborhood of $v$ for each vertex:

\begin{align*}
L_t &= \sum_{n=1}^t \sum_{u\in V} \sum_{v\in N_{in}(u)}\mathbf{1}_{\{V_n=v\}} \ln\left(\frac{f_{v,u}(Z_{n,u})}{f_{v,u}'(Z_{n,u})}\right),\\
&= \sum_{u\in V} \sum_{v\in N_{in}(u)}\sum_{s=1}^{N_v(t)}\ln\left(\frac{f_{v,u}(W_{s,(v,u)})}{f_{v,u}'(W_{s,(v,u)}))}\right)
\end{align*}

Hence
\begin{align*}
    \mathbb{E}_\nu[L_t] &= \sum_{u\in V}\sum_{v\in N_{in}(u)}\mathbb{E}_\nu[N_v(t)]{\rm KL}(\nu_{v,u},\nu_{v,u}'),
\end{align*}
from which we can  immediately conclude the proof by following the same steps as for the previous expression.
\end{proof}

\subsubsection{Continuous vs discrete rewards}\label{subsubsec:continuous_vs_Discrete}
Before proceeding further in our analysis, we rewrite the KL-divergence in terms of the associated Radon-Nykodim derivatives.
Note that for a product random variable $Z=XY$ with $Z\sim \nu_{X,Y}$,  we have that $\mathbb{P}_{Z}(A) = \int_A f_Z(z) {\rm d}\mu(z)$ with respect to some dominating measure $\mu(z)$. We consider some cases:
\begin{itemize}
    \item {\bf Continuous case:} for $Y$ distributed as a Bernoulli of parameter $p$, and $X$ as a continuous r.v. with density $f_X(x)$ we have that $\mathbb{P}_Z(A) = (1-p)\mathbf{1}_{\{0\in A\}} + p \int_A f_X(z) {\rm d}\lambda(z)$ where $\lambda$ is the Lebesgue measure Let
$\mu(A) = \delta_0(A) + \lambda(A)$ be the dominating measure. To find $f_Z(z)$ we can apply  the Radon-Nykodim derivative  in $z=0$, which tells us that
\[
\mathbb{P}_Z(0) = 1-p = \int_{\{0\}} f_Z(z) {\rm d}\mu(z) = f_Z(0).
\]
On the other hand for $z\neq 0$ we have
\[
\mathbb{P}_Z(A) = p\int_A f_X(z) {\rm d}\lambda(z)= \int_A f_Z(z) {\rm d}\mu(z)=\int_A f_Z(z) {\rm d}\lambda(z)  \Rightarrow f_Z(z) = p f_X(z) \hbox{ a.e.}
\]
Therefore
\[
f_Z(z) = (1-p)\mathbf{1}_{\{z=0\}} + p f_X(z)\mathbf{1}_{\{z\neq 0\}}.
\]
In words, the "continuous" part has no contribution to the overall probability mass when $z=0$, since the Lebesgue measure of $\{X=0\}$ is $0$, while for $z\neq 0$ the main contribution comes from the continuous part.  In the setting studied in this paper, the intuition is that when we observe $0$, then almost surely we know its due to the edge not being activated.
\item {\bf Discrete case:} for $Y$ distributed as a Bernoulli of parameter $p$, and $X$ as a categorical r.v. over $\{0,\dots, N\}$ with probabilities $\{q_0,\dots, q_N\}$ we have that 
$\mu(A) = \sum_{i=0}^N\delta_i(A)$, and
\[
f_Z(z) = (1-p)\mathbf{1}_{\{z=0\}} + p\left[\sum_{i=0}^N q_i \mathbf{1}_{\{z=i\}}\right],
\]
hence $\mathbb{P}_Z(Z=0) = 1-p + pq_0$ and $\mathbb{P}(Z=i)=pq_i$ for $i\in\{1,\dots,N\}$.
\end{itemize}

\subsubsection{The continuous case: proof of \cref{thm:lb_general}}\label{subsubsec:sample_complexity_lb_continuous_case}
We now consider the continuous case. From the second expression (observation-based) in  \cref{thm:lb_general_proof} we derive the result of \cref{thm:lb_general}.

\begin{proof}[Proof of \cref{thm:lb_general}]
We continue from the result of \cref{thm:lb_general_proof}.
    Note that ${\rm Alt}(\nu)=\{\nu'=(G',\{\nu_u'\}_u)\mid \exists v_0\neq a^\star: \mu_{v_0}' > \mu_{a^\star}'\} $, where $a^\star=a^\star(\mu)$. Hence, letting ${\rm Alt}_v(\nu) = \{\nu'\mid \mu_{v}'> \mu_{a^\star}'\}$, we have ${\rm Alt}(\nu)=\cup_{v\neq a^\star} {\rm Alt}_v(\nu)$. Therefore, due to the properties of the KL divergence we obtain
    \begin{align*}
        &\inf_{\nu'\in {\rm Alt}(\nu)}\sum_{u\in V}\sum_{v\in N_{in}(u)}\omega_v{\rm KL}(\nu_{v,u},\nu_{v,u}')\\
        &\qquad = \min_{u\neq a^\star}\inf_{\nu'\in {\rm Alt}_u(\nu): \mu_v'>\mu_{a^\star}'}\sum_{u\in V}\sum_{v\in N_{in}(u)}\omega_v{\rm KL}(\nu_{v,u},\nu_{v,u}').
    \end{align*}
    
    Following the discussion in \cref{subsubsec:continuous_vs_Discrete},  we can write
\begin{align*}
{\rm KL}(\nu_{v,u},\nu_{v,u}') &= \mathbb{E}_{Z\sim \nu_{v,u}}\left[ \ln \frac{{\rm d}\mathbb{P}_{\nu_{v,u}}(Z)/{\rm d}\mu(Z)}{{\rm d}\mathbb{P}_{\nu_{v,u}}'(Z)/{\rm d}\mu(Z)}\right],\\
&= \mathbb{E}_{Z\sim \nu_{v,u}}\left[ \ln \frac{f_{v,u}(Z)}{f_{v,u}'(Z)}\right],\\
&= (1-G_{v,u})\ln \frac{1-G_{v,u}}{1-G_{v,u}'} + G_{v,u}\mathbb{E}_{Z\sim \nu_{u}}\left[ \ln \frac{G_{v,u}f_{u}(Z)}{G_{v,u}'f_{u}'(Z)}\right],\\
&= {\rm kl}(G_{v,u},G_{v,u}') + G_{v,u}\mathbb{E}_{Z\sim \nu_{u}}\left[ \ln \frac{f_{u}(Z)}{f_{u}'(Z)}\right],\\
&= {\rm kl}(G_{v,u},G_{v,u}') + G_{v,u}{\rm KL}(\nu_u,\nu_u').
\end{align*}
Therefore, noting that the constraint involves only the pair $(\nu_u',\nu_{a^\star}')$ through their parameters $(\mu_u',\mu_{a^\star}')$, we conclude that
 \begin{align*}
        &\inf_{\nu'\in {\rm Alt}(\nu)}\sum_{u\in V}\sum_{v\in N_{in}(u)}\omega_v{\rm KL}(\nu_{v,u},\nu_{v,u}')\\
         &\qquad = \min_{u\neq a^\star}\inf_{\nu'\in {\rm Alt}_u(\nu): \mu_v'>\mu_{a^\star}'}\sum_{u\in V}\sum_{v\in N_{in}(u)}\omega_v \left(  {\rm kl}(G_{v,u},G_{v,u}') + G_{v,u}{\rm KL}(\nu_u,\nu_u').\right),\\
        &\qquad = \min_{u\neq a^\star}\inf_{\nu': \mu_u'\geq\mu_{a^\star}'}\sum_{v\in N_{in}(u)}\omega_v G_{v,u}{\rm KL}(\nu_u,\nu_u') + \sum_{w\in N_{in}(a^\star)}\omega_w G_{w,a^\star}{\rm KL}(\nu_{a^\star},\nu_{a^\star}'),\\
        &\qquad = \min_{u\neq a^\star}\inf_{\nu': \mu_u'\geq\mu_{a^\star}'} m_u{\rm KL}(\nu_u,\nu_u') + m_{a^\star}{\rm KL}(\nu_{a^\star},\nu_{a^\star}'),
    \end{align*}
    where $m_u\coloneqq \sum_{v\in N_{in}(u)}\omega_v G_{v,u}$ and $m_{a^\star}=\sum_{\omega\in N_{in}(a^\star)}\omega_w G_{w,a^\star}$.
    
    Therefore, by optimizing over $\nu'$ as in \cite[Lemma 3]{garivier2016optimal} we obtain
    \[
    (T^\star(\nu))^{-1} = \sup_{\omega\in \Delta(V)} \min_{u\neq a^\star} (m_u +m_{a^\star}) I_{\frac{m_{a^\star}}{m_u+m_{a^\star}}}(\nu_{a^\star},\nu_u) \hbox{ s.t. } m_u = \sum_{v\in N_{in}(u)}\omega_v G_{v,u}.
    \]
\end{proof}

\subsubsection{The discrete case: proof of \cref{prop:unidentifiability_bernoulli}}\label{subsubsec:sample_complexity_discrete_case}

We find that if $(\nu_u)_{u\in V}$ are Bernoulli distributed, then we obtain that it is not possible, in general, to estimate the best vertex. The reason is simple: without knowing which edge was activated, the learner does not know how to discern between the randomness of the reward and the randomness of the edge.

\begin{proof}[Proof of \cref{prop:unidentifiability_bernoulli}]
    Let $n\in \mathbb{R}_+^K$ and consider the optimization problem
    \[\inf_{\nu'\in {\rm Alt}(\nu)}\sum_{u\in V} \sum_{v\in N_{in}(u)} n_v{\rm KL}(\nu_{v,u},\nu_{v,u}').\]
    If $\nu_{v,u}$ is a Bernoulli distribution of parameter $q_{v,u}\coloneqq G_{v,u}\mu_u$ (sim. for $\nu_{v,u}'$ with $q_{v,u}'\coloneqq G_{v,u}'\mu_u'$), then we can write
    \begin{align*}
    \inf_{\nu'\in {\rm Alt}(\nu)}\sum_{u\in V}  \sum_{v\in N_{in}(u)}n_v{\rm KL}(\nu_{v,u},\nu_{v,u}')=\inf_{\nu'\in {\rm Alt}(\nu)}\sum_{u\in V}  \sum_{v\in N_{in}(u)}n_v{\rm kl}(q_{v,u},q_{v,u}').
    \end{align*}
    Using the fact that $\nu'\in {\rm Alt}(\nu)$, and that $\nu'$ is observable, it must imply that there exists $(v,u)\in V^2$ such that $\mu_u'>\mu_{a^\star}'$ and $G_{v,u}'>0$. Similarly, from the observability of $\nu$ $\exists w\in V$ s.t. $G_{w,a^\star}>0$. Hence, by absolute continuity, we have $G_{w,a^\star}'>0$, otherwise the event ${\cal E}=\{Z_{w,a^\star}=1\}$ would satisfy $\mathbb{P}_\nu({\cal E})>0$ and $\mathbb{P}_{\nu'}({\cal E}) = 0$.
    
    Then, similarly as in previous results,  we have that
\begin{align*}
    &\inf_{\nu'\in {\rm Alt}(\nu)}\sum_{u\in V}  \sum_{v\in N_{in}(u)}n_v{\rm kl}(q_{v,u},q_{v,u}'),\\
    &\qquad=\min_{u\neq a^\star}\min_{v\in N_{in}(u),w\in N_{in}(a^\star)}\inf_{\mu',G': \mu_{u}'\geq\mu_{a^\star}', G_{v,u}',G_{w,a^\star}'\geq 0} n_v{\rm kl}(q_{v,u},q_{v,u}')  + n_w{\rm kl}(q_{w,a^\star},q_{w,a^\star}').
    \end{align*}
    Therefore, for $u\neq a^\star$, $v\in N_{in}(u), w\in N_{in}(a^\star)$, we are interested in the following non-convex problem
    \begin{equation*}
    \begin{aligned}
     \min_{\mu', G'>0} \quad & n_v{\rm kl}(q_{v,u},q_{v,u}')  + n_w{\rm kl}(q_{w,a^\star},q_{w,a^\star}')\\
    \textrm{s.t.} \quad 
    &q_{v,u}'=G_{v,u}'\mu_{u}'  \qquad \forall (v,u)\in V^2,\\
      &\mu_u' \geq \mu_{a^\star}'.
    \end{aligned}
\end{equation*}
However, the solution, as one may expect, is $0$. Simply note one can take $q_{v,u}'=q_{v,u}$ by choosing $\mu_{u}'=\mu_{a^\star}$ and $G_{v,u}'=q_{v,u}/\mu_{a^\star}$ (which is $\leq 1$). Similarly, we have $q_{w,a^\star}'=q_{w,a^\star}$ by choosing $ \mu_{a^\star}'=\mu_{a^\star}$ and $G_{w,a^\star}'= q_{w,a^\star}/\mu_{a^\star}= G_{w,a^\star}$. Henceforth, in the Bernoulli  case we have that $(T^\star(\nu))^{-1} = 0$.
\end{proof}

\subsection{The Informed Setting}\label{subsubsec:informed_setting_appendix}

We now give a proof of \cref{thm:lb_informed}, which follows closely the one in the uninformed case.

\begin{proof}[Proof of \cref{thm:lb_informed}]
First note that knowing the set of edges that was activated is equivalent to knowing the value of $(Y_{t,(V_t,u)})_u$. Then, denote the density associated to $Y_{t,(V_t,u)}$ by $g_{v,u}$ under $\nu$ (sim. $g_{v,u}'$ for $\nu'$). We can write the log-likelihood ratio as
    \begin{align*}
L_t &= \sum_{n=1}^t \sum_{u\in V} \sum_{v\in N_{in}(u)}\mathbf{1}_{\{V_n=v\}} \ln\left(\frac{{\rm d}\mathbb{P}_\nu(Z_{n,u}, Y_{n,(v,u)} V_n)}{{\rm d}\mathbb{P}_{\nu'}(Z_{n,u}, Y_{n,(v,u)}, V_n)}\right),\\
 &= \sum_{n=1}^t \sum_{u\in V} \sum_{v\in N_{in}(u)}\mathbf{1}_{\{V_n=v\}}\Bigg[\mathbf{1}_{\{Y_{n,(v,u)}=1\}} \ln\left(\frac{f_u(Z_{n,u}) g_{v,u}(1)}{f_u'(Z_{n,u}) g_{v,u}'(1)}\right)\\
&\qquad\qquad \qquad +\mathbf{1}_{\{Y_{n,(v,u)}=0\}} \ln\left(\frac{g_{v,u}(0)}{ g_{v,u}'(0)}\right)\Bigg],\\
 &=  \sum_{u\in V} \sum_{v\in N_{in}(u)}\sum_{n=1}^{N_v(t)}\Bigg[\mathbf{1}_{\{Y_{n,(v,u)}=1\}} \ln\left(\frac{f_u(W_{n,u}) g_{v,u}(1)}{f_u'(W_{n,u}) g_{v,u}'(1)}\right)  +\mathbf{1}_{\{Y_{n,(v,u)}=0\}} \ln\left(\frac{g_{v,u}(0)}{ g_{v,u}'(0)}\right)\Bigg].
\end{align*}
where $(W_{n,u})_n$ is a sequence of i.i.d. random variables distributed according to $\nu_u$. 
Hence
\begin{align*}
\mathbb{E}_\nu[L_\tau] &= \sum_{u\in V} \sum_{v\in N_{in}(u)}\mathbb{E}_\nu\Bigg[ \sum_{n=1}^{\infty}\mathbf{1}_{\{N_v(\tau)\geq n\}}\Big[\mathbf{1}_{\{Y_{n,(v,u)}=1\}} \ln\left(\frac{f_u(W_{n,u}) }{f_u'(W_{n,u}) }\right)\\
&\qquad\qquad \mathbf{1}_{\{Y_{n,(v,u)}=1\}}\ln\left(\frac{ g_{v,u}(1)}{ g_{v,u}'(1)}\right) +\mathbf{1}_{\{Y_{n,(v,u)}=0\}} \ln\left(\frac{g_{v,u}(0)}{ g_{v,u}'(0)}\right)\Big] \Bigg].
\end{align*}
Note that $\{N_v(\tau)\geq n\} = \{N_v(\tau) \leq n-1\}\in {\cal F}_{n-1}$, therefore $W_{n,u}$ and $Y_{n,(v,u)}$ are independent of that event. Hence
\begin{align*}
\mathbb{E}_\nu[L_\tau] &= \sum_{u\in V} \sum_{v\in N_{in}(u)} \sum_{n=1}^{\infty}\mathbb{P}_\nu(N_v(\tau)\geq n) \left[G_{v,u} {\rm KL}(\nu_u,\nu_u') + {\rm kl}(G_{v,u}, G_{v,u}')\right],\\
&= \sum_{u\in V} \sum_{v\in N_{in}(u)} \mathbb{E}_\nu[N_v(\tau)] \left[G_{v,u} {\rm KL}(\nu_u,\nu_u') + {\rm kl}(G_{v,u}, G_{v,u}')\right].
\end{align*}

Therefore, applying  \cite[Lemma 1]{kaufmann2016complexity} at $t=\tau$,
$
\mathbb{E}_\nu[L_\tau] \geq {\rm kl}(\delta,1-\delta)$.

Consider the set of confusing models
${\rm Alt}(\nu)= \{\nu'=(\mu',G'): a^\star(\mu)\neq a^\star(\mu'), \nu \ll \nu'\},$
and define the selection rate of a vertex $v$ as
$\omega_v=\mathbb{E}_\nu[N_v(\tau)]/\mathbb{E}_\nu[\tau]$. Then, by minimizing over the set of confusing models, and then optimizing over $\omega=(\omega_v)_{v\in V}$ over the simplex $\Delta(V)$, we obtain
\[
\mathbb{E}_\nu[\tau]\underbrace{\sup_{w\in \Delta(V)}\inf_{\nu'\in {\rm Alt}(\nu)}\sum_{u\in V} \sum_{v\in N_{in}(u)} \omega_v \left[G_{v,u} {\rm KL}(\nu_u,\nu_u') + {\rm kl}(G_{v,u}, G_{v,u}')\right]}_{\eqqcolon (T^\star(\nu))^{-1}} \geq {\rm kl}(\delta,1-\delta).
\]
Hence, consider now the expression $\inf_{\nu'\in {\rm Alt}(\nu)}\sum_{u\in V} \sum_{v\in N_{in}(u)} w_v \left[G_{v,u} {\rm KL}(\nu_u,\nu_u') + {\rm kl}(G_{v,u}, G_{v,u}')\right]$ and observe that it simplifies as in the proof of \cref{thm:lb_general}:
\begin{align*}
    \inf_{\nu'\in {\rm Alt}(\nu)}&\sum_{u\in V} \sum_{v\in N_{in}(u)} \omega_v \left[G_{v,u} {\rm KL}(\nu_u,\nu_u') + {\rm kl}(G_{v,u}, G_{v,u}')\right]\\
     &=\min_{u\neq a^\star}\inf_{G',\nu_u',\nu_{a^\star}': \mu_u'\geq\mu_{a^\star}'}\sum_{v\in N_{in}(u)} \omega_v G_{v,u} {\rm KL}(\nu_u,\nu_u')  + \sum_{v\in N_{in}(a^\star)} \omega_v G_{v,a^\star} {\rm KL}(\nu_{a^\star},\nu_{a^\star}') ,\\
     &=\min_{u\neq a^\star}\inf_{G',\nu_u',\nu_{a^\star}': \mu_u'\geq\mu_{a^\star}'} m_u {\rm KL}(\nu_u,\nu_u')  + m_{a^\star} {\rm KL}(\nu_{a^\star},\nu_{a^\star}'),
\end{align*}
as in the proof of \cref{thm:lb_general}. For the known graph case, note that the set of confusing models becomes $
{\rm Alt}'(\nu) \coloneqq \{\nu'=(\mu', G): a^\star(\mu)\neq a^\star(\mu'), \nu_u\ll \nu_u' \ \forall u\in V\}
$, from which one can conclude the same result.
\end{proof}

\subsection{Scaling Properties}\label{app:subsec_scaling}

To gain a better intuition of the characteristic time in \cref{thm:lb_general}, we can focus on the Gaussian case where $\nu_u = {\cal N}(\mu_u,\lambda^2)$. For this case we have that ${\rm KL}(\nu_u,\nu_v) = (\mu_u-\mu_v)^2/(2\lambda^2)$, and $I_\alpha(\nu_u,\nu_v) = \dfrac{\alpha(1-\alpha)(\mu_u-\mu_v)^2}{2\lambda^2}$.
Therefore $T^\star(\nu)$ can be computed by solving the following convex problem
          \begin{equation}\label{eq:characteristic_time_gaussian}
    \begin{aligned}
   T^\star(\nu) =& \mspace{-15mu}  \inf_{m\in \mathbb{R}_+^K,\omega\in \Delta(V)} \max_{u\neq a^\star}\left(m_u^{-1} + m_{a^\star}^{-1}\right) \frac{2\lambda^2}{\Delta_u^2}\\
   &\hbox{ s.t. } m_u = \sum_{v\in N_{in}(u)}\omega_v G_{v,u} \quad \forall u\in V.
    \end{aligned}
\end{equation}

\subsubsection{General case}
We restate here \cref{prop:scaling_weakly_observable_gaussian} and provide a proof.
\begin{proposition}
    For an observable model $\nu=(\{\nu_u\}_u, G)$ with a graph  $G$ satisfying $\delta(G)+\sigma(G)>0$ and  Gaussian random rewards $\nu_u={\cal N}(\mu_u, \lambda^2)$ we can upper bound $T^\star$ as
    \begin{equation*}
        T^\star(\nu) \leq \frac{4\left[\delta(G)+\sigma(G)-\left\lfloor \frac{\sigma(G)}{\alpha(G)+1}\right \rfloor\right]\lambda^2}{\min_{u\neq a^\star} \min(\bar G_u, \bar G_{a^\star}) \Delta_u^2},
    \end{equation*}
    where $\bar G_u \coloneqq \max\left(  \max_{v\in D(G)} G_{v,u},\min_{v\in L(G): G_{v,u}>0}G_{v,u} \right )$ (sim. $\bar G_{a^\star}$).
\end{proposition}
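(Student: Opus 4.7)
\medskip
\noindent\textbf{Proof plan.} The plan is to exhibit an explicit allocation $\omega\in\Delta(V)$ supported on a small set of vertices that together dominate all of $V$, and then evaluate the Gaussian characteristic time formula \eqref{eq:characteristic_time_gaussian} at this allocation. Specifically, let $D(G)$ be a minimum dominating set for $W(G)$ (of size $\delta(G)$), and use Theorem \ref{lemma:domination_strongly_observable} to pick a subset $S\subseteq L(G)$ that dominates $SO(G)$ with $|S|\leq \sigma(G)-\lfloor \sigma(G)/(\alpha(G)+1)\rfloor$. Set $N\coloneqq |D(G)\cup S|\leq \delta(G)+\sigma(G)-\lfloor \sigma(G)/(\alpha(G)+1)\rfloor$ and choose the uniform distribution $\omega_v = \mathbf{1}_{\{v\in D(G)\cup S\}}/N$.

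\medskip
\noindent
The second step is to lower bound the observation rate $m_u = \sum_{v\in N_{in}(u)}\omega_v G_{v,u}$ at each vertex $u\in V$. Since $\omega$ is uniform on $D(G)\cup S$, I would argue $m_u\geq \bar G_u/N$ by splitting into cases: (i) if $u\in W(G)$, then $D(G)$ dominates $u$, so there exists $v\in D(G)\cap N_{in}(u)$ with $G_{v,u}>0$, yielding $m_u\geq \max_{v\in D(G)}G_{v,u}/N$; (ii) if $u\in SO(G)$, then either some $v\in D(G)$ contributes via $V\setminus\{u\}\subseteq N_{in}(u)$, or, since $S$ dominates $SO(G)$, some $v\in S\subseteq L(G)$ with $G_{v,u}>0$ contributes at least $\min_{v\in L(G):G_{v,u}>0}G_{v,u}/N$. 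Taking the maximum of these two lower bounds recovers exactly $\bar G_u/N$.

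\medskip
\noindent
The third step is to apply the Gaussian formula \eqref{eq:characteristic_time_gaussian}, noting that $T^\star(\nu)\leq T(\omega;\nu)=\max_{u\neq a^\star}2\lambda^2(m_u^{-1}+m_{a^\star}^{-1})/\Delta_u^2$. Using the bound from step two, $m_u^{-1}+m_{a^\star}^{-1}\leq N/\bar G_u + N/\bar G_{a^\star}\leq 2N/\min(\bar G_u,\bar G_{a^\star})$. Combining everything,
\begin{equation*}
T^\star(\nu)\leq \frac{4N\lambda^2}{\min_{u\neq a^\star}\min(\bar G_u,\bar G_{a^\star})\Delta_u^2},
\end{equation*}
and substituting the bound on $N$ yields the statement.

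\medskip
\noindent
The main obstacle is step two, namely verifying that $\bar G_u>0$ and that the lower bound on $m_u$ is indeed achieved in every regime. Specifically, one must check that under the hypothesis $\delta(G)+\sigma(G)>0$ together with observability, every $u\in V$ admits at least one contributing term; some care is needed for corner cases, e.g., when $u$ itself lies in $D(G)$ or $S$, or when $SO(G)\setminus L(G)\neq \emptyset$ (handled by the second case of Theorem~\ref{lemma:domination_strongly_observable}). The remaining algebraic manipulations are routine.
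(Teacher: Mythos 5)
Your proposal is correct and follows essentially the same route as the paper's proof: construct a dominating set of size at most $\kappa=\delta(G)+\sigma(G)-\lfloor \sigma(G)/(\alpha(G)+1)\rfloor$ by combining $D(G)$ with the self-loop dominators of $SO(G)$ from \cref{lemma:domination_strongly_observable}, place the uniform allocation on it, lower bound each $m_u$ by $\bar G_u/\kappa$ via the same two-case split, and finish with $a+b\leq 2\max(a,b)$ in the Gaussian characteristic-time formula. The corner cases you flag (e.g., $\sigma(G)=0$ with $SO(G)=E(G)$, handled by Case~1 of \cref{lemma:domination_strongly_observable} and the positivity of $\delta(G)$) are treated at the same level of detail in the paper, so no further work is needed.
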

\begin{proof}
Since $\sigma(G)+\delta(G)>0$, the graph has at least one of the following: (i) a positive weak‐domination number $\delta(G)$ or (ii) at least one strongly observable vertex with a self‐loop $\sigma(G)$ (or both properties hold at the same time).

Consider  the expression of $T^\star(\nu)$ in \cref{eq:characteristic_time_gaussian}. Note
     that for each weakly observable vertex $u$ there exists $v\in N_{in}(u)\cap D(G)$, where $D(G)$ is the smallest set dominating the set of weakly observable vertices (see \cref{def:graph_dependent_quantities}).

Observe the following two properties:
\begin{enumerate}
    \item  For any strongly observable vertex  by \cref{lemma:domination_strongly_observable} we  need at-most $\sigma(G)-\left\lfloor \frac{\sigma(G)}{\alpha(G)+1}\right \rfloor$ vertices in $L(G)$ to dominate $|SO(G)|$ if $\sigma(G)>0$. 
    \item If $\sigma(G)=0$, then by definition all strongly observable vertices lack self‐loops, and $\delta(G)$ must be positive; thus $D(G)$ dominates  the entire graph.
\end{enumerate}

    Therefore we need at-most $\kappa=\delta(G)+\sigma(G)-\left\lfloor \frac{\sigma(G)}{\alpha(G)+1}\right \rfloor$ vertices to dominate the graph. Using this information, we allocate probability mass uniformly across these vertices.

    \begin{enumerate}
        \item For any vertex $u\in W(G)$ let $\omega_v = 1/\kappa$ for all $v\in D(G)$. This fact allows us to lower bound $m_u$ as
    \begin{align*}
    m_u &=  \sum_{v\in N_{in}(u)\setminus D(G)}\omega_v G_{v,u} + \sum_{w\in D(G)\cap N_{in}(u)} \omega_w G_{w,u},\\
    &\geq   \sum_{w\in D(G)\cap N_{in}(u)} \omega_w G_{w,u},\\
    &\geq   \sum_{w\in D(G)\cap N_{in}(u)} \frac{1}{\kappa} G_{w,u},\\
    &\geq   \max_{w\in D(G)\cap N_{in}(u)} \frac{1}{\kappa} G_{w,u},\\
    &\geq   \max_{w\in D(G)} \frac{1}{\kappa} G_{w,u}.
    \end{align*}

    \item For any vertex $u\in L(G)$ with a self-loop we can lower bound $m_u$ as 
    \[
    m_u \geq \min_{v\in L(G): G_{v,u}>0} \frac{1}{\kappa} G_{v,u}.
    \]

    \end{enumerate}
    Therefore, for any $u\in V$ we have 
    \[
    \frac{1}{\kappa m_{u}} \leq \begin{cases}
        \dfrac{1}{\min_{v\in L(G): G_{v,u}>0}G_{v,u}} & \hbox{ if } u\in L(G),\\
        \dfrac{1}{\max_{v\in D(G)} G_{v,u}} & \hbox{ otherwise},
    \end{cases}
    \]
    Then, let $\bar G_u \coloneqq \max\left(  \max_{v\in D(G)} G_{v,u},\min_{v\in L(G): G_{v,u}>0}G_{v,u} \right )$. In other words, $\bar G_u$  captures the relevant edge‐activation probability from either a dominating vertex or a self‐loop vertex.
    We conclude that
    \begin{align*}
    T^\star(\nu) &\leq \max_{u\neq a^\star}\kappa\left(\bar G_{u}^{-1} + \bar G_{a^\star}^{-1} \right) \frac{2\lambda^2}{\Delta_u^2},\\
     &\leq \frac{4\kappa\lambda^2}{\min_{u\neq a^\star} \min(\bar G_{u}, \bar G_{a^\star}) \Delta_u^2},
    \end{align*}
    where in the last expression we used $a+b\leq 2\max(a,b)$.
\end{proof}

\subsubsection{The loopless clique}
We now consider the scaling for the case  $\delta(G)+\sigma(G)=0$, which corresponds to the loopless clique. We restate here \cref{prop:scaling_looplessclique} and provide a proof.
\begin{proposition}
    For an observable model $\nu=(\{\nu_u\}_u, G)$ with $\delta(G)+\sigma(G)=0$, and  Gaussian random rewards $\nu_u={\cal N}(\mu_u, \lambda^2)$, we can upper bound $T^\star$ as
    \begin{equation*}
        T^\star(\nu) \leq 
            \dfrac{4\bar G \lambda^2}{\Delta^2_{\rm min}}, 
    \end{equation*}
    where \[\bar G \coloneqq  \min_{(v,w)\in V^2: v\neq w} \max_{u\neq a^\star}\frac{1}{G_{v,w}(u)} + \frac{1}{G_{v,w}(a^\star)},\]
    and $ G_{v,w}(u)\coloneqq G_{v,u} + G_{w,u}$.
\end{proposition}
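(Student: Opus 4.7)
The plan is to construct an explicit feasible allocation $\omega \in \Delta(V)$ and plug it into the Gaussian formula for $T^\star(\nu)$ displayed in \cref{eq:characteristic_time_gaussian}, then bound the resulting expression. The first step is to identify the graph structure forced by $\delta(G)+\sigma(G)=0$ under \cref{assump:nontrivial_problem}: since $\sigma(G)=0$ no vertex has a self-loop, and $\delta(G)=0$ together with observability forces every vertex to be strongly observable. By the definition of strong observability, a self-loopless vertex $u$ must satisfy $V\setminus\{u\}\subseteq N_{in}(u)$, so $G_{v,u}>0$ for every pair $v\neq u$. In other words, $G$ is exactly the loopless clique (with arbitrary edge-activation probabilities).

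The next step is to pick any pair of distinct vertices $(v,w)$ and consider the two-point allocation $\omega_v = \omega_w = 1/2$ with $\omega_x=0$ otherwise. Using $m = G^\top \omega$ and the fact that $G_{v,v}=G_{w,w}=0$, for every $u\in V$ we get
\begin{equation*}
m_u \;=\; \tfrac{1}{2}\bigl(G_{v,u} + G_{w,u}\bigr) \;=\; \tfrac{1}{2}\,G_{v,w}(u),
\end{equation*}
which is well-defined and strictly positive for every $u$ because for each $u$ at least one of $v,w$ differs from $u$ and has a positive edge into $u$ (by the loopless clique structure).

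The third step is to substitute these values into the Gaussian characteristic time expression. This yields
\begin{equation*}
T(\omega;\nu) \;=\; \max_{u\neq a^\star}\left(\tfrac{1}{m_u}+\tfrac{1}{m_{a^\star}}\right)\tfrac{2\lambda^2}{\Delta_u^2}
\;=\; \max_{u\neq a^\star}\left(\tfrac{1}{G_{v,w}(u)} + \tfrac{1}{G_{v,w}(a^\star)}\right)\tfrac{4\lambda^2}{\Delta_u^2}.
\end{equation*}
Bounding $\Delta_u\ge \Delta_{\min}$ for $u\neq a^\star$ pulls the gap factor out of the max, giving
\begin{equation*}
T(\omega;\nu) \;\le\; \frac{4\lambda^2}{\Delta_{\min}^2}\,\max_{u\neq a^\star}\left(\tfrac{1}{G_{v,w}(u)} + \tfrac{1}{G_{v,w}(a^\star)}\right).
\end{equation*}
Since $T^\star(\nu)\le T(\omega;\nu)$ for this (or any) feasible $\omega$, and since the pair $(v,w)$ was arbitrary, we may take the infimum over $(v,w)$ with $v\neq w$ on the right-hand side. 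That infimum is precisely $\bar G$, completing the bound.

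\textbf{Main obstacle.} There is no serious technical obstacle, this is essentially a clever choice of feasible point. The only mild subtlety is to argue cleanly that $\delta(G)+\sigma(G)=0$ together with observability really does force a loopless clique (so that $G_{v,w}(u)>0$ for every $u$ and every $v\neq w$); once that structural fact is in hand, the rest is direct substitution into \cref{eq:characteristic_time_gaussian}.
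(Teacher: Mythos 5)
Your proof is correct and follows essentially the same route as the paper's: both establish that $\delta(G)+\sigma(G)=0$ forces the loopless clique, place mass $1/2$ on a pair of distinct vertices so that $m_u=\tfrac12 G_{v,w}(u)>0$, and substitute into the Gaussian characteristic-time formula before bounding $\Delta_u\ge\Delta_{\min}$. The only cosmetic difference is that the paper fixes the optimal pair from the $\argmin$ set at the outset while you keep the pair arbitrary and take the infimum at the end; the two are equivalent.
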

\begin{proof}
The proof uses the idea that only $2$ vertices are needed to dominate the graph. Hence, we find two vertices $(v,w)$ that allow to bound $ \max_{u \neq a^*} (m_u^{-1} + m_{a^*}^{-1}) $ in the expression of $T^\star$.

 First,  $\delta(G)=0$ implies that there are no weakly observable vertices. Since we also  have $\sigma(G)=0$ then $V=E(G)$ (where $E(G)$ is the set of strongly observable vertices without self-loops). Hence, by definition, the graph  is the loopless clique. By \cref{lemma:domination_strongly_observable} we  need $2$ vertices to dominate the graph. 
        Define the following set
        \[
        {\cal A} \coloneqq \argmin_{(v,w)\in V^2: v\neq w} \max_{u\neq a^\star}  G_{v,w}(u)^{-1} + G_{v,w}(a^\star)^{-1}, \hbox{ where } G_{v,w}(u)\coloneqq G_{v,u} + G_{w,u},
        \]
        and denote by $\bar G \coloneqq  \min_{(v,w)\in V^2: v\neq w} \max_{u\neq a^\star}G_{v_0,w_0}(u)^{-1} + G_{v_0,w_0}(a^\star)^{-1} $ the optimal value for any $(v_0,w_0)\in {\cal A}$.
        Then, let $\omega_{v_0} = \omega_{w_0}=1/2$ for a generic pair $(v_0,w_0)\in {\cal A}$. For any $u\in V$ we have $m_u $
        \begin{align*}
            m_u &= \sum_{v\in V} w_v G_{v,u} \\
            &\geq (  G_{v_0, u} +   G_{w_0, u})/2.
        \end{align*}
        Note that by Lemma \ref{lemma:domination_strongly_observable}, \(m_u > 0\) for any \(u \in V\). Therefore,
        \begin{align*}
            m^{-1}_u \leq 2 G_{v_0,w_0}(u)^{-1}.
        \end{align*}
        Since the bound hold for any $(v_0,w_0)\in {\cal A}$, we conclude that
        \begin{align*}
             T^*(\nu) &\leq \max_{u \neq a^*} (m_u^{-1} + m_{a^*}^{-1}) \frac{2\lambda^2}{\Delta^2_u}, \\
             &\leq \max_{u \neq a^*} \left(G_{v_0,w_0}(u)^{-1} + G_{v_0,w_0}(a^\star)^{-1}\right) \frac{4\lambda^2}{\Delta_{\rm min}^2 },\\
             &=  \frac{4\bar G \lambda^2}{\Delta^2_{\rm min}}.
        \end{align*}
\end{proof}

\subsubsection{Heuristic solution: scaling and spectral properties}\label{subsubsec:app_heuristic_sol}
In general we find it  hard to characterize the scaling of $T^\star(\nu)$ in terms of the spectral properties of $G$ without a more adequate analysis. Furthermore, we also wonder if it is possible to find a closed-form solution that can be easily used.

Intuition suggests that, by exploiting the underlying topology of the graph, a good solution $\omega$ should be sparse (which, in turns, helps to minimize the sample complexity). However, it may be hard to find a simple sparse solution. 

To that aim, we can gain some intuition from the BAI problem for the classical multi-armed bandit setup. From the analysis in \citet[Appendix A.4]{garivier2016optimal}, an approximately optimal solution in the Gaussian case is given by $\omega_u^\star \propto 1/\Delta_u^2$, with $\Delta_{a^\star} = \Delta_{\rm min}$.

Hence, we also propose  that $m_u \propto  1/\Delta_u^2$, with $m=G^\top \omega$. At this point we could try to minimize the MSE loss between $m$ and the vector $\Delta^{-2}\coloneqq (1/\Delta_u^2)_{u\in V}$, subject to $\|w\|_1=1$. However, in this problem we are more interested in the directional alignment between $m$ and $\Delta^{-2}$, rather than in their magnitude, since the magnitude of $\omega$ is constrained \footnote{Note that also the MSE problem $\argmin_\beta \|y-A\beta\|_2^2 = \argmin_\beta  -2y^\top A\beta+ \|A\beta\|_2^2$ also tries to solve a problem of alignment through the term $2y^\top A\beta$, while the second term $\|A\beta\|_2^2$ can be considered a form of regularization.}.  Therefore, one may be interested in maximizing teh similarity $m^\top \Delta^{-2}$, or rather
\[
\max_{\omega: \|\omega\|_2\leq \alpha}  \omega^\top G\Delta^{-2}
\]
for some constraint $\alpha$.
Obviously the solution is  $\omega= \alpha G\Delta^{-2}/\|G\Delta^{-2}\|_2$ in the classical Euclidean space with the $\ell_2$ norm. However, such solution is not a distribution. To that aim, we project $G\Delta^{-2}$ on the closest distribution in the KL sense, defined as ${\rm Proj}_{\rm KL}(x) \coloneqq \min_{p\in {\cal P}} {\rm KL}(p,x)$ for any $x$ such that $x_u\geq 0$ for every $u$.
\begin{lemma}
    The projection of $G\Delta^{-2}$ in the KL sense is given by $\omega_{\rm heur}=G\Delta^{-2}/\|G\Delta^{-2}\|_1$.
\end{lemma}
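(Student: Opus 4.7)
The plan is to treat this as a constrained convex optimization problem and solve it via Lagrange multipliers. First, I would expand $\mathrm{KL}(p,x) = \sum_{u} p_u \ln(p_u/x_u) = \sum_u p_u \ln p_u - \sum_u p_u \ln x_u$, noting that since $p \mapsto p \ln p$ is strictly convex (with the convention $0 \ln 0 = 0$) and the second term is linear in $p$, the objective is strictly convex on the probability simplex $\mathcal{P}$. Hence the unique minimizer is characterized by first-order conditions. For indices with $x_u = 0$, the definition of the KL divergence forces $p_u = 0$ (otherwise the objective is $+\infty$), so without loss of generality I would restrict attention to the support of $x$, where all quantities are well-defined.

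Next, I would form the Lagrangian $\mathcal{L}(p,\lambda) = \sum_u p_u \ln(p_u/x_u) + \lambda\bigl(\sum_u p_u - 1\bigr)$ and impose stationarity $\partial \mathcal{L}/\partial p_u = \ln(p_u/x_u) + 1 + \lambda = 0$, giving $p_u = x_u \, e^{-1-\lambda}$. Thus $p$ is proportional to $x$, and enforcing $\sum_u p_u = 1$ pins down the normalizing constant to $1/\|x\|_1$. This yields $p^\star = x/\|x\|_1$, and substituting $x = G\Delta^{-2}$ (which is nonnegative since $G \in [0,1]^{K\times K}$ and $\Delta_u^{-2} \geq 0$) recovers the claimed $\omega_{\rm heur} = G\Delta^{-2}/\|G\Delta^{-2}\|_1$.

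There is no real obstacle in this proof; the only technicality is the treatment of zero entries of $x$, which is handled by the support reduction above. The nonnegativity constraints $p_u \geq 0$ are automatically inactive at the optimum since $x_u \geq 0$, so no additional KKT multipliers are needed. Finally, well-posedness of the problem requires $\|G\Delta^{-2}\|_1 > 0$, which holds under \cref{assump:nontrivial_problem} since observability implies at least one row of $G$ has a positive entry.
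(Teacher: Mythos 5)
Your proof is correct and follows essentially the same route as the paper: form the Lagrangian for $\min_{p\in\mathcal{P}}\mathrm{KL}(p,x)$ with the normalization constraint, read off $p_u \propto x_u$ from stationarity, and normalize by $\|x\|_1$. The extra care you take with strict convexity, zero entries of $x$, and the inactive nonnegativity constraints goes slightly beyond what the paper writes but does not change the argument.
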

\begin{proof}
    For some vector $x$ satisfying $x\geq 0$, write the Lagrangian of $\min_{p\in {\cal P}} {\rm KL}(p,x)$:
    \[{\cal L}(p,\lambda)=\sum_u  p_u \ln \left( \frac{p_u}{x_u} \right) + \lambda \left(1-\sum_u p_u\right).\]
    Then, we check the first order condition $\partial {\cal L}/\partial p_u =0 \Rightarrow \ln \left( \frac{p_u}{x_u} \right) +1  - \lambda =0$, implying that $p_u=x_ue^{\lambda-1}$. Using the fact that $\sum_u p_u =1$ we also obtain $e^{1-\lambda}=\sum_u x_u= \|x\|_1$. Therefore $\lambda=1-\ln(\|x\|_1)$, from which we conclude that $p_u = x_u/\|x\|_1$.
\end{proof}

Such allocation $w_{\rm heur} \coloneqq \frac{G \Delta^{-2}}{\|G\Delta^{-2}\|_1}$ makes intuitively sense: a vertex $u$ will be chosen with probability proportional to $ \sum_{v\in V} G_{uv} \Delta_{v}^{-2}$, thus assigning higher preference to vertices that permit the learner to sample arms with small sub-optimality gaps.

 For this heuristic allocation $w_{\rm heur}$ we can provide the following upper bound on its scaling.

\begin{lemma}
    For a model $\nu=(\{\nu_u\}_u, G)$ with an observable graph  $G$ and  Gaussian random rewards $\nu_u={\cal N}(\mu_u, \lambda^2)$ we can upper bound $T(\omega_{\rm heur};\nu)$  as
    \begin{equation*}
        T^\star (\nu) \leq T(\omega_{\rm heur};\nu)\leq \frac{\|G\Delta^{-2}\|_1}{\sigma_{\min}(G)^2} \cdot 4\lambda^2 \leq \frac{K\min\left(\sqrt{K}\sigma_{\max}(G), \sum_i\sigma_i(G)\right)}{\Delta_{\min}^2\sigma_{\min}(G)^2} \cdot 4\lambda^2.
    \end{equation*}
    where $\sigma_{\min}(G), \sigma_{\max}(G), \sigma_i(G)$ are, respectively the minimum singular value of $G$,  the maximum singular value of $G$ and the $i$-th singular value of $G$ (to not be confused with $\sigma(G)$!).
\end{lemma}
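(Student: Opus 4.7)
The first inequality $T^\star(\nu) \leq T(\omega_{\rm heur};\nu)$ is immediate from the definition $T^\star(\nu)=\inf_\omega T(\omega;\nu)$. For the second inequality, I would plug the heuristic $\omega_{\rm heur} = x/\|x\|_1$, with $x:=G\Delta^{-2}$ (using the convention $\Delta_{a^\star}:=\Delta_{\min}$ as in \cite{garivier2016optimal}), into the Gaussian-case expression \cref{eq:characteristic_time_gaussian}. Writing $m = G^\top \omega_{\rm heur} = (G^\top G\,\Delta^{-2})/\|x\|_1$, the whole proof reduces to the entry-wise lower bound
\[
m_u\,\Delta_u^2 \;\geq\; \frac{\sigma_{\min}(G)^2}{\|x\|_1}\qquad \text{for every } u\in V,
\]
because combining it with $\Delta_u\geq \Delta_{\min}$ (used to also control the $m_{a^\star}^{-1}/\Delta_u^2$ term) and the inequality $m_u^{-1}+m_{a^\star}^{-1}\leq 2\max(m_u^{-1},m_{a^\star}^{-1})$ immediately yields $T(\omega_{\rm heur};\nu)\leq 4\lambda^2\|x\|_1/\sigma_{\min}(G)^2$.

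The heart of the argument is thus establishing this entry-wise bound. Set $A:=G^\top G$ and note three facts: (i) $A$ has nonnegative entries since $G$ does; (ii) $A$ is PSD with $\lambda_{\min}(A)=\sigma_{\min}(G)^2$; (iii) $\Delta^{-2}\geq 0$. By (i) and (iii), for each $u$,
\[
(A\Delta^{-2})_u \;=\; \sum_v A_{u,v}\Delta_v^{-2} \;\geq\; A_{u,u}\,\Delta_u^{-2},
\]
and by the Rayleigh quotient $A_{u,u}=e_u^\top A e_u \geq \lambda_{\min}(A)=\sigma_{\min}(G)^2$. Multiplying by $\Delta_u^2$ yields $(A\Delta^{-2})_u \Delta_u^2 \geq \sigma_{\min}(G)^2$, and dividing by $\|x\|_1$ gives the target bound on $m_u\Delta_u^2$. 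I expect this step to be the main obstacle conceptually, as it crucially uses both nonnegativity of $G$ and PSD-ness of $G^\top G$ simultaneously; either alone would be insufficient.

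The final sequence of inequalities is routine. I would bound $\|x\|_1 \leq \sqrt{K}\,\|x\|_2$ by Cauchy--Schwarz, then use the operator inequality $\|G\Delta^{-2}\|_2\leq \sigma_{\max}(G)\|\Delta^{-2}\|_2$ and $\|\Delta^{-2}\|_2\leq \sqrt{K}/\Delta_{\min}^2$ to get the $\sqrt{K}\sigma_{\max}(G)$ branch. For the nuclear-norm branch $\sum_i \sigma_i(G)$, I would expand $G=\sum_i \sigma_i u_i v_i^\top$ via the SVD to write $Gy=\sum_i \sigma_i (v_i^\top y)u_i$ and apply $\|u_i\|_1\leq \sqrt{K}\|u_i\|_2=\sqrt{K}$ together with $|v_i^\top y|\leq \|y\|_2$ termwise, producing $\|Gy\|_1\leq \sqrt{K}\|y\|_2\sum_i\sigma_i(G)$. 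Taking the minimum of the two branches and combining with $\|\Delta^{-2}\|_2\leq \sqrt{K}/\Delta_{\min}^2$ yields the second claimed inequality.
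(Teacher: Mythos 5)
Your proposal is correct and follows essentially the same route as the paper: you lower-bound each $m_u = (G^\top G\,\Delta^{-2})_u/\|G\Delta^{-2}\|_1$ by keeping only the diagonal term $(G^\top G)_{u,u}\Delta_u^{-2}$ (using nonnegativity of $G$) and bounding $(G^\top G)_{u,u}\geq \sigma_{\min}(G)^2$, exactly as the paper does, before applying the same $a+b\leq 2\max(a,b)$ and $\Delta_u\geq\Delta_{\min}$ steps. The only cosmetic differences are in the final norm chain, where your $\|x\|_1\leq\sqrt{K}\|x\|_2\leq K\sigma_{\max}(G)/\Delta_{\min}^2$ is marginally tighter than the paper's $K^{3/2}$ route and your SVD expansion replaces the paper's $\|{\rm vec}(G)\|_1\leq K\|G\|_*$ bound, but both yield the stated inequalities.
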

\begin{proof}
    Again, we prove this corollary by lower bound each \(m_u\). Note that we have $m=G^\top G \omega_{\rm heur}$. Denote by $G_u$ the $u$-th row of $G$, then 
    \begin{align*}
        m_u &= \frac{\sum_{v \in V} (G^T G)_{u,v} \Delta_{v}^{-2}}{\|G\Delta^{-2}\|_1} \\
        &\geq \frac{\|G_u\|^2_2 \Delta_{u}^{-2}}{\|G\Delta^{-2}\|_1} \\
        &\geq  \frac{\|G_u\|^2_2 \Delta_u^{-2}}{\|G\Delta^{-2}\|_1}.
    \end{align*}
    And therefore, using that $\|G_u\|_2^2 \geq \sigma_{\min}(G)^2$, we observe that
    \begin{align*}
        T(\omega_{\rm heur};\nu) &\leq \max_{u \neq a^{\star}} \left(\frac{\Delta_u^2}{\|G_u\|^2_2 } + \frac{\Delta_{\rm min}^2}{\|G_{a^{\star}}\|^2_2 }\right) \frac{2 \|G\Delta^{-2}\|_1\lambda^2}{\Delta_u^2},\\
         &\leq \max_{u \neq a^{\star}} \left(\Delta_u^2+ \Delta_{\rm min}^2\right) \frac{2 \|G\Delta^{-2}\|_1\lambda^2}{\sigma_{\min}(G)^2\Delta_u^2},\\
         &\leq  \frac{\|G\Delta^{-2}\|_1}{\sigma_{\min}(G)^2} \cdot 4\lambda^2.
    \end{align*}

  Lastly, denoting by $G_u$ the $u$-th row of $G$, we obtain that $\|G\Delta^{-2}\|_1 \leq \sum_u |G_u^\top \Delta^{-2}| \leq  \|\Delta^{-2}\|_2 \sum_u \|G_u\|_2 \leq \frac{\sqrt{K}}{\Delta_{\rm min}^2} \sum_u \|G_u\|_2$. We conclude by noting that $\sum_u \|G_u\|_2\leq K \sigma_{\max}(G)$, and thus $\|G\Delta^{-2}\|_1 \leq  \frac{K^{3/2}\sigma_{\max}(G)}{\Delta_{\rm min}^2}$

   Additionally, we also note that $\|G\Delta^{-2}\|_1 \leq \sum_u |G_u^\top \Delta^{-2}| \leq  \|\Delta^{-2}\|_\infty \sum_u \|G_u\|_1$ by Holder's inequality. Now, let $\|\cdot\|_*$ denote the Schatten-1 norm. Using that $\|{\rm vec}(G)\|_1 \leq K \|G\|_* = K\sum_i \sigma_i(G)$ we have $\|G\Delta^{-2}\|_1 \leq \frac{K\sum_i \sigma_i(G)}{\Delta_{\rm min}^2}$. 
\end{proof}

\paragraph{An alternative approach that is sparse.} We note that the above analysis does not take fully advantage of the graph structure. An alternative approach that yields a better scaling is to instead consider the similarity problem
\[
\max_{\omega: \|\omega\|_1=1}  \omega^\top G\Delta^{-2}.
\]
The optimal solution then is simply $\omega_u = \mathbf{1}_{\{u\in {\cal G}\}}/|{\cal G}|$, where ${\cal G}=\argmax_{v} (G\Delta^{-2})_v$. This is an efficient allocation, since it scales as $O\left(\frac{|{\cal G}|}{\Delta_{\min}^2 \max_{u\in {\cal G}}\min_v G_{u,v}} \right)$. However, such solution is admissible only if it guarantees that  $V\ll {\cal G}$ , i.e., this set of vertices ${\cal G}$ dominates the graph.

Alternatively, one can choose the top $k$ vertices ${\cal U}=\{u_1,u_2,\dots, u_k\}$ ordered according to $(G\Delta^{-2})_{u_1}\geq\dots \geq (G\Delta^{-2})_{u_k}\geq \dots (G\Delta^{-2})_{u_K}\ $ satisfying $V\ll \{u_1,\dots, u_k\}$ (i.e., these vertices dominate the graph). Then, one can simply let $\omega_u = \mathbf{1}_{\{u\in {\cal U}\}}/|{\cal U}|$. Since these are also the vertices that maximize the average information collected from the graph, we believe this solution to be sample efficient. A simple analysis, shows that the worst case scaling in this scenario is $O\left( \frac{|{\cal U|}}{\Delta_{\rm min}^2 \max_{u\in {\cal U}} \min_v G_{u,v}}\right)$.
\newpage
\section{Analysis of \algoname}\label{sec:app_algorithm}
In this section we provide an analysis of \algoname{} for an observable model in the uninformed case (with continuous rewards) or in the informed case. In \cref{subsec:app_sampling_rule} we analyse the sampling rule. In \cref{subsec:app_stopping_rule} we analyse the stopping rule. Lastly, in \cref{subsec:algorithm_as_sample_complexity_app}, we analyse the sample complexity.

\subsection{Sampling Rule}\label{subsec:app_sampling_rule}
The proof of the tracking proposition \cref{prop:tracking} is inspired by D-tracking \cite{garivier2016optimal,jedra2020optimal}. In \cite{degenne2019pure} they show that classical D-tracking \cite{garivier2016optimal} may fail to converge when $C^\star(\nu)$ is a convex set of possible optimal allocations. However, using a modified version it is possible to prove the convergence. We take inspiration from \cite{jedra2020optimal}, where they applied a modified D-tracking to the linear bandit case and showed convergence of $ w^\star(t)$.

The intuition behind the proof is that tracking the average of the converging sequence $( \omega^\star(t))_t$, which is a convex combination, converges to a stable point in the convex set $C^\star(\nu)$. The proof makes use of the following result, from \cite{Bonsall1963CB}.

\begin{theorem}[Maximum theorem \cite{Bonsall1963CB}]\label{thm:berge}
Let $C^\star(\nu)=\arginf_{\omega\in \Delta(V)}T(\omega;\nu)$. Then $T^\star(\nu)\coloneqq T(\omega^\star;\nu), \omega^\star\in C^\star(\nu),$ is continuous at $\nu$ (in the sense of $(G,\{\mu_u\}_u)\in [0,1]^{K\times K} \times [0,1]^K$), $C^\star(\nu)$ is convex, compact and non-empty.
Furthermore, we have that for any open neighborhood ${\cal V}$ of $C^\star(\nu)$, there exists an open neighborhood ${\cal U}$ of $\nu$, such that for all $\nu'\in {\cal U}$ we have $C^\star(\nu')\subseteq {\cal V}$.
\end{theorem}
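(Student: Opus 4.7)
The plan is to recognize this as a specialization of Berge's Maximum Theorem applied to the parametric program $\inf_{\omega\in\Delta(V)} T(\omega;\nu)$, and to add a problem-specific argument for the convexity of $C^\star(\nu)$. The three ingredients I would need are: joint continuity of the objective in $(\omega,\nu)$, compactness and convexity of the constraint set $\Delta(V)$ (both immediate), and concavity of $\omega\mapsto T(\omega;\nu)^{-1}$.

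First I would verify joint continuity of $(\omega,\nu)\mapsto T(\omega;\nu)^{-1}$. Since $m=G^\top\omega$ is bilinear in $(\omega,G)$, and since for the canonical exponential family $(m_{a^\star},m_u,\nu_{a^\star},\nu_u)\mapsto (m_u+m_{a^\star})I_{m_{a^\star}/(m_u+m_{a^\star})}(\nu_{a^\star},\nu_u)$ is continuous, taking a minimum over the finite set $\{u\neq a^\star\}$ preserves continuity. Combined with compactness of $\Delta(V)$, this gives that $C^\star(\nu)\neq\emptyset$ and is closed (hence compact), and standard uniform-continuity reasoning on the compact product $\Delta(V)\times\mathcal{U}$, for $\mathcal{U}$ a compact neighborhood of $\nu$, yields continuity of $T^\star(\nu)$.

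Next I would show $C^\star(\nu)$ is convex by proving $T(\omega;\nu)^{-1}$ is concave in $\omega$. The key step is the variational identity
\begin{equation*}
(m_u+m_{a^\star})\,I_{\frac{m_{a^\star}}{m_u+m_{a^\star}}}(\nu_{a^\star},\nu_u)=\inf_{\nu'}\bigl[m_{a^\star}\,\mathrm{KL}(\nu_{a^\star},\nu')+m_u\,\mathrm{KL}(\nu_u,\nu')\bigr],
\end{equation*}
with the infimum attained at the mixture $(m_{a^\star}\nu_{a^\star}+m_u\nu_u)/(m_{a^\star}+m_u)$. The right-hand side is an infimum of functions affine in $(m_{a^\star},m_u)$, hence concave. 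Composition with the linear map $\omega\mapsto G^\top\omega$ preserves concavity in $\omega$, and a pointwise minimum of concave functions is concave. The argmax of a concave function over a convex set is convex, giving $C^\star(\nu)$ convex. For the upper-hemicontinuity clause, I would then use the closed-graph criterion: if $\nu_n\to\nu$ and $\omega_n\in C^\star(\nu_n)$ with $\omega_n\to\omega$, joint continuity together with continuity of $T^\star$ yields $T(\omega;\nu)=\lim_n T(\omega_n;\nu_n)=\lim_n T^\star(\nu_n)=T^\star(\nu)$, so $\omega\in C^\star(\nu)$. Closed graph plus compact values yields upper hemicontinuity, which is exactly the neighborhood property stated.

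The main obstacle I anticipate is the concavity argument at the boundary where $m_u$ or $m_{a^\star}$ vanishes, where $I_\alpha$ degenerates at $\alpha\in\{0,1\}$: the variational formula must be interpreted via a continuous extension, and one must check that the infimum in the identity is actually attained inside the admissible family rather than only in a closure. A secondary concern is that $a^\star(\nu)$ may change identity on arbitrarily small neighborhoods of $\nu$, which would break continuity of $T^\star$ if one sticks with the form in \cref{thm:lb_general}; to bypass this I would work with the equivalent $\inf_{\nu'\in\mathrm{Alt}(\nu)}$ representation from \cref{thm:lb_general_proof}, which is symmetric across arms and thus free of such jumps whenever $\nu$ has a unique best arm (and the ambiguity can be excluded by the standing assumption used in the algorithmic analysis).
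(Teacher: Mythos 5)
Your proposal is correct in substance, but it takes a genuinely different route from the paper for the simple reason that the paper does not prove this statement at all: it is imported wholesale as ``the Maximum theorem'' with a citation, and the hypotheses are never verified for the specific objective $T(\omega;\nu)^{-1}$. What you supply is exactly the missing verification. Two of your ingredients --- compactness of $\Delta(V)$ and joint continuity of the objective, which give non-emptiness and compactness of $C^\star(\nu)$, continuity of the value function, and upper hemicontinuity via the closed-graph criterion --- are what the generic Berge theorem needs. The third, convexity of $C^\star(\nu)$, is \emph{not} a conclusion of the generic maximum theorem and genuinely requires your concavity argument: the variational identity $(m_u+m_{a^\star})\,I_{m_{a^\star}/(m_u+m_{a^\star})}(\nu_{a^\star},\nu_u)=\inf_{\nu'}\bigl[m_{a^\star}\,\mathrm{KL}(\nu_{a^\star},\nu')+m_u\,\mathrm{KL}(\nu_u,\nu')\bigr]$ exhibits each term as an infimum of functions affine in $m$, hence concave; concavity survives the linear map $\omega\mapsto G^\top\omega$ and the finite minimum over $u\neq a^\star$, and the argmax of a concave function over a convex set is convex. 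This is the same mechanism the paper implicitly relies on when deriving \cref{thm:lb_general} from \cref{thm:lb_general_proof}, so your proof is consistent with the paper's framework while making explicit a step the paper leaves entirely to the cited reference. Your two flagged caveats are the right ones and are handled correctly: continuity of $(m_u,m_{a^\star})\mapsto(m_u+m_{a^\star})I_{\cdot}(\nu_{a^\star},\nu_u)$ at the boundary $m_u=0$ or $m_{a^\star}=0$ must be checked directly (it holds because the within-family KL divergences are finite, so both summands vanish in the limit), and continuity of $T^\star$ at $\nu$ requires the best arm to be unique so that $a^\star(\nu')=a^\star(\nu)$ on a neighborhood --- a hypothesis the paper's statement of the theorem leaves implicit.
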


Here we state, and prove, a more general version,  of \cref{prop:tracking} (which follows by taking $\alpha_{t,n}=1/t$ in the next proposition).

\begin{proposition} \label{prop::tracking_general}
    Let $S_t= \{u\in V: N_u(t) < \sqrt{t}-K/2\}$. The D-tracking rule, defined as
    \begin{equation}
        V_t \in \begin{cases}
            \argmin_{u\in S_t} N_u(t) & S_t\neq \emptyset\\
            \argmin_{u\in V} N_u(t) - t\sum_{n=1}^t \alpha_{t,n}\omega_u^\star(n)& \hbox{otherwise}
        \end{cases},
    \end{equation}
    where, for every $t\geq 1$, the sequence $\alpha_t=(\alpha_{t,n})_{n=1}^t $ satisfies: (1) $\alpha_{t,n}\in [0,1]$; (2) for every fixed $n\in\{1,\dots,t\}$ we have  $\alpha_{t,n} =o(1)$ in $t$ ; (3) for all $t$, $\sum_{n=1}^t \alpha_{t,n}=1$.

    Such tracking rule
    ensures that for all $\epsilon>0$ there exists $t(\epsilon)$ such that for all $t \geq t(\epsilon)$ we have
    \[
    \|N(t)/t - \bar v(t)\|_\infty \leq 5(K-1)\epsilon,
    \]
    where $\bar v(t) \coloneqq \arginf_{\omega\in C^\star(\nu)} \|\omega-\sum_{n=1}^t \alpha_n \omega^\star(n)\|_\infty$, and 
    $
   \lim_{t\to\infty}\inf_{\omega\in C^\star(\nu)}\| N(t)/t -\omega\|_\infty 
 \to 0 
    $ almost surely.
\end{proposition}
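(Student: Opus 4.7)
The argument would proceed in three layers. First I would establish MLE consistency. The forced exploration branch $V_t\in\argmin_{u\in S_t}N_u(t)$ deterministically enforces $N_u(t)\ge\sqrt{t}-K$ for every $u$ and every large enough $t$, exactly as in the classical D-tracking analysis of Garivier--Kaufmann. Combined with the SLLN applied along each vertex's observation subsequence, and using observability of $\nu$ to force $M_u(t)\to\infty$ for every $u$, this yields $\hat G(t)\to G$ and $\hat\mu(t)\to\mu$ almost surely. The Maximum theorem (\cref{thm:berge}) then gives $\inf_{\omega\in C^\star(\nu)}\|\omega^\star(t)-\omega\|_\infty\to 0$ on this full-measure event.

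\paragraph{Convergence of the averaged target.} Next I would show that the averaged target $\bar\omega(t):=\sum_{n=1}^t\alpha_{t,n}\omega^\star(n)$ also approaches $C^\star(\nu)$. For any $\epsilon>0$, fix a (random) $t_0(\epsilon)$ past which every $\omega^\star(n)$ lies within $\epsilon$ of $C^\star(\nu)$ and split
\[
\bar\omega(t)=\sum_{n<t_0(\epsilon)}\alpha_{t,n}\omega^\star(n)+\sum_{n\ge t_0(\epsilon)}\alpha_{t,n}\omega^\star(n).
\]
The three hypotheses on $\alpha_{t,n}$ (non-negativity, $\alpha_{t,n}=o(1)$ for every fixed $n$, and $\sum_n\alpha_{t,n}=1$) make the first sum's total mass vanish as $t\to\infty$, while the second is a sub-convex combination of points within $\epsilon$ of the convex set $C^\star(\nu)$. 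Convexity and compactness of $C^\star(\nu)$, again from the Maximum theorem, then deliver $\inf_{\omega\in C^\star(\nu)}\|\bar\omega(t)-\omega\|_\infty\le 2\epsilon$ for all $t$ large enough.

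\paragraph{Tracking step, conclusion, and main obstacle.} Finally I would port the Garivier--Kaufmann D-tracking deficit bound to the time-varying target $\bar\omega(t)$. Writing the deficit $D_u(t):=N_u(t)-t\sum_{n\le t}\alpha_{t,n}\omega^\star_u(n)$, the greedy rule $V_t\in\argmin_u D_u(t-1)$ keeps $\max_u D_u(t)-\min_u D_u(t)=O(K)$ throughout the non-forced-exploration rounds, yielding $\|N(t)/t-\bar\omega(t)\|_\infty\le (K-1)/t+\Xi_t$, where $\Xi_t$ bounds the round-to-round drift $\|\bar\omega(t)-\bar\omega(t-1)\|_\infty$. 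Projecting $\bar\omega(t)$ onto $C^\star(\nu)$ to define $\bar v(t)$ and combining with the averaging bound via the triangle inequality yields $\|N(t)/t-\bar v(t)\|_\infty\le 5(K-1)\epsilon$ eventually; taking $\epsilon$ along a vanishing countable sequence then gives the claimed a.s.\ limit. The delicate step is controlling $\Xi_t$: in the unique-solution setting of classical D-tracking the target is asymptotically constant and the deficit argument is transparent, but here $\bar\omega(t)$ is a doubly-indexed Cesàro-type average whose increments must be shown to vanish using only $\alpha_{t,n}=o(1)$ and $\sum_n\alpha_{t,n}=1$, in the spirit of \cite{jedra2020optimal}. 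This is the technical heart of the proof; everything else is routine.
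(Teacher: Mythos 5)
Your first two layers match the paper's proof essentially verbatim: forced exploration plus the SLLN gives $\hat\nu(t)\to\nu$, \cref{thm:berge} gives $\inf_{\omega\in C^\star(\nu)}\|\omega^\star(t)-\omega\|_\infty\to 0$, and the split of $\bar\omega(t)$ at $t_0(\epsilon)$ — first block killed by $t_0(\epsilon)\max_{n\le t_0(\epsilon)}\alpha_{t,n}=o(1)$, second block within $\epsilon$ of a convex combination of projections lying in the convex set $C^\star(\nu)$ — is exactly the paper's argument for $\|\bar\omega^\star(t)-\bar v(t)\|_\infty\le 2\epsilon$.

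The third layer is where there is a genuine gap, and it is one you flag yourself but do not close. You propose to bound $\|N(t)/t-\bar\omega(t)\|_\infty\le (K-1)/t+\Xi_t$ where $\Xi_t$ controls the round-to-round drift of $\bar\omega(t)$, and you leave the control of $\Xi_t$ as "the technical heart." This route cannot work from the stated hypotheses alone: the choice $\alpha_{t,n}=\mathbf{1}_{\{n=t\}}$ satisfies all three conditions on $\alpha$, yet then $\bar\omega(t)=\omega^\star(t)$, whose increments need not vanish when $C^\star(\nu)$ is not a singleton — this is precisely the failure mode of plain D-tracking identified by \cite{degenne2019pure} that the averaging is meant to repair. (Your intermediate claim that the greedy rule keeps $\max_u D_u(t)-\min_u D_u(t)=O(K)$ is also too strong; that is a C-tracking property, not a D-tracking one.) The paper avoids any per-step drift bound: it defines the deficit $E_u(t)=N_u(t)-t\bar v_u(t)$ relative to the \emph{projected} target $\bar v(t)\in C^\star(\nu)$, and the only input to the induction of \cite[Lemma~17]{garivier2016optimal} is that whenever $u$ is selected, $E_u(t)\le 4t\epsilon$ — which follows because the sampling rule minimizes $N_u(t)-t\bar\omega_u^\star(t)$, $\min_v E_v(t)\le 0$, and $\|\bar\omega^\star(t)-\bar v(t)\|_\infty\le 2\epsilon$ converts between the raw and projected targets at a cost of $2t\epsilon$. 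The multiplicative $t\epsilon$ slack absorbs everything your $\Xi_t$ was meant to control, so no statement about the increments of $\bar\omega(t)$ is ever needed. To complete your proof you should replace the drift-based bound with this deficit argument against $\bar v(t)$.
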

\begin{proof}
Define the projection of $x\in \mathbb{R}^n$ onto $C$ as ${\rm Proj}_C(x) \coloneqq \arginf_{\omega\in C} \|x - w\|_\infty$, which is guaranteed to exists if $C$ is convex and compact. 

Let  $n(t)\coloneqq {\rm Proj}_{C^\star(\nu)}(N_t/t)$. The proof lies showing that $\| N_t/t - n(t)\|_\infty \to 0$ as $t\to \infty$. To that aim, we first need to show that $\inf_{w\in C^\star(\nu)}\|\bar \omega^\star(t)-w\|_\infty \to 0$, where $\bar \omega^\star(t) \coloneqq \sum_{n=1}^t \alpha_n \omega^\star(n)$ is a convex combination of the estimated optimal allocations up to time $t$.

Begin by defining the following quantities 
    \[\bar v(t) \coloneqq {\rm Proj}_{C^\star(\nu)}(\bar \omega^\star(t)) \hbox{ and }  v(t) \coloneqq {\rm Proj}_{C^\star(\nu)}( \omega^\star(t)),\]
    which are, respectively, the projection onto $C^\star(\nu)$ of the average estimated allocation and the projection of the last estimated allocation.
    
By the forced exploration step, we have that $N_u(t)\to\infty$ for every $u\in V$. Since the model is observable,  we can invoke the law of large number and guarantee that $\mathbb{P}_\nu(\lim_{t\to \infty} \hat \nu(t)=\nu)=1$ in the sense that $(\hat G(t),\hat \mu(t))\to (G,\mu)$ almost surely. Then, by continuity of the problem (see \cref{thm:berge}) we have that $\forall \epsilon >0 \exists t_0(\epsilon): \sup_{\omega\in C^\star(\hat\nu(t))}\|\omega-{\rm Proj}_{C^\star(\nu)}(\omega)\|_\infty \leq \epsilon$ for all $t\geq t_0(\epsilon)$.

Henceforth, for $t\geq t_0(\epsilon)$ we have $\|  \omega^\star(t) -  v(t)\|_\infty\leq \sup_{w\in C^\star(\hat\nu(t))}\|\omega-{\rm Proj}_{C^\star(\nu)}(\omega)\|_\infty \leq \epsilon$, thus we derive
    \begin{align*}
    \left\| \sum_{n=1}^t \alpha_{t,n} v(n) - \bar \omega^\star(n) \right\|_\infty &\leq \sum_{n=1}^{t_0(\epsilon)} \alpha_{t,n}\| v(n) -  \omega^\star(n)\|_\infty + \sum_{n=t_0(\epsilon)+1}^t\alpha_{t,n}\| v(n) -  \omega^\star(n)\|_\infty,\\
    &\leq 
    t_0(\epsilon) \bar \alpha_{t,t_0(\epsilon)}+  \epsilon,\\
    \end{align*}
    where $ \bar \alpha_{t,t_0(\epsilon)}= \max_{1\leq n\leq t_0(\epsilon)} \alpha_{t,n}$ and  we used the fact that $\sum_n \alpha_{t,n}=1$.
    Hence, for any $x\in C^\star(\nu)$ note that $\| \bar \omega^\star(t) - \bar v(t)\|_\infty \leq \|\bar \omega^\star(t)-x\|_\infty$. For a fixed $t$, one can choose $x=\sum_{n=1}^t \alpha_{t,n}  v(n) $ since every $ v(n)\in C^\star(\nu)$, and also a convex combination belongs to $C^\star(\nu)$ by convexity. Henceforth
    \[
    \| \bar \omega^\star(t) - \bar v(t)\|_\infty \leq \left\| \bar \omega^\star(t) - \sum_{n=1}^t \alpha_{t,n} v(n) \right\|_\infty  \leq t_0(\epsilon) \bar \alpha_{t,t_0(\epsilon)}+ \epsilon.
    \]
    Since for every fixed $n$ we have  $\bar \alpha_{t,n} = o(1)$ in $t$, there exists $t_1(\epsilon)$ such that for $t\geq t_1(\epsilon)$ we have $\bar \alpha_{t,t_0(\epsilon)} \leq \epsilon/t_0(\epsilon)$,
    which implies that  $\| \bar \omega^\star(t) - \bar v(t)\|_\infty \leq 2\epsilon$ for $t\geq \max(t_0(\epsilon),t_1(\epsilon))$. 

    Now we prove that $\| N_t/t - n(t)\|_\infty \to 0$ as $t\to \infty$.
    Define $t_2(\epsilon)\coloneqq \max(t_0(\epsilon),t_1(\epsilon))$, and observe  that  $\| N_t/t - n(t)\|_\infty \leq \| N_t/t - \bar v(t)\|_\infty$.

    Therefore we are interested in bounding the quantity $\| N_t/t - \bar v^\star(t)\|_\infty$, and we use similar arguments as in \cite[Lemma 17]{garivier2016optimal}. Let $t\geq t_2(\epsilon)$. Define $E_u(t) = N_u(t) - t \bar v_u(t)$ for every $u\in V$ and note that $\sum_u E_u(t)=0 \Rightarrow \min_u E_u(t) \leq 0$. We want to show that $\sup_u |E_u(t)/t|$ is bounded.

    We begin by showing the following
    \[
    \{U_{t+1}=u\} \subseteq {\cal E}_1(t)\cup{\cal E}_2(t) \subseteq \{E_{u}(t)\leq 2t\epsilon\},
    \]
    where ${\cal E}_1(t) = \{u=\argmin_{u\in V} N_u(t) - t \bar \omega_u^\star(t)\}$ and ${\cal E}_2(t) = \{N_u(t) \leq g(t)\}$ with $g(t)=\max(0,(\sqrt{t}-K/2))-1$.

    For the first part, if $\{U_{t+1}=u\}\subseteq {\cal E}_1(t)$, since $t\geq t_2(\epsilon)$ we have 
    \begin{align*}
    E_u(t) &=  N_u(t) - t \bar v_u(t) \pm t \bar \omega_u^\star(t),\\
    &\leq   N_u(t) - t \bar \omega_u^\star(t) +2t\epsilon,\\
    &=   \min_{v} N_v(t) - t \bar \omega_v^\star(t) +2t\epsilon,\\
    &\leq  \min_{v} E_v(t) +4t\epsilon,\\
    &\leq 4t\epsilon.
    \end{align*}
    where in the second equality we used the fact that $\{U_{t+1}=u\}\subset {\cal E}_1(t)$ and in the last inequality that  $\min_v E_v(t) \leq 0$.

     For the second part,  as shown in \cite[Lemma 17]{garivier2016optimal}, there exists $t_3(\epsilon)$ such that for $t\geq t_3(\epsilon)$ then $g(t)\leq 4t\epsilon$ and $1/t\leq \epsilon$. Then if $\{U_{t+1}=u\}\subseteq {\cal E}_2(t)$ we have that
    \[
    E_u(t) \leq g(t) - t\bar v_u(t) \leq 4t\epsilon.
    \]

    Therefore, as in \cite[Lemma 17]{garivier2016optimal}, one can conclude that for $t\geq t'\coloneqq\max(t_2(\epsilon), t_3(\epsilon))$
    \[
      E_{u}(t) \leq \max(E_{u}(t'), 4t\epsilon+1).
    \]
    Using that $\sum_u E_u(t)=0$, and that for all $t\geq t', E_{u}(t')\leq t'$ and $1/t \leq \epsilon$ we have that
    \[
    \sup_i |E_{u}(t)/t| \leq (K-1) \max(t'/t, 4\epsilon+1/t) \leq (K-1)\max(5\epsilon, t'/t).
    \]
    Hence, there exists $t''\geq t'$ such that for all $t\geq t''$ we have $\sup_i |E_{u}(t)/t| \leq 5(K-1)\epsilon$. Letting $\epsilon\to 0$ concludes the proof.
\end{proof}
Hence, \cref{prop:tracking} follows by choosing $\alpha_{t,n}=1/t$ in the previous proposition. Another possible choice is the exponential smoothing factor $\alpha_{t,n} = \kappa_t \lambda^{t-n}$ with $\lambda \in (0,1)$ and $\kappa_t = \frac{1-\lambda}{1-\lambda^t}$. In the next subsection we investigate which choice of $\alpha_{t,n}$ is better.
\subsubsection{Is the average of the allocations the best convex combination?}

A natural question that arises is which factor $\alpha_{t,n}$ to use. Why is $\alpha_{t,n}=1/t$ a good choice?
This question is related to the fluctuations of the underlying process, and to the stability of the exploration process.
We try to give an answer by looking at the variance of the resulting allocation $\bar w^\star(t)$.

\paragraph{The i.i.d. case.} We begin by considering the i.i.d. case, which supports the fact that a simple average is a good approach to minimize variance. 
\begin{lemma}
    Consider an i.i.d. sequence of Gaussian random variables $\{X_n\}_n$ with $0$ mean and variance  $\sigma^2$. Let $\bar X_t(w) = \sum_{n=1}^t w_n  X_n$, with $\{w_n\}\in \Delta(\{1,\dots, t\})=\Delta([t])$.
    Then
    \[
      \min_{w\in \Delta([t])} {\rm Var}(X_t(w)) = \frac{\sigma^2}{t},
    \]
    which is achieved for $w_i=1/t$, for all $i\in [t]$.
\end{lemma}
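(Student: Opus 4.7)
The plan is to reduce the problem to minimizing the squared Euclidean norm over the simplex, which is a classical convex optimization exercise. The Gaussian assumption is immaterial for the variance calculation; only independence and finite variance are needed.

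First, I would compute the variance of $\bar X_t(w)$ directly. By independence of the $\{X_n\}_n$ and linearity of variance for independent sums,
\[
\mathrm{Var}(\bar X_t(w)) = \sum_{n=1}^t w_n^2 \,\mathrm{Var}(X_n) = \sigma^2 \sum_{n=1}^t w_n^2 = \sigma^2 \|w\|_2^2.
\]
Hence the minimization problem becomes $\min_{w \in \Delta([t])} \|w\|_2^2$.

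Second, I would bound $\|w\|_2^2$ from below on the simplex. The cleanest argument is the Cauchy--Schwarz inequality applied to the vectors $w$ and $\mathbf{1}_t$: since $\sum_n w_n = 1$, we have $1 = \left(\sum_n w_n\right)^2 \leq t \sum_n w_n^2$, so $\|w\|_2^2 \geq 1/t$, with equality if and only if $w$ is parallel to $\mathbf{1}_t$, i.e.\ $w_n = 1/t$ for all $n$. This candidate lies in $\Delta([t])$, so the bound is attained. Alternatively, one can confirm optimality via Lagrange multipliers on the equality constraint $\sum_n w_n = 1$, noting that $\|w\|_2^2$ is strictly convex and the non-negativity constraints are inactive at the uniform point.

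Combining the two steps yields $\min_{w \in \Delta([t])} \mathrm{Var}(\bar X_t(w)) = \sigma^2/t$, attained uniquely at $w_n = 1/t$. There is no substantive obstacle here; the only thing worth stating carefully is the uniqueness aspect (which follows from strict convexity of $\|\cdot\|_2^2$), since this is what justifies the qualitative claim in the surrounding discussion that simple averaging is the variance-minimizing convex combination.
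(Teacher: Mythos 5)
Your proof is correct. The first step (computing $\mathrm{Var}(\bar X_t(w)) = \sigma^2\sum_n w_n^2$ from independence) matches the paper exactly, but your second step takes a genuinely different route: you lower-bound $\|w\|_2^2 \geq 1/t$ via Cauchy--Schwarz against $\mathbf{1}_t$, whereas the paper sets up the Lagrangian $\mathcal{L}(w,\lambda)=\sum_n w_n^2\sigma^2 + \lambda(1-\sum_n w_n)$ and solves the first-order conditions to obtain $w_n = \lambda/(2\sigma^2) = 1/t$. Your argument buys a little more rigor for free: it gives an explicit lower bound together with the equality condition, so attainment and uniqueness come out of the same inequality, and it sidesteps the need to justify that the stationary point of the equality-constrained problem is a minimum and that the non-negativity constraints are inactive (the paper leaves both implicit, relying on strict convexity). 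The paper's Lagrangian derivation, on the other hand, generalizes immediately to the inverse-variance-weighting remark that follows the lemma (heteroscedastic $X_n$), which is presumably why it was chosen. Your observation that Gaussianity is immaterial is also correct --- only independence and finite variance are used.
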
 
\begin{proof}
    Note that ${\rm Var}(X_t(w)) = \sum_{n=1}^t w_n^2 \sigma^2$. Introduce the Lagrangian ${\cal L}(w,\lambda) =\sum_{n=1}^t w_n^2 \sigma^2 + \lambda(1-\sum_{n=1}^t w_n)$.
    Checking the first order condition yields $d{\cal L}/d w_n= 2w_n \sigma^2 = \lambda$, hence $w_n = \lambda/ (2\sigma^2 )$. Since $\sum_{n} w_n=1$ we must have $t\lambda/(2\sigma^2)=1 \Rightarrow \lambda = 2\sigma^2/t$. Therefore $w_n=1/t$. We conclude that $\min_w {\rm Var}(X_t(w)) = \sigma^2/t$.
\end{proof}
More in general, the weighting should be inversely proportional to the variance of the underlying random variable, according to the inverse variance weighting  principle \cite{hartung2011statistical}. That is, one can use the same approach as in the previous lemma to easily derive that in case $X_n\sim{\cal N}(0,\sigma_n^2)$, then the optimal weighting is $w_n= \frac{1}{\sigma_n^2} \left( \sum_{k=1}^t \frac{1}{\sigma_k^2} \right)^{-1}$.

\paragraph{A more complex case.} While the i.i.d. case seems  to indicate that taking a simple average is a good approach to minimize the variance, it may not always be the case. In fact, we may expect the random variable $ w^\star(t)$ to have smaller fluctuations as $t$ grows larger. We try to give a more complete picture by also looking at  a more complex case.

Consider a process $X_n=X_{n-1}+\xi_n$, where $\xi_n$ is an i.i.d. zero-mean process with variance $\mathbb{E}[\xi_n^2]\leq C/n^{1+\alpha}$  for some $\alpha,C >0$ and $n\geq 2$. And let $X_1=\xi_1$, with $\mathbb{E}[\xi_1]\leq \sigma^2$.

 We define $S_t^{avg}$  to be be the simple average
\[
 S_t^{avg}= \frac{1}{t}(X_1+X_2+\dots +X_t) 
\]

Similarly,  we define the exponentially smoothed average with $\kappa_t = (1-\lambda)/(1-\lambda^t)$:
\begin{align*}
S_t^{exp}&= \kappa_t \sum_{n=1}^t\lambda^{t-n} X_n.
\end{align*}
Then, we obtain the following result on the variance of the two averages.
\begin{lemma}
    Consider the simple average $S_t^{avg}$ and the exponentially smoothed average $S_t^{exp}$ with factor $\lambda\in (0,1)$. Then ${\rm Var}(S_t^{avg}) \leq {\rm Var}(S_t^{exp})  $.
\end{lemma}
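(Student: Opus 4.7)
My plan is to reduce the claim to a term-by-term inequality on suffix sums of the weights, then exploit concavity. Both $S_t^{\text{avg}}$ and $S_t^{\text{exp}}$ are linear combinations $\sum_{n=1}^t w_n X_n$ with nonnegative weights $w_n$ summing to $1$; specifically $w_n^{\text{avg}}=1/t$ and $w_n^{\text{exp}}=\kappa_t\lambda^{t-n}$. Because $X_n=\sum_{k=1}^n \xi_k$, swapping the order of summation gives $S_t=\sum_{k=1}^t W_k \xi_k$, where $W_k\coloneqq \sum_{n=k}^t w_n$ is the suffix sum of weights. Since the $\xi_k$ are independent and zero-mean, this immediately yields the key identity
\begin{equation*}
\mathrm{Var}(S_t)=\sum_{k=1}^t W_k^2\, \mathrm{Var}(\xi_k).
\end{equation*}
Thus the lemma reduces to showing $W_k^{\text{avg}}\le W_k^{\text{exp}}$ for every $k\in\{1,\dots,t\}$, as the variances of $\xi_k$ are common to both expressions and nonnegative.

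A short calculation gives closed forms for the suffix sums: $W_k^{\text{avg}}=(t-k+1)/t$ and, using the geometric series identity and the definition of $\kappa_t$,
\begin{equation*}
W_k^{\text{exp}}=\kappa_t\sum_{n=k}^t \lambda^{t-n}=\frac{1-\lambda^{t-k+1}}{1-\lambda^t}.
\end{equation*}
Setting $m=t-k+1\in\{1,\dots,t\}$, it remains to prove $m/t \le (1-\lambda^m)/(1-\lambda^t)$. I would do this by defining $f(x)=(1-\lambda^x)/(1-\lambda^t)$ on $[0,t]$, noting $f(0)=0$, $f(t)=1$, and observing that $f''(x)=-(\ln\lambda)^2\lambda^x/(1-\lambda^t)<0$ since $\lambda\in(0,1)$. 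Hence $f$ is concave, so its graph lies above the chord joining $(0,0)$ to $(t,1)$, i.e.\ $f(x)\ge x/t$ for all $x\in[0,t]$. Evaluating at $x=m$ gives exactly the desired inequality, so $W_k^{\text{exp}}\ge W_k^{\text{avg}}$ for every $k$, and squaring (both sides nonnegative) yields the term-by-term bound on the variance decomposition, finishing the proof.

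The easiest sanity check is the endpoints: at $k=1$ both suffix sums equal $1$, which is consistent with the first (and largest-variance) increment $\xi_1$ contributing identically to both estimators; at $k=t$ we compare $1/t$ against $(1-\lambda)/(1-\lambda^t)$, and the latter is indeed larger, reflecting that exponential smoothing puts disproportionately more weight on the most recent $X_n$ (which absorb all prior increments). The main conceptual step is the change of variable from the $X_n$ representation (dependent) to the $\xi_k$ representation (independent); once this is done, there is no real obstacle, since the whole inequality collapses to the elementary concavity of $1-\lambda^x$. Notably, the tail condition $\mathrm{Var}(\xi_n)\le C/n^{1+\alpha}$ plays no role in this comparison—it will presumably be used elsewhere to argue the absolute convergence of $S_t^{\text{avg}}$ or its rate, but for this single lemma the inequality holds for any independent-increments process.
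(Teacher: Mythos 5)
Your proof is correct and follows essentially the same route as the paper: both rewrite the averages in terms of the independent increments $\xi_k$, reduce the claim to the coefficient inequality $(t-k+1)/t \le (1-\lambda^{t-k+1})/(1-\lambda^t)$, and verify it by an elementary calculus argument (you use concavity of $x\mapsto 1-\lambda^x$ and the chord from $(0,0)$ to $(t,1)$; the paper shows $(1-\lambda^x)/x$ is decreasing, which is the same fact). If anything, your exact variance identity $\mathrm{Var}(S_t)=\sum_k W_k^2\,\mathrm{Var}(\xi_k)$ with a term-by-term comparison is slightly cleaner than the paper's presentation, which phrases the comparison through upper bounds on the two variances.
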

\begin{proof}
Let $S_t^{avg}$ be the simple average, and note the following rewriting:
\[
 S_t^{avg}= \frac{1}{t}(X_1+X_2+\dots +X_t) = X_1 + \frac{(t-1)}{t}\xi_2 + \dots + \xi_t = \sum_{n=1}^t \frac{t-n+1}{t} \xi_n.
\]
Also rewrite the exponentially smoothed average as follows
\begin{align*}
S_t^{exp}&= \kappa_t \sum_{n=1}^t\lambda^{t-n} X_n,\\
&=\kappa_t \sum_{n=1}^t\lambda^{t-n}  \sum_{i=1}^n \xi_i,\\
&=\kappa_t \sum_{i=1}^t  \xi_i\sum_{n=i}^t \lambda^{t-n} ,\\
&=\kappa_t \sum_{i=1}^t  \xi_i\sum_{n=0}^{t-i} \lambda^{n} ,\\
&=\kappa_t \sum_{i=1}^t  \xi_i\frac{1-\lambda^{t-i+1}}{1-\lambda} ,\\
&=\frac{1}{1-\lambda^t} \sum_{i=1}^t  (1-\lambda^{t-i+1}) \xi_i.
\end{align*}

 Due to the properties of $X_n$ we have that $\mathbb{E}[\xi_n] = \mathbb{E}[\mathbb{E}[\xi_n|{\cal F}_{n-1}]]=0$ and $\mathbb{E}[\xi_j \xi_n]=0$. Therefore, we can write the variance of the averages as follows:
\begin{align*}
{\rm Var}(S_t^{avg}) &\leq \sigma^2 +C\sum_{n=2}^t \frac{(t-n+1)^2}{t^2 n^{1+\alpha}},\\
{\rm Var}(S_t^{exp}) &\leq \sigma^2+ \frac{C}{(1-\lambda^t)^2}\sum_{n=2}^t  \frac{(1-\lambda^{t-n+1})^2}{n^{1+\alpha}}.
\end{align*}
To show that ${\rm Var}(S_t^{avg}) \leq{\rm Var}(S_t^{exp})$ we can check if the following inequality holds for all $n\in\{2,\dots,t\}$:
\[
\frac{t-n+1}{t} \leq \frac{1-\lambda^{t-n+1}}{1-\lambda^t}.
\]
We can prove that the function $h(x) = \frac{1-\lambda^x}{x}$ is decreasing in $x\in [1,t]$. To that aim, compute the derivative $h'(x) = \frac{-\lambda^x x\ln(\lambda) -1 +\lambda^x}{x^2}$. We are interested in checking if the numerator is negative. Then
\[
-\lambda^x x\ln(\lambda) -1 +\lambda^x \leq 0 \Rightarrow \lambda^x(1- x\ln(\lambda)) \leq 1.
\]
Rewrite as $e^{x\ln(\lambda)}(1- x\ln(\lambda)) \leq 1$ and let $y=-x\ln(\lambda)$. Then
\[
e^{-y}(1+y)\leq 1 \Rightarrow  1+y\leq e^y,
\]
which is always true for $y\geq 0$.

Hence, we have shown that $h(x) \geq h(t)$ for $x\leq t$. Letting $x=t-n+1$, with $n=2,\dots,t$, concludes the proof.
\end{proof}
Unfortunately considering an approach that minimizes the variance is rather difficult. However, numerical experiments seem to suggest that a simple empirical average is an effective approach.

\subsection{Stopping Rule}\label{subsec:app_stopping_rule}
The (fixed-confidence) Best Arm Identification problem in multi-armed bandit models can be seen as a hypothesis testing problem, where we are testing if $\mu \in {\cal H}_k$, where ${\cal H}_k = \{\mu': \mu_{k}' > \max_{j\neq k} \mu_j'\}$. That is, we are testing if the optimal action in $\mu$ is $k$.

Denoting by $n_a(t)$ the number of times the outcome of a certain action $a$ is observed, the generalized likelihood ratio statistics (GLR) for such problem can be written as
\[
\inf_{\lambda \in {\rm Alt}(\hat \mu(t))} \sum_a n_a(t) {\rm KL}(\hat \nu_a(t), \lambda_a),
\]
where $\nu_a(t)$ is the estimated distribution of rewards for arm $a$, which depends solely on $\hat \mu_a(t)$ due to the assumption that $\nu_a$ is a single-parameter exponential distribution, and ${\rm Alt}(\hat \mu(t))=\{\lambda: \argmax_a \lambda_a \neq \argmax_a \}$ is the set of confusing model.

Taking inspiration from such approach, define the following  GLR statistic
\begin{align*}
\Lambda(t) 
\coloneqq \min_{u\neq \hat a_t} \inf_{\lambda: \lambda_u\geq \lambda_{\hat a_t}}\sum_{v\in V} M_v(t) {\rm KL}(\hat \nu_v(t), \lambda_v),
\end{align*}
where $\lambda$ is an alternative reward parameter.
 Then, note that $\Lambda(t)$ can be conveniently rewritten as
\begin{align*}
    \Lambda(t)  &= \min_{u\neq \hat a_t} (M_u(t) +M_{\hat a_t}(t)) I_{\frac{M_{\hat a_t}(t)}{M_u(t) +M_{\hat a_t}(t)}}(\hat \nu_{\hat a_t},\hat \nu_u(t)),\\
    &=\min_{u\neq \hat a_t} M_{\hat a_t}(t) {\rm KL}(\hat \nu_{\hat a_t}(t), \hat \nu_{\hat a_t, u}) + M_{u}(t) {\rm KL}(\hat \nu_{u}(t), \hat \nu_{\hat a_t, u}),
\end{align*}
where $\hat \nu_{\hat a_t, u}$ is a distribution of rewards depending on the parameter $\hat \mu_{\hat a_t,u}(t)$ defined as 
\[
\hat \mu_{a,b}(t) = \frac{M_a(t)}{M_a(t)+M_b(t)} \hat \mu_a(t) + \frac{M_b(t)}{M_a(t)+M_b(t)} \hat \mu_b(t).
\]

One can then show that $tT(N_t/t; \hat \nu(t))^{-1}$ is equivalent to $\Lambda(t)$. First, observe that
\begin{equation*}
     T(N_t/t;\hat \nu(t))^{-1}= \min_{u\neq \hat a_t} (m_u(t) +m_{\hat a_t}(t) )I_{\frac{m_{\hat a_t}(t)}{m_u(t)+m_{\hat a_t}(t)}}(\hat \nu_{\hat a_t}(t),\hat \nu_u(t)),
\end{equation*}
where $m_u(t) = \sum_v \hat G_{v,u}(t) \frac{N_v(t)}{t} = \sum_v \frac{N_{v,u}(t)}{N_v(t)}\frac{N_v(t)}{t}= M_u(t)/t$. Using this latter fact, and noting that $\frac{m_{\hat a_t}(t)}{m_u(t)+m_{\hat a_t}(t)}=\frac{M_{\hat a_t}(t)}{M_u(t) +M_{\hat a_t}(t)}$, we get
\begin{equation*}
     t T(N_t/t;\hat \nu(t))^{-1}= \min_{u\neq \hat a_t} (M_u(t) +M_{\hat a_t}(t) )I_{\frac{M_{\hat a_t}(t)}{M_u(t) +M_{\hat a_t}(t)}}(\hat \nu_{\hat a_t}(t),\hat \nu_u(t)) = \Lambda(t).
\end{equation*}

We can now provide the proof of the stopping rule.

\begin{proof}[Proof of \cref{prop:threshold_error_rate}]
First note that the event $\{\hat a_\tau \neq a^\star(\mu)\}\subset \{\nu \in {\rm Alt}(\hat \nu(\tau))\}$. From the discussion above, using the notation $\Lambda(t)=t T(N_t/t; \hat \nu(t))^{-1}$, we  observe that under  $\{\nu \in {\rm Alt}(\hat \nu(\tau))\}$ then the following inequalities hold
\begin{align*}
    \mathbb{P}_\nu(\tau<\infty, \hat a_\tau \neq a^\star(\mu)) &\leq \mathbb{P}_\nu(\exists t \in \mathbb{N}: \hat a_t \neq a^\star(\mu), t T(N_t/t;\hat \nu(t))^{-1} > \beta(t,\delta)),\\
    &\leq \mathbb{P}_\nu\left(\exists t \in \mathbb{N}:\hat a_t \neq a^\star(\mu),\min_{u\neq \hat a_t} \inf_{\lambda: \lambda_u\geq \lambda_{\hat a_t}}\sum_{v\in V} M_v(t) {\rm KL}(\hat \nu_v(t), \lambda_v) > \beta(t,\delta) \right),\\
    &\leq \mathbb{P}_\nu\left(\exists t \in \mathbb{N},\exists u\neq \hat a_t: \hat a_t \neq a^\star(\mu),\inf_{\lambda: \lambda_u\geq \lambda_{\hat a_t}}\sum_{v\in V} M_v(t) {\rm KL}(\hat \nu_v(t), \lambda_v) > \beta(t,\delta) \right),\\
    &\leq \mathbb{P}_\nu\left(\exists t \in \mathbb{N},\exists u\neq \hat a_t: \sum_{v\in \{u,\hat a_t\}} M_v(t) {\rm KL}(\hat \nu_v(t), \nu_v) > \beta(t,\delta) \right).
\end{align*}
Now, from \citep[Theorem 7]{kaufmann2021mixture}, we know that
\[
\mathbb{P}_\nu\left(\exists t \in \mathbb{N}, \exists u\neq \hat a_t: \sum_{v\in \{u,\hat a_t\}} M_v(t) {\rm KL}(\hat \nu_v(t), \nu_v) > 2  {\cal C}_{\rm exp}\left(\frac{\ln\left(\frac{K-1}{\delta}\right)}{2}\right) + 3\sum_{v\in \{\hat a_t, u\}} \ln(1+\ln(M_v(t)))\right) \leq \delta.
\]
where we applied \citep[Theorem 7]{kaufmann2021mixture} over $K-1$ subsets $\{\underbrace{(u,\hat a_t)}_{{\cal S}_u}\}_{u\neq \hat a_t}$ of size $2$ each, and took a union bound. Finally, using Jensen's inequality we also have that 
\begin{align*}
 \ln(1+\ln(M_v(t)))+ \ln(1+\ln(M_u(t))) &\leq 2\ln\left(\frac{1+\ln(M_v(t))}{2} + \frac{1+\ln(M_u(t))}{2}\right),\\
 &= 2\ln\left(1 + \frac{\ln(M_v(t))+\ln(M_u(t))}{2}\right),\\
  &\leq 2\ln\left(1 + \ln\left(\frac{M_v(t)+M_u(t)}{2}\right)\right).
\end{align*}
Note that $M_v(t)+M_u(t)$ cannot exceed $2t$ (just consider a full feedback graph where $G_{u,v}=1$ so that $M_v(t)=t$ for every $v$). Hence, this implies that the GLR statistics is $\delta$-PC with the threshold
\[\beta(t,\delta)=2  {\cal C}_{\rm exp}\left(\frac{\ln\left(\frac{K-1}{\delta}\right)}{2}\right) + 6\ln(1+\ln(t)).\]
\end{proof}

\paragraph{Definition of ${\cal C}_{\rm exp}(x)$.}
Last, but not least,  we briefly explain the definition of ${\cal C}_{\rm exp}(x)$. We define ${\cal C}_{\rm exp}(x)$ as \citep[Theorem 7]{kaufmann2021mixture} ${\cal C}_{\rm exp}(x)\coloneqq 2\tilde{h}_{3/2}\left(\frac{h^{-1}(1+x)+\ln(2\zeta(2))}{2}\right)$, where: $\zeta(s) = \sum_{n\geq 1} n^{-s}$; $h(u)=u-\ln(u)$ for $u\geq 1$ ; lastly, for for any $z\in [1,e]$ and  $x\geq 0$:
\[
\tilde{h}_z(x) = \begin{cases}
h^{-1}(x)e^{1/h^{-1}(x)}& \hbox{ if } x \geq h(1/\ln z),\\
z(x-\ln\ln z)& \hbox{otherwise.}
\end{cases}
\]
\subsection{Sample Complexity Analysis}\label{subsec:algorithm_as_sample_complexity_app}

The following sample complexity analysis follows the analysis of \cite{garivier2016optimal} while adopting necessary changes for our problem setup. We first prove part (1) and (2) of theorem \ref{thm:sample_complexity}, and prove part (3) of theorem \ref{thm:sample_complexity} separately. In the end of this section, we provide the proof for corollary \ref{thm:sample_complexity_heuristic}.
\begin{proof} [Proof of part (1) and (2) of theorem \ref{thm:sample_complexity}:]
    Let \(\mathcal{E}\) be the event:
    \begin{align*}
        \mathcal{E} = \left\{\inf_{w \in C^{\star}(\nu)}\left \|\frac{N(t)}{t} - \omega\right\|_{\infty} \xrightarrow{t \to \infty} 0, \hat{\nu}(t) \xrightarrow{t \to \infty} \nu\right\}.
    \end{align*}
    By Proposition \ref{prop:tracking} and the law of large number, we have \(\mathcal{E}\) holds with probability 1. On \(\mathcal{E}\), with the continuity property of function \(T(\omega, \nu)^{-1}\) at \((\omega^{\star}(\nu), \nu)\) and proposition \ref{prop:tracking}, for every \(\omega^{\star}(\nu) \in C^{\star}(\nu)\), we have for all \(\epsilon > 0\) there exists \(t_0 \in \mathbb{N}\) such that for all \(t \geq t_0\):\begin{align*}
        T(N(t)/t; \hat{\nu}(t))^{-1} \geq \frac{1}{1+\epsilon} T^{\star} (\nu)^{-1}.
    \end{align*}
    Therefore, for \(t \geq t_0\):
    \begin{align*}
        L(t) = t T(N(t)/t; \hat{\nu}(t))^{-1} \geq \frac{t}{1+\epsilon} T^{\star}(\nu)^{-1}.
    \end{align*}
    Hence,
    \begin{align*}
        \tau &= \inf \left\{t \in \mathbb{N}: L(t) \geq \beta(t, \delta)\right\}, \\
        &\leq t_0 \vee \inf \left\{t \in \mathbb{N}: \frac{t}{1+\epsilon} T^{\star}(\nu)^{-1} \geq \beta(t, \delta)\right\}.
    \end{align*}
    Recall that \(\beta(t,\delta)\coloneqq2  {\cal C}_{\rm exp}\left(\frac{\ln\left(\frac{K-1}{\delta}\right)}{2}\right) + 6\ln(1+\ln(t))\). Note that there exists a universal constant \(B\) such that \(\beta(t, \delta) \leq \ln (Bt/\delta)\). Hence,
    \begin{align*}
        \tau \leq t_0 \vee \inf \left\{t \in \mathbb{N}: \frac{t}{1+\epsilon} T^{\star}(\nu)^{-1} \geq \ln(Bt/\delta)\right\}.
    \end{align*}
    Applying Lemma 18 of \cite{garivier2016optimal} by letting \(\alpha = 1\):
    \begin{align*}
        \tau \leq t_0 \vee (1 + \epsilon) T^{\star}(\nu) \left[\ln \left(\frac{Be(1+\epsilon)T^{\star}(\nu)}{\delta} \right) + \ln \ln \left(\frac{B(1+\epsilon)T^{\star}(\nu)}{\delta}  \right) \right].
    \end{align*}
    Thus, \(\tau\) is finite with probability 1. And
    \begin{align*}
        \limsup_{\delta \to 0} \frac{\tau}{\ln(1/\delta)} \leq (1 + \epsilon) T^*(\nu).
    \end{align*}
    We conclude the proof of part (2) by letting \(\epsilon \to 0\).
\end{proof}
\begin{proof} [Proof of part (3) of theorem \ref{thm:sample_complexity}:]

    Let \(T \in \mathbb{N}\), for \(\epsilon > 0\), define \(\mathcal{E}_T \coloneqq \bigcap_{t=T^{\frac{1}{4}}}^{T} (\hat{\nu}(t) \in \mathcal{I}_{\epsilon})\), where \(\mathcal{I}_{\epsilon} \coloneqq \{\nu': \|\nu' - \nu\|_{\infty} \leq \epsilon\}\) and \(\|\nu' - \nu\|_{\infty} \coloneqq \max\{\|G' - G\|_{\infty}, \|\mu' - \mu\|_{\infty}\}\). Following the same argument as Lemma 19 of \cite{garivier2016optimal}, one can show that there exist two constant \(B\) and \(C\) (that depend on \(\nu\) and \(\epsilon\)) such that \(\mathbb{P}_{\nu}(\mathcal{E}_T^c) \leq B T \exp(-C T^{\frac{1}{8}})\).
    Denote
    \begin{align*}
        C_{\epsilon}^{\star}(\nu) = \inf_{\substack{\omega' : \|\omega' - {\rm Proj}_{C^{\star}(\nu)}(\omega')\|_{\infty} \leq 5(K-1)\epsilon\\\nu' : \|\nu' - \nu\|_{\infty} \leq \epsilon}} T(\omega', \nu')^{-1}.
    \end{align*}
    By proposition \ref{prop::tracking_general}, for any \(\epsilon\), there exists \(T(\epsilon)\) such that for any \(T \geq T(\epsilon)\) and \(t \geq \sqrt{T}\), \(\|N(t)/t - {\rm Proj}_{C^{\star}(\nu)}(N(t)/t)\|_\infty \leq 5(K-1)\epsilon\). With this fact, on \(\mathcal{E}_T\), for \(T \geq T(\epsilon)\) and \(t \geq \sqrt{T}\), one has:
    \begin{align*}
        L(t) = t T(N(t)/t; \hat{\nu}_t)^{-1} \geq tC_{\epsilon}^{\star}(\nu).
    \end{align*}
    Therefore, let \(T \geq T(\epsilon)\), on \(\mathcal{E}_T\),
    \begin{align*}
        \min \{\tau_{\delta}, T\} &\leq \sqrt{T} + \sum_{t=\sqrt{T}}^T \mathbf{1}_{(\tau_{\delta} > t)}, \\
        &= \sqrt{T} + \sum_{t=\sqrt{T}}^T \mathbf{1}_{(L(t) \leq \beta(t, \delta))}, \\
        &\leq \sqrt{T} + \sum_{t=\sqrt{T}}^T \mathbf{1}_{(tC_{\epsilon}^{\star}(\nu) \leq \beta(t, \delta))}, \\
        &\leq \sqrt{T} + \sum_{t=\sqrt{T}}^T \mathbf{1}_{(tC_{\epsilon}^{\star}(\nu) \leq \beta(T, \delta))}, \\
        &\leq \sqrt{T} + \frac{\beta (T, \delta)}{C_{\epsilon}^{\star}(\nu)}.
    \end{align*}
    Denote \(T_0 = \inf\left\{T \in \mathbb{N}: \sqrt{T} + \frac{\beta (T, \delta)}{C_{\epsilon}^{\star}(\nu)} \leq T\right\}\). Thus, one has for \(T \geq \max\{T_0, T(\epsilon)\}\), \(\mathcal{E}_T \subseteq (\tau_{\delta} \leq T)\). Hence,
    \begin{align*}
        \mathbb{E[\tau_{\delta}]} &\leq \max\{T_0, T(\epsilon)\} + \sum_{T = \max\{T_0, T_{\epsilon}\}}^{\infty} \mathbb{P}(\tau_{\delta} > T), \\
        &\leq T_0 + T(\epsilon) + \sum_{T = \max\{T_0, T_{\epsilon}\}}^{\infty} B T \exp(-CT^{\frac{1}{8}}).
    \end{align*}
    We then upper bound \(T_0\). By introducing a constant \(C(\eta) = \inf \left\{T \in \mathbb{N}: T - \sqrt{T} \geq \frac{T}{1 + \eta}\right\}\), one has
    \begin{align*}
        T_0 &\leq C(\eta) + \inf \left\{T \in \mathbb{N}: \frac{\beta (T, \delta)}{C_{\epsilon}^{\star}(\nu)} \leq \frac{T}{1 + \eta}\right\}, \\
        &\leq C(\eta) + \inf \left\{T \in \mathbb{N}: \frac{C^{\star}_{\epsilon}(\nu) T}{1 + \eta} \geq \ln(BT/\delta)\right\}.
    \end{align*}
    Applying Lemma 18 of \cite{garivier2016optimal} again:
    \begin{align*}
        T_0(\delta) \leq C(\eta) + (1 + \eta) C_{\epsilon}^{\star}(\nu)^{-1} \left[\ln \left(\frac{Be(1+\eta)}{C_{\epsilon}^{\star}(\nu) \delta} \right) + \ln \ln \left(\frac{B(1+\eta)}{C_{\epsilon}^{\star}(\nu) \delta}  \right) \right].
    \end{align*}
    Therefore,
    \begin{align*}
        \limsup_{\delta \to 0} \frac{\mathbb{E}[\tau_{\delta}]}{\ln(1/\delta)} \leq \frac{(1+\eta)}{C^{\star}_{\epsilon}(\nu)}.
    \end{align*}
    From the continuity property of function \(T(\omega, \nu)^{-1}\) at \((\omega^{\star}(\nu), \nu)\) for each \(\omega^{\star}(\nu)\) in \(C^{\star}(\nu)\), one has
    \begin{align*}
        \lim_{\epsilon \to 0} C^{\star}_{\epsilon} (\nu) = T^{\star} (\nu)^{-1},
    \end{align*}
    Letting \(\eta\) go to 0:
    \begin{align*}
        \limsup_{\delta \to 0} \frac{\mathbb{E}[\tau_{\delta}]}{\ln(1/\delta)} \leq T^{\star}(\nu).
    \end{align*}
\end{proof}

\begin{proof} [Proof of Corollary \ref{thm:sample_complexity_heuristic}]
    The proof relies on the following proposition, which is a direct application of Proposition \ref{prop::tracking_general}.

    \begin{proposition} \label{prop::tracking_heur}
    Let $S_t= \{u\in V: N_u(t) < \sqrt{t}-K/2\}$. The D-tracking rule, defined as
    \begin{equation}
        V_t \in \begin{cases}
            \argmin_{u\in S_t} N_u(t) & S_t\neq \emptyset\\
            \argmin_{u\in V} N_u(t) - t\sum_{n=1}^t \alpha_{t,n}\omega_{\rm heur}(n)& \hbox{otherwise}
        \end{cases},
    \end{equation}
    where, for every $t\geq 1$, the sequence $\alpha_t=(\alpha_{t,n})_{n=1}^t $ satisfies: (1) $\alpha_{t,n}\in [0,1]$; (2) for every fixed $n\in\{1,\dots,t\}$ we have  $\alpha_{t,n} =o(1)$ in $t$ ; (3) for all $t$, $\sum_{n=1}^t \alpha_{t,n}=1$.

    Such tracking rule
    ensures that for all $\epsilon>0$ there exists $t(\epsilon)$ such that for all $t \geq t(\epsilon)$ we have
    \[
    \|N(t)/t - \omega_{\rm heur}\|_\infty \leq 5(K-1)\epsilon,
    \]
    and 
    $
   \lim_{t\to\infty}\| N(t)/t -\omega_{\rm heur}\|_\infty 
 \to 0 
    $ almost surely.
\end{proposition}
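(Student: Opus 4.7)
The plan is to follow the proof of \cref{prop::tracking_general} verbatim while exploiting the fact that the heuristic target $\omega_{\rm heur}$ is a \emph{deterministic singleton} rather than a convex set. In effect, one can apply the general proposition with the role of $C^\star(\nu)$ played by $\{\omega_{\rm heur}\}$ and with the tracked sequence $(\omega^\star(n))_n$ replaced by $(\omega_{\rm heur}(n))_n$. Concretely, I would (i) establish $\omega_{\rm heur}(t) \to \omega_{\rm heur}$ almost surely, (ii) deduce that the weighted average $\bar\omega_{\rm heur}(t) \coloneqq \sum_{n=1}^t \alpha_{t,n}\omega_{\rm heur}(n)$ converges to $\omega_{\rm heur}$, and (iii) reuse the $E_u(t) = N_u(t) - t\bar\omega_{\rm heur, u}(t)$ book-keeping from the proof of \cref{prop::tracking_general} to conclude.

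For step (i), the forced‐exploration branch of the D-tracking rule enforces $N_u(t) \geq \sqrt{t}-K/2-1$ for every $u \in V$, so $N_u(t) \to \infty$. Combined with observability (\cref{assump:nontrivial_problem}) this yields $M_u(t) \to \infty$, and the strong law of large numbers gives $\hat\mu(t)\to\mu$ and $\hat G(t)\to G$ almost surely. Because $\nu$ has a unique best vertex, eventually $\hat a_t = a^\star$ and $\hat\Delta_{\min}(t) \to \Delta_{\min} > 0$, so $\hat\Delta^{-2}(t)$ is well defined for large $t$ and converges to $\Delta^{-2}$. Continuity of $(G,\mu)\mapsto G\Delta^{-2}/\|G\Delta^{-2}\|_1$ at $(G,\mu)$, which uses $\|G\Delta^{-2}\|_1>0$ (a consequence of observability), then yields $\omega_{\rm heur}(t)\to\omega_{\rm heur}$ almost surely.

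For step (ii), fix $\epsilon>0$ and pick $n_0$ with $\|\omega_{\rm heur}(n)-\omega_{\rm heur}\|_\infty\leq\epsilon$ for $n\geq n_0$. Since $\sum_{n=1}^t\alpha_{t,n}=1$ and $\|\omega_{\rm heur}(n)-\omega_{\rm heur}\|_\infty\leq 2$ uniformly, splitting the sum at $n_0$ gives
\begin{equation*}
\|\bar\omega_{\rm heur}(t)-\omega_{\rm heur}\|_\infty \leq 2n_0\max_{n\leq n_0}\alpha_{t,n} + \epsilon,
\end{equation*}
and the first term vanishes as $t\to\infty$ by property~(2) of the weights. This is precisely the analogue of the intermediate inequality $\|\bar\omega^\star(t)-\bar v(t)\|_\infty\leq t_0(\epsilon)\bar\alpha_{t,t_0(\epsilon)}+\epsilon$ in the proof of \cref{prop::tracking_general}, with $\bar v(t)$ reduced to the constant $\omega_{\rm heur}$ because projection onto the singleton is trivial.

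For step (iii), the remainder of the proof of \cref{prop::tracking_general}, namely the analysis of $E_u(t) = N_u(t)-t\bar\omega_{\rm heur,u}(t)$ via the events ${\cal E}_1(t)$ and ${\cal E}_2(t)$, uses only the D-tracking update rule and boundedness of the tracked allocations; it therefore transfers verbatim to yield $\|N(t)/t-\bar\omega_{\rm heur}(t)\|_\infty\leq 5(K-1)\epsilon$ for $t$ sufficiently large. Combining with step (ii) via the triangle inequality gives $\|N(t)/t-\omega_{\rm heur}\|_\infty \to 0$ almost surely. The only real subtlety—essentially the lone obstacle—is verifying that $\omega_{\rm heur}(t)$ is well defined and continuous along the trajectory, i.e.\ that $\hat\Delta_{\min}(t)$ stays bounded away from zero eventually; this is handled by the forced exploration of D-tracking and the unique-optimum assumption, after which the proof is a direct specialization of the general tracking result to a singleton target.
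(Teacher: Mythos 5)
Your proposal is correct and matches the paper's approach exactly: the paper's own proof is a one-line remark that the argument of \cref{prop::tracking_general} carries over verbatim once $C^\star(\nu)$ is replaced by the singleton $\{\omega_{\rm heur}\}$, which is precisely the specialization you carry out. The additional details you supply (continuity of $(G,\mu)\mapsto G\Delta^{-2}/\|G\Delta^{-2}\|_1$ and the eventual stabilization of $\hat a_t$ so that $\hat\Delta_{\min}(t)$ is bounded away from zero) are the right ones to make the "same analysis" claim rigorous.
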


\begin{proof}
    Proposition \ref{prop::tracking_heur} can be proved using the same analysis as in Proposition \ref{prop::tracking_general}, except that one needs to replace \(C^{\star}(\nu)\) with \(\left\{\omega_{\rm heur}\right\}\). We thus skip the full proof.
\end{proof}

Corollary \ref{thm:sample_complexity_heuristic} can then be proved using the same analysis as in Theorem \ref{thm:sample_complexity} combined with Proposition \ref{prop::tracking_heur}.
\end{proof}

We also provide an almost-surely lower bound for this heuristic algorithm, as established in the following Proposition \ref{prop::lower_bound_heur}.
\begin{proposition} \label{prop::lower_bound_heur}
    \algoname{} with $\omega^\star(t) = \omega_{\rm heur}(t)$ satisfies that $\mathbb{P}_\nu\left(\liminf_{\delta \to 0} \frac{\tau}{\ln(1/\delta)} \geq T^{\star}(\nu)\right)=1$.
\end{proposition}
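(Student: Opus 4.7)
The plan is to exploit the stopping condition $L(\tau) \geq \beta(\tau,\delta)$ together with the convergence of the sampling rule to the target allocation $\omega_{\rm heur}$. Because the algorithm tracks $\omega_{\rm heur}(t)$, the statistic $L(t)$ grows essentially at rate $T(\omega_{\rm heur};\nu)^{-1}$, while $\beta(\tau,\delta) = \ln(1/\delta)(1+o(1))$, so the stopping time must asymptotically satisfy $\tau \gtrsim T(\omega_{\rm heur};\nu)\ln(1/\delta)$. The proposition then follows since $T(\omega_{\rm heur};\nu) \geq T^\star(\nu)$ by definition of the infimum.

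Concretely, I would proceed in four steps. First, I would work on the almost-sure event $\mathcal{E}$ on which $\hat\nu(t)\to\nu$ (by the strong law of large numbers, applicable because D-tracking's forced-exploration phase forces $N_u(t)\to\infty$ for every $u\in V$) and $N(t)/t\to\omega_{\rm heur}$ (by \cref{prop::tracking_heur}, using continuity of $\nu'\mapsto\omega_{\rm heur}(\nu')$ at $\nu$ to pass from $\omega_{\rm heur}(t)\to\omega_{\rm heur}$ to the tracking statement). Second, I would observe that $\tau\to\infty$ a.s.\ as $\delta\to 0$: for any fixed $T$, $L(T)$ does not depend on $\delta$ whereas $\beta(T,\delta)\to\infty$, so the stopping criterion cannot fire by time $T$ for sufficiently small $\delta$. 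Third, for a fixed $\epsilon>0$ I would invoke joint continuity of $(\omega,\nu')\mapsto T(\omega;\nu')^{-1}$ at $(\omega_{\rm heur},\nu)$ to produce a (random) threshold $t_0(\epsilon)$ beyond which $L(t) \leq (1+\epsilon)\,t\,T(\omega_{\rm heur};\nu)^{-1}$. Fourth, for $\delta$ small enough that $\tau \geq t_0(\epsilon)$, the stopping rule yields
\begin{equation*}
(1+\epsilon)\,\tau\,T(\omega_{\rm heur};\nu)^{-1} \;\geq\; L(\tau) \;\geq\; \beta(\tau,\delta) \;\geq\; \ln((K-1)/\delta),
\end{equation*}
so $\tau/\ln(1/\delta) \geq T(\omega_{\rm heur};\nu)/(1+\epsilon)$. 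Sending $\delta\to 0$ and then $\epsilon\to 0$, and using $T(\omega_{\rm heur};\nu)\geq T^\star(\nu)$, delivers the claim.

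The main obstacle is the continuity argument in the third step, since $T(\omega;\nu')^{-1}$ degenerates whenever $G^\top \omega$ has a zero entry. I would handle this by checking that $m = G^\top \omega_{\rm heur}$ is strictly positive on all vertices under \cref{assump:nontrivial_problem}: for any $u\in V$, observability supplies some $v_0\in N_{in}(u)$ with $G_{v_0,u}>0$, whence
\begin{equation*}
m_u \;=\; \frac{(G^\top G\Delta^{-2})_u}{\|G\Delta^{-2}\|_1} \;\geq\; \frac{G_{v_0,u}^2\,\Delta_u^{-2}}{\|G\Delta^{-2}\|_1} \;>\; 0,
\end{equation*}
placing $(\omega_{\rm heur},\nu)$ in the interior of the domain of $T^{-1}$ and securing the required continuity along every path in $\mathcal{E}$.
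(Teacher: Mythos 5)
Your proposal is correct and follows essentially the same route as the paper's proof: on the almost-sure event where the tracking rule and estimates converge, continuity gives $L(t)\leq \frac{t}{1-\epsilon}T(\omega_{\rm heur};\nu)^{-1}\leq \frac{t}{1-\epsilon}T^\star(\nu)^{-1}$, and combining this with $L(\tau)\geq\beta(\tau,\delta)\geq\ln(1/\delta)$ and $\tau\to\infty$ yields the claim after sending $\epsilon\to 0$. Your additional verification that $G^\top\omega_{\rm heur}$ is entrywise positive under \cref{assump:nontrivial_problem} is a welcome extra justification of the continuity step that the paper leaves implicit, but it does not change the argument.
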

\begin{proof}
     By the continuity property of function \(T(\omega, \nu)^{-1}\) at \((\omega_{\rm heur}, \nu)\) and Proposition \ref{prop::tracking_heur}, for any \(\epsilon > 0\), with probability 1, there exists \(t_1\) such that for any \(t \geq t_1\),
    \begin{align*}
        L(t) = t T(N(t)/t; \hat{\nu}(t))^{-1} \leq \frac{t}{1-\epsilon} T(\omega_{\rm heur},\nu)^{-1} \leq \frac{t}{1-\epsilon}  T^{\star}(\nu)^{-1}.
    \end{align*}
    By the definition of the stopping time,
    \begin{align*}
        \tau &= \inf \left\{t \in \mathbb{N}: L(t) \geq \beta(t, \delta)\right\}, \\
        &\geq \inf \left\{t \in \mathbb{N}: L(t) \geq \ln (1/\delta)\right\}.
    \end{align*}
    When \(\delta \to 0\), \(\inf \left\{t \in \mathbb{N}: L(t) \geq \ln (1/\delta)\right\} > t_1\). Therefore,
    \begin{align*}
        \liminf_{\delta \to 0} \frac{\tau}{\ln (1/\delta)} &\geq \liminf_{\delta \to 0} \frac{\inf \left\{t \in \mathbb{N}: t T^{\star}(\nu)^{-1} \geq (1-\epsilon)\ln (1/\delta)\right\}}{\ln (1/\delta)}, \\
        &\geq T^{\star}(\nu) (1-\epsilon).
    \end{align*}

    Letting \(\epsilon\) go to 0 concludes the proof.
\end{proof}
\end{document}


%

%

\onecolumn
\aistatstitle{Instructions for Paper Submissions to AISTATS 2025: \\
Supplementary Materials}

\section{FORMATTING INSTRUCTIONS}

To prepare a supplementary pdf file, we ask the authors to use \texttt{aistats2025.sty} as a style file and to follow the same formatting instructions as in the main paper.
The only difference is that the supplementary material must be in a \emph{single-column} format.
You can use \texttt{supplement.tex} in our starter pack as a starting point, or append the supplementary content to the main paper and split the final PDF into two separate files.

Note that reviewers are under no obligation to examine your supplementary material.

\section{MISSING PROOFS}

The supplementary materials may contain detailed proofs of the results that are missing in the main paper.

\subsection{Proof of Lemma 3}

\textit{In this section, we present the detailed proof of Lemma 3 and then [ ... ]}

\section{ADDITIONAL EXPERIMENTS}

If you have additional experimental results, you may include them in the supplementary materials.

\subsection{The Effect of Regularization Parameter}

\textit{Our algorithm depends on the regularization parameter $\lambda$. Figure 1 below illustrates the effect of this parameter on the performance of our algorithm. As we can see, [ ... ]}

\vfill